\pdfoutput=1
\documentclass{article}
\usepackage{iclr2027_conference,times}
\usepackage{amsmath,amssymb,amsthm}
\usepackage{booktabs}
\usepackage{graphicx}
\usepackage{url}
\usepackage{algorithm}
\usepackage{algpseudocode}
\usepackage{xcolor}
\usepackage{etoolbox}
\usepackage{hyperref}
\usepackage{xspace}
\usepackage{multirow}
\usepackage{multicol}
\usepackage{subcaption}
\newtheorem{theorem}{Theorem}
\newtheorem{proposition}[theorem]{Proposition}
\newtheorem{corollary}[theorem]{Corollary}
\newtheorem{lemma}[theorem]{Lemma}

\newcommand{\restatedname}{}
\newtheorem*{restatedthm}{\restatedname}
\newenvironment{restated}[2]
  {\renewcommand{\restatedname}{Theorem~\ref*{#1}}\begin{restatedthm}[#2, restated]}
  {\end{restatedthm}}

\newcommand{\passk}{\mathrm{pass@12}}
\newcommand{\avgk}{\mathrm{avg@12}}
\algrenewcommand\alglinenumber[1]{\scriptsize #1.}
\newcommand{\method}{ARG\xspace}
\setcitestyle{citesep={,},yysep={,}}
\makeatletter
\patchcmd{\@makecaption}{#1: #2}{#1. #2}{}{}
\patchcmd{\@makecaption}{#1: #2}{#1. #2}{}{}
\makeatother

\title{Balancing Reference Guidance and Free Generation in Trajectory Rollouts for Reasoning RL}
\author{
Hanyu Wang$^{1,2}$\thanks{Work done during an internship at Honda Research Institute USA.}\hphantom{$^{*}$}, Nakul Agarwal$^{2}$, Hossein Nourkhiz Mahjoub$^{2}$, Ehsan Moradi Pari$^{2}$, \\
\textbf{Makoto Fukushima$^{3}$, Jinghui Chen$^{1}$, Vaishnav Tadiparthi$^{2}$} \\
$^{1}$The Pennsylvania State University \quad $^{2}$Honda Research Institute USA \\
$^{3}$Honda Research Institute Japan
}

\iclrfinalcopy

\begin{document}
\maketitle
\lhead{Preprint}

\begin{abstract}
A verified reference solution provides a correct trajectory for training a reasoning model. Alternatively, a prefix of the reference can guide the model in generating a trajectory of its own. How much reference guidance should we provide? We study this question through prefix continuation, where the model continues from a reference prefix and keeps the resulting trajectory if it passes verification, falling back to the reference otherwise. Since both procedures produce correct trajectories, we compare their distributions with the ideal distribution, the model's own distribution conditioned on successful verification. For one continuation, we derive the KL divergence in closed form, which, up to a bounded term, decreases with the product of the probability of generating a different correct trajectory and the reference surprisal, the negative log probability of the reference suffix given the prefix. Since a longer prefix tends to raise the former but lowers the latter, continuation success alone does not determine the preferred amount of guidance. From this analysis, we learn a prefix selector shared across training questions from continuation outcomes, without estimating success probabilities or additional generation. The resulting \underline{\textbf{A}}daptive \underline{\textbf{R}}eference \underline{\textbf{G}}uidance (\textbf{ARG}) constructs correct trajectories within a fixed generation budget, and we apply it to all-failure groups in Group Relative Policy Optimization (GRPO). Experiments on Qwen3-4B and Qwen3-8B across five mathematical reasoning benchmarks show that \method achieves the highest aggregate pass@12 among the evaluated methods with competitive average sampled accuracy.
\end{abstract}

\section{Introduction}
\label{sec-introduction}

Reinforcement learning with verifiable rewards (RLVR) \citep{lambert2024tulu} trains Large Language Models (LLMs) using feedback on their own generated responses \citep{guo2025deepseek, team2025kimi, jaech2024openai}. On difficult questions, however, the model may rarely generate a response that reaches the correct answer, leaving little feedback to learn from. When such a question comes with a reference solution written by humans or a stronger model, a straightforward approach is to train on it directly \citep{yan2026learning, fu2026srft, ma2026learning}. However, the reference may differ substantially from the trajectories the model would produce itself, which can make it a poor learning target \citep{setlur2026reuse, qu2026pope}. Since the model learns more easily from familiar responses \citep{ren2024learn}, another line of work uses the reference as guidance so that the model generates correct trajectories by itself. Some methods transform the reference into hints \citep{zhang2026scaf,liao2026self}, which requires additional LLM generation and may introduce bias or errors. Other methods, in contrast, let the model generate a \emph{continuation} from a \emph{reference prefix}, which is an initial segment of the reference. This approach uses the reference without modification and controls how much guidance the model receives through how much of the reference is revealed in the prefix.

However, when generating a trajectory following a prefix, deciding how much to reveal involves a trade-off. Specifically, using a longer reference prefix, which provides more guidance, makes the continuation more likely to be correct. Yet the resulting trajectory contains more content the model would not generate itself, making learning harder. Using a shorter prefix keeps the trajectory closer to the model's own generation, but increases the chance of failure. At the two extremes, revealing the entire reference reduces to training on the reference directly, while revealing nothing reduces to standard RLVR, which provides no learning signal on questions the model cannot yet solve. Balancing the two is therefore essential. The prefix should reveal just enough for the model to succeed, and leave the rest for the model to generate on its own.

To analyze this trade-off, we first ask which trajectories are ideal to learn from. Ideally, these trajectories should be correct and generated entirely by the model. We therefore define the distribution of the model's own correct samples as the \emph{ideal distribution}. We show that the Kullback-Leibler (KL) divergence from the distribution of generated trajectories used for training to this ideal distribution decreases as $d_tS_t$ increases, up to a small bounded term. Here $d_t$ is the probability that the model follows the reference's prefix to generate a successful trajectory, and $S_t$ is the negative log probability of the remaining reference given the prefix. Intuitively, $d_t$ reflects how much the prefix helps the model succeed, while $S_t$ indicates how unfamiliar the reference suffix is. Moreover, $d_tS_t$ captures both sides of the trade-off. A longer prefix tends to increase $d_t$, since more guidance makes success more likely, but it decreases $S_t$, since less of the reference is left.

As a result, a good prefix should maximize $d_tS_t$. However, $d_t$ is unknown, and estimating it for every prefix of every question is impractical. We avoid this estimation by learning a \emph{prefix selector}, a distribution over a few guidance levels shared across questions. Since the selector only needs to work well on average, the outcome of each continuation already provides the feedback it needs, without any additional generation.

These analyses lead to \emph{Adaptive Reference Guidance} (\textbf{\method}). Given a question and its reference, \method samples guidance levels from the prefix selector, generates continuations from the resulting prefixes within a fixed budget, and falls back to the reference when no continuation succeeds. The same continuations also update the prefix selector, which therefore requires no additional generation. To examine how the accepted trajectories affect policy learning, we apply \method to Group Relative Policy Optimization (GRPO) \citep{shao2024deepseekmath} for groups in which every sampled response fails, resulting in no learning signals.

Empirically, on Qwen3-4B and Qwen3-8B \citep{yang2025qwen3} across five mathematical reasoning benchmarks, \method achieves the highest aggregate $\passk$ of $67.33\%$ and $70.00\%$, respectively. Moreover, its average sampled accuracy remains competitive, with a small gap to the strongest baseline. Our contributions are threefold.
\begin{itemize}
\item We analyze reference-prefix continuation through the KL divergence to the model's own correct trajectories, revealing a trade-off between reference guidance and surprisal.
\item We derive an unbiased update of the prefix selector from the same continuations, adapting the guidance level across questions and during training without additional generation.
\item We develop \method from this analysis and apply it to all-failure groups in GRPO, achieving the highest aggregate $\passk$ across five math benchmarks on two model scales.
\end{itemize}

\section{Preliminaries}
\label{sec-preliminaries}

\noindent\textbf{Reasoning with verifiable rewards.}
Let $\pi_\theta(y\mid x)$ be a language model that generates a trajectory $y$ for a question $x$ drawn from the training distribution $\mathcal D$. The trajectory includes the reasoning, the final answer, and the termination token. A verifier assigns a binary reward $R_{\mathrm{acc}}(x,y)\in\{0,1\}$, and we call $y$ \emph{correct} when $R_{\mathrm{acc}}(x,y)=1$. Let $C_x$ be the set of correct trajectories and $p_\theta(x)=\pi_\theta(C_x\mid x)$ be the success probability. During training, $\theta_{\mathrm{old}}$ denotes the parameters of the current model, which generates the samples at each training step. We use natural logarithms throughout.

\noindent\textbf{Reference-prefix continuation with fallback.}
Suppose that the question comes with a correct reference $r\in C_x$. For a prefix length $0\le t<|r|$, we write $r=h_t\oplus a_t$, where the reference prefix $h_t$ contains the first $t$ tokens of $r$, the reference suffix $a_t$ contains the remaining tokens, and $\oplus$ denotes token concatenation. The model samples a continuation $Z\sim\pi_\theta(\cdot\mid x,h_t)$, which forms the \emph{candidate trajectory} $\widetilde Y_t=h_t\oplus Z$. At $t=0$, the prefix is empty and the model generates under the original question alone. We keep the candidate trajectory if it is correct and fall back to the reference otherwise, which gives the accepted trajectory
\begin{equation}
Y=\begin{cases}
\widetilde Y_t,&\widetilde Y_t\in C_x,\\
r,&\widetilde Y_t\notin C_x,
\end{cases}
\qquad Y\sim Q_t.
\label{eq-acquisition}
\end{equation}
Thus, $Q_t$ assigns all probability to correct trajectories for every prefix length. In contrast, training on the reference corresponds to the distribution $\delta_r$, which assigns all probability to $r$.

\noindent\textbf{Ideal distribution.}
As discussed in Section~\ref{sec-introduction}, we take the model's own correct trajectories as the ideal distribution. Formally, for $p_\theta(x)>0$, sampling from $\pi_\theta(\cdot\mid x)$ and keeping only correct trajectories gives
\begin{equation}
\nu_\theta(y\mid x)=\frac{\pi_\theta(y\mid x)\mathbf1\{y\in C_x\}}{p_\theta(x)}.
\label{eq-target}
\end{equation}
This distribution also has a direct connection to likelihood learning \citep{phan2023training,singh2023beyond}. Specifically, for a finite trajectory space and a differentiable policy with positive probabilities,
\begin{equation}
\mathbb E_{Y\sim\nu_\theta(\cdot\mid x)}
\left[\nabla_\theta\log\pi_\theta(Y\mid x)\right]
=\nabla_\theta\log p_\theta(x).
\label{eq-success-score}
\end{equation}
In other words, learning the likelihood of trajectories drawn from $\nu_\theta$ increases the success probability in expectation at the current model. Finally, we measure how far a distribution $Q$ of correct trajectories is from the ideal distribution by the Kullback-Leibler (KL) divergence
\[
D_{\mathrm{KL}}(Q\Vert\nu_\theta)
=\mathbb E_{Y\sim Q}\left[\log\frac{Q(Y)}{\nu_\theta(Y\mid x)}\right].
\]
Appendix~\ref{app-likelihood} derives Equation~\eqref{eq-success-score} and explains this direction of KL.

\section{Analyzing and Selecting Reference Prefixes}
\label{sec-theory}

In this section, we analyze how the prefix length affects the accepted trajectory and turn the analysis into a practical way of choosing prefixes. Section~\ref{sec-target} derives the distance from the accepted trajectory to the ideal distribution, which identifies the preferred prefix of a question. Section~\ref{sec-challenges} explains why this prefix is difficult to find during training, and Section~\ref{sec-shared-selection} addresses both challenges by learning a prefix selector that performs well on average across questions.

\subsection{Distance to the Ideal Distribution}
\label{sec-one-attempt}
\label{sec-target}

Fix a question $x$ with reference $r$ and the current model $\pi_{\theta_{\mathrm{old}}}$, and suppress the conditioning on $x$ and the parameters by writing $\pi$, $C$, $p=\pi(C)$, and $\nu$. We assume a finite trajectory space with $0<\pi(r)<1$. For a prefix length $t$, we define the \emph{successful replacement probability} $d_t$ and the \emph{reference surprisal} $S_t$ as
\begin{equation}
d_t=\Pr\bigl(\widetilde Y_t\in C,\ \widetilde Y_t\ne r\bigr),
\qquad S_t=-\log\pi(a_t\mid h_t).
\label{eq-local-quantities}
\end{equation}
Here $d_t$ excludes an exact copy of the reference, since the accepted trajectory is then still $r$. In other words, $d_t$ is the probability that the model replaces the reference suffix with a different correct continuation, and $S_t$ measures how unfamiliar the reference suffix is to the model after the prefix.

\begin{theorem}[Distance to the ideal distribution]
\label{thm-one-attempt}
Under the setup above, for every prefix length $t$,
\begin{equation}
D_{\mathrm{KL}}(Q_t\Vert\nu)
=\underbrace{\log\frac{p}{\pi(r)}}_{D_{\mathrm{KL}}(\delta_r\Vert\nu)}
-d_tS_t+(1-d_t)\log(1-d_t).
\label{eq-kl-closed-form}
\end{equation}
Moreover, every $t^*\in\arg\max_t d_tS_t$ satisfies
$D_{\mathrm{KL}}(Q_{t^*}\Vert\nu)\le\min_t D_{\mathrm{KL}}(Q_t\Vert\nu)+1/e$.
\end{theorem}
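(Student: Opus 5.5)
The plan is to write $Q_t$ down explicitly, compute the KL term by term, and then get the near-optimality claim from a uniform bound on the leftover term. Throughout, $\pi$ is autoregressive, so for any trajectory $y=h_t\oplus z$ we have $\pi(y)=\pi(h_t)\,\pi(z\mid h_t)$. In particular $\pi(r)=\pi(h_t)\pi(a_t\mid h_t)$. Since $\pi(r)>0$, both $\pi(h_t)>0$ and $\pi(a_t\mid h_t)>0$, so $S_t$ is finite. At $t=0$ the prefix is empty and $\pi(h_0)=1$.

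\textbf{Step 1: describe $Q_t$.} By Equation~\eqref{eq-acquisition}, $Q_t$ is supported on $\{r\}$ together with the correct trajectories $y\ne r$ that begin with $h_t$. Because trajectories end with a termination token, the map $z\mapsto h_t\oplus z$ is injective. Hence, for such $y=h_t\oplus z$,
\[
Q_t(y)=\pi(z\mid h_t)=\pi(y)/\pi(h_t).
\]
These masses sum to $d_t$, and the remaining mass gives $Q_t(r)=1-d_t$. This mass collects both the failure event and the event of exactly reproducing $a_t$. Every point in the support of $Q_t$ is correct and has positive $\pi$-probability, so it lies in the support of $\nu$ and the KL divergence is finite.

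\textbf{Step 2: compute the KL divergence.} For $y\ne r$ in the support, Equation~\eqref{eq-target} gives
\[
\frac{Q_t(y)}{\nu(y)}=\frac{\pi(y)/\pi(h_t)}{\pi(y)/p}=\frac{p}{\pi(h_t)}.
\]
This ratio is constant, so these terms contribute $d_t\log\bigl(p/\pi(h_t)\bigr)$. The atom at $r$ contributes
\[
(1-d_t)\log\frac{(1-d_t)\,p}{\pi(r)}=(1-d_t)\log(1-d_t)+(1-d_t)\log\frac{p}{\pi(r)}.
\]
Adding the two contributions and regrouping gives
\[
D_{\mathrm{KL}}(Q_t\Vert\nu)=\log\frac{p}{\pi(r)}+d_t\log\frac{\pi(r)}{\pi(h_t)}+(1-d_t)\log(1-d_t).
\]
Since $\pi(r)/\pi(h_t)=\pi(a_t\mid h_t)=e^{-S_t}$, the middle term equals $-d_tS_t$, which is Equation~\eqref{eq-kl-closed-form}. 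Taking $d_t=0$ (so $Q_t=\delta_r$) confirms that the first term is $D_{\mathrm{KL}}(\delta_r\Vert\nu)$.

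\textbf{Step 3: near-optimality of $t^*$.} Let $g(d)=(1-d)\log(1-d)$ on $[0,1]$, with $g(1)=0$. Setting $u=1-d$, the function $u\log u$ attains its minimum $-1/e$ at $u=1/e$, and it is at most $0$ on $[0,1]$. Hence $-1/e\le g(d_t)\le 0$ for every $t$. Now take any $t$ and any $t^*\in\arg\max_t d_tS_t$. Then
\[
D_{\mathrm{KL}}(Q_{t^*}\Vert\nu)\le \log\frac{p}{\pi(r)}-d_{t^*}S_{t^*}\le \log\frac{p}{\pi(r)}-d_tS_t\le D_{\mathrm{KL}}(Q_t\Vert\nu)+\frac1e.
\]
The first inequality uses $g(d_{t^*})\le 0$, the second uses the maximality of $t^*$, and the third uses $g(d_t)\ge -1/e$. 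Minimizing the right-hand side over $t$ gives the claim.

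The only delicate point is Step 1. One must correctly attribute the event $Z=a_t$ to the atom at $r$, which is why $d_t$ excludes $\widetilde Y_t=r$. One must also check that distinct continuations give distinct trajectories. Once that bookkeeping is right, the constant likelihood ratio $p/\pi(h_t)$ makes the rest a direct computation.
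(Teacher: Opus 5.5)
Your proposal is correct and follows essentially the same route as the paper. The paper also shows that $Q_t(r)=1-d_t$ and that the remaining mass $d_t$ sits on correct trajectories $h_t\oplus z$ with constant likelihood ratio $p/\pi(h_t)$, rewrites $\pi(r)/\pi(h_t)=e^{-S_t}$ to get the $-d_tS_t$ term, proves the $1/e$ guarantee by the same three-inequality chain using $(1-d)\log(1-d)\in[-1/e,0]$, and identifies the first term directly as $D_{\mathrm{KL}}(\delta_r\Vert\nu)=-\log\nu(r)$ (cleaner than your ``take $d_t=0$'' remark); note also that injectivity of $z\mapsto h_t\oplus z$ holds for any fixed prefix, so the termination-token justification there is unnecessary.
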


Theorem~\ref{thm-one-attempt} expresses the distance between $Q_t$ and the ideal distribution $\nu$ in closed form with three terms. The first term equals that of training on the reference, which does not depend on the prefix. The third term is bounded within $[-1/e, 0]$. Therefore, the prefix affects the distance mainly through the second term, $d_tS_t$. Intuitively, this product formalizes the trade-off in choosing the prefix length. A longer prefix never increases $S_t$, since it leaves less of the reference for the continuation to replace. In contrast, a longer prefix can increase $d_t$ by making the continuation easier to succeed. Consequently, both a nearly complete prefix and an empty prefix on a hard question make $d_tS_t$ close to zero and leave the distance close to $D_{\mathrm{KL}}(\delta_r\Vert\nu)$, indicating learning from the reference alone. Moreover, a prefix with a higher success probability of the continuation need not give a smaller distance, as illustrated in Appendix~\ref{app-example}. Instead, maximizing $d_tS_t$ attains the smallest distance up to $1/e$ by the second statement in the theorem, and we call $h_{t^*}$ the \emph{preferred prefix} of the question under the current model. Appendix~\ref{app-proof} gives the proof.

\subsection{Challenges in Finding the Preferred Prefix}
\label{sec-challenges}

Although Theorem~\ref{thm-one-attempt} identifies the preferred prefix, finding it during training is difficult. The reference surprisal $S_t$ is available for every prefix length from a single forward pass of the model on the reference. However, the successful replacement probability $d_t$ is unknown, which leads to the following two challenges.

\noindent\textbf{Challenge 1: Variation across questions and training steps.}
Theorem~\ref{thm-one-attempt} concerns a fixed question and a fixed model, and $d_t$ depends on both. Therefore, the estimates of $d_t$ for one question do not apply to another question. Furthermore, since $d_t$ changes as the model is updated, the estimates obtained at one training step may become outdated at the next. As a result, the search for $t^*$ would have to be repeated for every question and every model update.

\noindent\textbf{Challenge 2: Limited sampling budget.}
Moreover, each such search is expensive. Identifying $t^*$ compares $d_tS_t$ across prefix lengths and therefore requires an accurate estimate of $d_t$ at every prefix length. Since $d_t$ is often small on hard questions, such an estimate requires many continuations even for a single prefix length, and this cost multiplies across prefix lengths.

\subsection{Learning a Prefix Selector across Questions}
\label{sec-shared-selection}
\label{sec-observation}

Both challenges arise from seeking the preferred prefix of each question at each training step. Instead, we learn a way of choosing prefixes that performs well on average over the questions that receive guidance, which addresses both challenges.

\noindent\textbf{Sharing across questions and training steps.}
Averaging over questions requires describing prefixes in the same way for every question, which the prefix length does not achieve for references of different lengths. Since Theorem~\ref{thm-one-attempt} measures the replaced suffix by its surprisal rather than its length, we describe a prefix through the reference surprisal. Specifically, for a question $x$ with reference $r_x$, let $S_{0,x}=-\log\pi_{\theta_{\mathrm{old}}}(r_x\mid x)$ be the surprisal of the entire reference. A \emph{guidance level} $\alpha$ in a set $\mathcal A\subset(0,1]$ selects a prefix length $t_x(\alpha)$ whose reference surprisal is approximately $\alpha S_{0,x}$. Thus, $\alpha$ is the fraction of the reference surprisal left to the continuation, where $\alpha=1$ gives the empty prefix. As with prefix lengths, the levels cannot all be tried within the sampling budget. Therefore, $\mathcal A$ contains only a few levels, and Section~\ref{sec-method-guidance} specifies the mapping from levels to prefix lengths.

We then learn a \emph{prefix selector} $\rho_\phi$, which is a distribution over $\mathcal A$ with parameters $\phi$ shared across questions. Let $\mathcal D_{\mathrm{guide}}$ denote the distribution of questions that receive guidance at the current training step, which Section~\ref{sec-method-learning} specifies for our application. For $X\sim\mathcal D_{\mathrm{guide}}$ and $\alpha\sim\rho_\phi$, we write $d_{X,\alpha}$, $S_{X,\alpha}$, and $Q_{X,\alpha}$ for $d_t$, $S_t$, and $Q_t$ at the prefix length $t_X(\alpha)$, and $\nu_X$ for the ideal distribution of $X$. The selector maximizes
\begin{equation}
J(\phi)=\mathbb E_{X\sim\mathcal D_{\mathrm{guide}},\,\alpha\sim\rho_\phi}
\left[\frac{d_{X,\alpha}S_{X,\alpha}}{S_{0,X}}\right],
\label{eq-shared-objective}
\end{equation}
which averages $d_tS_t$ over the questions and the sampled levels in units of the surprisal of the entire reference. This normalization keeps questions with long or unfamiliar references from dominating the average, and it does not change which prefix maximizes $d_tS_t$ within any question. Since the selector is shared across questions and updated at every training step, continuations on all questions and at all steps inform the same choice, while the selected prefix still adapts to each question through its reference surprisal. This addresses Challenge 1, and Appendix~\ref{app-selector-sharing} bounds the sharing cost.

\noindent\textbf{Learning without estimating $d_t$.}
The average objective also avoids the accurate estimates of $d_t$ required in Challenge 2. For a continuation from the prefix $h_t$, we define the \emph{selector feedback}
\begin{equation}
W_t=\mathbf1\bigl\{\widetilde Y_t\in C,\ \widetilde Y_t\ne r\bigr\}\frac{S_t}{S_0},
\label{eq-observation}
\end{equation}
and write $W_{X,\alpha}$ for the feedback at the prefix length $t_X(\alpha)$. In other words, a successful replacement receives the fraction of the reference surprisal left to the continuation, while a failure or an exact copy of the reference receives zero. Since the indicator has expectation $d_t$, the feedback satisfies
\begin{equation}
\mathbb E[W_t]=\frac{d_tS_t}{S_0}.
\label{eq-observation-bound}
\end{equation}
Thus, the feedback of a single continuation is an unbiased estimate of its term in $J$. A single estimate is noisy, and noisy estimates cannot reliably rank the prefix lengths of one question. However, since $J$ is an average that is linear in $d_{X,\alpha}$, the errors of single continuations cancel when pooled over many questions. Moreover, since $\mathcal A$ contains only a few levels and the selector keeps a positive probability on every level, every level receives feedback from many questions, which addresses Challenge 2. The following theorem shows that $J$ still measures the distance in Theorem~\ref{thm-one-attempt} and that the selector can be learned from this feedback.

\begin{theorem}[Learning the prefix selector from feedback]
\label{thm-selector}
\label{prop-shared-selection}
Fix the current model, the references, and $\mathcal D_{\mathrm{guide}}$, where every question satisfies the setup of Section~\ref{sec-target}. Let $\mathcal L(\phi)$ and $\mathcal L_{\mathrm{ref}}$ be the expectations of $D_{\mathrm{KL}}(Q_{X,\alpha}\Vert\nu_X)/S_{0,X}$ and $D_{\mathrm{KL}}(\delta_{r_X}\Vert\nu_X)/S_{0,X}$ over $X\sim\mathcal D_{\mathrm{guide}}$ and $\alpha\sim\rho_\phi$. Then
\begin{equation}
J(\phi)\le\mathcal L_{\mathrm{ref}}-\mathcal L(\phi)\le J(\phi)+\mathbb E_X\left[\frac{1}{eS_{0,X}}\right].
\label{eq-selector-distance}
\end{equation}
Moreover, for a differentiable prefix selector with positive probabilities and a baseline $b$ fixed before sampling,
\begin{equation}
\mathbb E\left[(W_{X,\alpha}-b)\nabla_\phi\log\rho_\phi(\alpha)\right]
=\nabla_\phi J(\phi).
\label{eq-shared-gradient}
\end{equation}
\end{theorem}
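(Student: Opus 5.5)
The plan is to obtain both claims by averaging the per-question identity of Theorem~\ref{thm-one-attempt} and applying the standard score-function identity.

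\textbf{First claim.} Every question in $\mathcal D_{\mathrm{guide}}$ satisfies the setup of Section~\ref{sec-target}. So for each $X$ and each level $\alpha$, Theorem~\ref{thm-one-attempt} at $t=t_X(\alpha)$ gives
\[
D_{\mathrm{KL}}(\delta_{r_X}\Vert\nu_X)-D_{\mathrm{KL}}(Q_{X,\alpha}\Vert\nu_X)=d_{X,\alpha}S_{X,\alpha}-(1-d_{X,\alpha})\log(1-d_{X,\alpha}).
\]
Next I bound the last term. The function $g(u)=-u\log u$ on $[0,1]$, with $g(0)=0$, satisfies $0\le g(u)\le 1/e$, and the maximum is attained at $u=1/e$. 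With $u=1-d_{X,\alpha}$, the correction term lies in $[0,1/e]$.

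Dividing by $S_{0,X}>0$ then gives
\[
\frac{d_{X,\alpha}S_{X,\alpha}}{S_{0,X}}\le\frac{D_{\mathrm{KL}}(\delta_{r_X}\Vert\nu_X)-D_{\mathrm{KL}}(Q_{X,\alpha}\Vert\nu_X)}{S_{0,X}}\le\frac{d_{X,\alpha}S_{X,\alpha}}{S_{0,X}}+\frac{1}{eS_{0,X}}.
\]
Positivity of $S_{0,X}$ holds because $\pi(r)<1$.

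Finally I take expectations over $X\sim\mathcal D_{\mathrm{guide}}$ and $\alpha\sim\rho_\phi$. The reference term does not depend on $\alpha$, so its expectation is $\mathcal L_{\mathrm{ref}}$. This yields \eqref{eq-selector-distance}.

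Integrability is the one point to check. Each quantity is finite, since the trajectory space is finite and $\pi(r)>0$. I would assume, or note, that $\mathbb E_X[1/S_{0,X}]<\infty$ and that the KL terms are integrable; if $\mathcal D_{\mathrm{guide}}$ has finite support, this is automatic.

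\textbf{Second claim.} I condition on $X$ and $\alpha$ and use the tower property. The continuation is drawn from the fixed model $\pi_{\theta_{\mathrm{old}}}$, independently of $\phi$ given $(X,\alpha)$. Equation~\eqref{eq-observation-bound} therefore gives $\mathbb E[W_{X,\alpha}\mid X,\alpha]=d_{X,\alpha}S_{X,\alpha}/S_{0,X}=:f(X,\alpha)$. Hence
\[
\mathbb E\bigl[(W_{X,\alpha}-b)\nabla_\phi\log\rho_\phi(\alpha)\bigr]=\mathbb E_X\sum_{\alpha\in\mathcal A}\rho_\phi(\alpha)\bigl(f(X,\alpha)-b\bigr)\nabla_\phi\log\rho_\phi(\alpha).
\]
The sum over $\mathcal A$ is finite and $\rho_\phi>0$, so $\rho_\phi\nabla_\phi\log\rho_\phi=\nabla_\phi\rho_\phi$. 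The baseline then contributes $b\,\nabla_\phi\sum_\alpha\rho_\phi(\alpha)=b\,\nabla_\phi 1=0$; this uses that $b$ is fixed before sampling, so it does not depend on $\alpha$ or on the outcome.

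The remaining term is $\mathbb E_X\sum_\alpha f(X,\alpha)\nabla_\phi\rho_\phi(\alpha)$. Here $f$ does not depend on $\phi$, because the model, the references, and $\mathcal D_{\mathrm{guide}}$ are fixed. Exchanging $\nabla_\phi$ with $\mathbb E_X$ turns this into $\nabla_\phi J(\phi)$.

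\textbf{Main obstacle.} The only delicate step is justifying that exchange. It is valid because $|f|\le 1$ (since $S_t\le S_0$ and $d_t\le1$) and the sum over $\mathcal A$ is finite, so dominated convergence applies, given a locally bounded $\nabla_\phi\rho_\phi$. The rest of the argument is bookkeeping.
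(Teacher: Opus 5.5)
Your proposal is correct and follows essentially the same route as the paper. You average the per-question closed form of Theorem~\ref{thm-one-attempt}, using $(1-d)\log(1-d)\in[-1/e,0]$, and you apply the score-function identity conditionally on $X$, where the baseline drops out because $\sum_\alpha\nabla_\phi\rho_\phi(\alpha)=0$. The exchange you flag as the main obstacle is unnecessary: the paper writes $J(\phi)=\sum_{\alpha}\rho_\phi(\alpha)\mu_\alpha$ with $\mu_\alpha=\mathbb E_X[g_X(\alpha)]$ independent of $\phi$, so only a finite sum is differentiated and no dominated convergence is needed. Your integrability caveats are also automatic, since all normalized distances lie in $[0,1]$, and the upper bound holds trivially if $\mathbb E_X[1/S_{0,X}]=\infty$.
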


The first statement shows that, compared with training on the reference, the selector reduces the average normalized distance by at least $J$, where each distance is divided by the surprisal of the entire reference. The gap is at most $\mathbb E_X[1/(eS_{0,X})]$, which is small when no reference is highly likely under the model. Therefore, maximizing $J$ pursues the goal of Theorem~\ref{thm-one-attempt} across questions. The second statement shows that raising the probability of the levels whose feedback exceeds the baseline $b$ moves the selector toward a larger $J$ on average, where $b$ introduces no bias and serves to reduce the variance. Since these continuations also produce the accepted trajectories, the update requires no additional generation. Appendix~\ref{app-selector} proves Theorem~\ref{thm-selector} and extends it to batches, changing models, and randomly sampled levels.

\section{ARG: Adaptive Reference Guidance}
\label{sec-method}

We now build \method on Theorems~\ref{thm-one-attempt} and~\ref{thm-selector}. Section~\ref{sec-method-guidance} describes how \method generates accepted trajectories for a question, and Section~\ref{sec-method-update} describes how the same continuations update the prefix selector. Neither step depends on a particular reinforcement learning algorithm. The algorithm only determines the questions that receive guidance and how the model learns from the accepted trajectories, and Section~\ref{sec-method-learning} specifies both for GRPO.

\subsection{Generating Accepted Trajectories}
\label{sec-method-guidance}

Consider a question $x$ that receives guidance, with reference $r$. \method first scores $r$ once under the current model $\pi_{\theta_{\mathrm{old}}}$, which gives the reference surprisal $S_t$ for every prefix length. Each guidance level $\alpha\in\mathcal A$ then maps to a prefix length $t_x(\alpha)$. Specifically, $\alpha=1$ gives the empty prefix. For $\alpha<1$, we take the latest prefix length whose reference surprisal is at least $\alpha S_0$ and move it to a nearby sentence or formula boundary, so that the prefix ends at a natural break. Appendix~\ref{app-boundaries} gives the boundary rules.

The prefix selector assigns a logit $\phi_\alpha$ to each level and samples from
\begin{equation}
\rho_\phi(\alpha)
=(1-\varepsilon_{\mathrm{sel}})
\frac{\exp(\phi_\alpha)}
{\sum_{\alpha'\in\mathcal A}\exp(\phi_{\alpha'})}
+\frac{\varepsilon_{\mathrm{sel}}}{|\mathcal A|},
\label{eq-selector}
\end{equation}
where $0<\varepsilon_{\mathrm{sel}}<1$ keeps every level observed as the model changes. Within a fixed budget of $K$ continuations per question, we independently sample $\alpha_j\sim\rho_\phi$ for $j=1,\ldots,K$. For the resulting prefix length $t_j=t_x(\alpha_j)$, we sample a continuation $Z_j\sim\pi_{\theta_{\mathrm{old}}}(\cdot\mid x,h_{t_j})$ and verify the candidate trajectory $\widetilde Y_j=h_{t_j}\oplus Z_j$.

Every correct candidate trajectory different from $r$ can serve as an accepted trajectory. If there is none, \method falls back to the reference. Therefore, the question always receives a correct trajectory. When the number of accepted trajectories is limited, as in our application, we prefer candidate trajectories that keep less of the reference. Specifically, a correct candidate trajectory from the empty prefix keeps none of the reference. Indeed, it is generated entirely by the model and follows the ideal distribution restricted to trajectories other than $r$. Therefore, we accept these candidate trajectories first. Among candidate trajectories from nonempty prefixes, we prefer the shortest prefix, since the surprisal of the prefix, $-\log\pi_{\theta_{\mathrm{old}}}(h_t\mid x)=S_0-S_t$, never decreases with $t$. Finally, Theorem~\ref{thm-one-attempt} concerns a single continuation. For accepted trajectories from several continuations, Appendix~\ref{app-candidate-selection} gives a sufficient condition under which the distance remains no larger than that of training on the reference.

\subsection{Updating the Prefix Selector}
\label{sec-method-update}

Every continuation provides the selector feedback $W_j=W_{t_j}$ in Equation~\eqref{eq-observation}, whether or not its candidate trajectory is accepted. Since the feedback does not depend on which trajectories are accepted, the second statement of Theorem~\ref{thm-selector} holds under any acceptance rule, while the first statement, like Theorem~\ref{thm-one-attempt}, concerns a single continuation.

We collect continuations for all questions in a batch while keeping the current model and the prefix selector fixed. Let $\mathcal U$ index these continuations, with sampled levels $\alpha_j$ and selector feedback $W_j$. For a nonempty $\mathcal U$, we update the prefix selector by
\begin{equation}
\phi\leftarrow\phi+\eta_{\mathrm{sel}}
\sum_{j\in\mathcal U}(W_j-b)
\nabla_\phi\log\rho_\phi(\alpha_j),
\label{eq-selector-update}
\end{equation}
where $\eta_{\mathrm{sel}}>0$ is the step size and $b$ is a running average of past feedback fixed before collection. By the second statement of Theorem~\ref{thm-selector}, the update follows the gradient of $J$ in expectation. Appendix~\ref{app-selector} gives the baseline update and derives the expected update as the model and the questions change during training.

\subsection{Applying \method to GRPO}
\label{sec-method-learning}

GRPO \citep{shao2024deepseekmath} samples a group of $G=8$ responses for each question and compares their rewards within the group. When every response in a group is incorrect, the group provides no signal for improving correctness, since all correctness rewards are identical. In such an \emph{all-failure group}, adding a correct trajectory restores the learning signal. Therefore, we apply \method to all-failure groups, using $K=8$ continuations. These questions define $\mathcal D_{\mathrm{guide}}$.

We insert at most four accepted trajectories into each group, which keeps at least four original failures and permits a balanced group, motivated by prior work on intermediate pass rates \citep{bae2026online,liu2026restoring}. Following the preference in Section~\ref{sec-method-guidance}, we first accept correct candidate trajectories from the empty prefix. If fewer than four are accepted, we additionally accept at most one from a nonempty prefix, which limits repeated reference content in the same group. If no candidate trajectory is accepted, we insert the reference once as fallback. Each inserted trajectory replaces an original failure at a uniformly chosen position. We then recompute the group advantages $A_i$ from the final correctness and format rewards, as specified in Appendix~\ref{app-grpo-loss}.

The model learns each inserted trajectory under the original question through its complete likelihood, including the prefix. This matches Theorem~\ref{thm-one-attempt}, which measures the complete accepted trajectory under the original question. Let $\mathcal I$ index the inserted trajectories $(x_i,y_i)$ and let $T$ be the total number of tokens in the loss. Their contribution to the gradient is
\begin{equation}
\nabla_\theta\mathcal L_{\mathrm{inserted}}
=-\frac1T\sum_{i\in\mathcal I}A_i
\nabla_\theta\log\pi_\theta(y_i\mid x_i),
\label{eq-acquired-nll}
\end{equation}
where the trajectories, advantages, and token count are fixed during differentiation. The remaining original responses use the clipped GRPO loss with the same denominator. Moreover, for binary rewards, Appendix~\ref{app-positive-learning} shows that inserting $Y\sim Q$ into an all-failure group gives an expected update proportional to the gradient of $\log\frac{p_\theta}{1-p_\theta}-D_{\mathrm{KL}}(Q\Vert\nu_\theta)$ at the current model. The gradient of the second term vanishes when $Q$ is the ideal distribution, in which case the update only increases the odds of success. This connects the distance in Theorem~\ref{thm-one-attempt} to the GRPO update.

After collecting continuations, we update the prefix selector by Equation~\eqref{eq-selector-update} and take one policy update on the final groups. Algorithm~\ref{alg-guidance} summarizes the training procedure.

\section{Experiments}
\label{sec-experiments}

\subsection{Experimental Setting}
\label{sec-experimental-setting}

\noindent\textbf{Data and models.}
We use DeepMath-103k with offline reference solutions \citep{he2026deepmath} as our training set. We use Qwen3-4B and Qwen3-8B with thinking mode disabled \citep{yang2025qwen3} as our base LLMs. For all methods, we evaluate on five mathematical reasoning benchmarks, AIME 2024, AIME 2025, AIME 2026, HMMT February 2025, and HMMT November 2025 \citep{balunovic2026matharena,dekoninck2026beyond}. Appendix~\ref{app-implementation} gives further details.

\noindent\textbf{Baselines and evaluation.}
We compare against the initial Qwen model, vanilla supervised fine-tuning (SFT), GRPO \citep{shao2024deepseekmath}, OPSD \citep{zhao2026self}, SAGE \citep{liao2026self}, BREAD \citep{zhang2026bread}, and LUFFY \citep{yan2026learning}. During evaluation, we draw 12 responses per question. $\passk$ measures whether a question has at least one correct response, while $\avgk$ measures the fraction of correct responses. We use $\passk$ as the primary metric.

\begin{table}[!htpb]
\centering
\small
\setlength{\tabcolsep}{4pt}
\begin{tabular*}{\linewidth}{@{\hspace{6pt}\extracolsep{\fill}}ll*{5}{c}|c@{\hspace{6pt}}}
\toprule
\textbf{Model} & \textbf{Method} & \textbf{AIME24} & \textbf{AIME25} & \textbf{AIME26} & \textbf{HMMT\_Feb} & \textbf{HMMT\_Nov} & \textbf{Avg.} \\
\midrule
\multirow{8}{*}{Qwen3-4B}
 & Base & 50.00 & 43.33 & 43.33 & 23.33 & 33.33 & 38.67 \\
 & SFT & 63.33 & 63.33 & 66.67 & 36.67 & 36.67 & 53.33 \\
 & GRPO & \underline{76.67} & \underline{70.00} & \textbf{73.33} & \textbf{53.33} & 46.67 & 64.00 \\
 & OPSD & 56.67 & 36.67 & 40.00 & 40.00 & 40.00 & 42.67 \\
 & SAGE & 73.33 & \underline{70.00} & \underline{70.00} & 43.33 & \underline{56.67} & 62.67 \\
 & BREAD & \textbf{80.00} & \textbf{73.33} & \textbf{73.33} & \underline{50.00} & \underline{56.67} & \underline{66.67} \\
 & LUFFY & \textbf{80.00} & \textbf{73.33} & \underline{70.00} & 43.33 & \underline{56.67} & 64.67 \\
 & \textbf{\method (ours)} & \textbf{80.00} & \textbf{73.33} & \textbf{73.33} & 43.33 & \textbf{66.67} & \textbf{67.33} \\
\midrule
\multirow{8}{*}{Qwen3-8B}
 & Base & 46.67 & 43.33 & 53.33 & 16.67 & 46.67 & 41.33 \\
 & SFT & \underline{80.00} & 70.00 & \textbf{73.33} & 46.67 & 56.67 & 65.33 \\
 & GRPO & \underline{80.00} & 63.33 & \underline{70.00} & 46.67 & 53.33 & 62.67 \\
 & OPSD & 66.67 & 60.00 & 50.00 & 36.67 & 43.33 & 51.33 \\
 & SAGE & \underline{80.00} & 70.00 & \underline{70.00} & 50.00 & 60.00 & 66.00 \\
 & BREAD & \underline{80.00} & 66.67 & 66.67 & \underline{53.33} & \textbf{70.00} & \underline{67.33} \\
 & LUFFY & 76.67 & \textbf{76.67} & \textbf{73.33} & 46.67 & \underline{63.33} & \underline{67.33} \\
 & \textbf{\method (ours)} & \textbf{83.33} & \underline{73.33} & \textbf{73.33} & \textbf{56.67} & \underline{63.33} & \textbf{70.00} \\
\bottomrule
\end{tabular*}
\caption{Main experimental results in $\passk$ percentages. Avg. is the mean over the five benchmarks. Within each model, \textbf{bold} and \underline{underline} mark the best and second-best values in each column.}
\label{tab-main-results}
\vspace{-10pt}
\end{table}

\subsection{Results}
\label{sec-main-performance}

Table~\ref{tab-main-results} evaluates \method through its application to GRPO. \method achieves the highest aggregate $\passk$ at both model scales, reaching $67.33\%$ on Qwen3-4B and $70.00\%$ on Qwen3-8B. Compared with GRPO, these results improve $\passk$ by $3.33$ and $7.33$ percentage points, respectively. \method also exceeds the reference-informed baselines in aggregate $\passk$, which use reference solutions as off-policy responses in LUFFY, as prefixes in BREAD, and as hints in SAGE \citep{yan2026learning,zhang2026bread,liao2026self}. Thus, \method solves the largest fraction of evaluation questions under the 12-response budget. For $\avgk$, \method improves over GRPO on both models and remains competitive with the strongest baseline, as discussed in Appendix~\ref{app-main-results}. Sections~\ref{sec-ablations} and~\ref{sec-training-dynamics} examine the contributions of the continuations and the prefix selector.

\subsection{Ablations}
\label{sec-ablations}

We examine the contribution of the continuations, compare with unguided resampling, and evaluate the surprisal weighting in the selector feedback.

\noindent\textbf{Reference insertion.}
For each all-failure group, we replace a random failure in the original group with the reference without any extra generation. This ablation corresponds to training on the reference in Theorem~\ref{thm-one-attempt}. Table~\ref{tab-ablation-construction} shows that reference insertion achieves $56.67\%$ in $\passk$ and $28.44\%$ in $\avgk$ on Qwen3-4B, compared with $67.33\%$ and $41.11\%$ for \method. These results support using the reference to guide continuations and reserving the reference itself for fallback.

\noindent\textbf{Resampling.}
We allocate the same eight continuations to the empty prefix for each all-failure group. Correct candidate trajectories replace original failures, and groups with no correct candidate trajectory remain unchanged. Thus, this comparison evaluates \method against additional sampling without reference guidance or fallback. Table~\ref{tab-ablation-construction} shows that \method improves $\passk$ from $60.00\%$ to $67.33\%$ and $\avgk$ from $34.00\%$ to $41.11\%$ on Qwen3-4B. During their respective training runs, candidate trajectories are accepted in $66.0\%$ of all-failure groups with \method and only $32.5\%$ with resampling. More details are shown in Appendix~\ref{app-ablation-generation}.

\noindent\textbf{Without surprisal weighting.}
We replace the selector feedback in Equation~\eqref{eq-observation} with the successful replacement indicator. The guidance levels, surprisal-based prefix construction, selector update rule, and acceptance rule remain unchanged. Table~\ref{tab-ablation-weighting} shows that removing the weighting lowers both evaluation metrics on both models, with $\passk$ decreasing by $4.00$ and $6.00$ percentage points on Qwen3-4B and Qwen3-8B, respectively. These results support the surprisal weighting in the selector feedback. Section~\ref{sec-training-dynamics} examines the selector feedback and prefix selection under \method, while Appendix~\ref{app-ablation-weighting} compares the prefix selection with that of the ablation.

\begin{table}[t]
\centering
\small
\setlength{\tabcolsep}{1.8pt}
\renewcommand\thesubtable{(\alph{subtable})}
\captionsetup[subtable]{labelformat=simple,labelsep=space}
\begin{subtable}[t]{0.49\linewidth}
\centering
\begin{tabular}{@{}lcc@{}}
\toprule
Method & $\passk$ & $\avgk$ \\
\midrule
Reference insertion & 56.67 & 28.44 \\
Resampling & 60.00 & 34.00 \\
\method & 67.33 & 41.11 \\
\bottomrule
\end{tabular}
\caption{Sources of accepted trajectories on Qwen3-4B}
\label{tab-ablation-construction}
\end{subtable}\hfill
\begin{subtable}[t]{0.49\linewidth}
\centering
\begin{tabular}{@{}lcccc@{}}
\toprule
\multirow{2}{*}{Method} & \multicolumn{2}{c}{Qwen3-4B} & \multicolumn{2}{c}{Qwen3-8B} \\
\cmidrule(lr){2-3}\cmidrule(lr){4-5}
& $\passk$ & $\avgk$ & $\passk$ & $\avgk$ \\
\midrule
w/o weighting & 63.33 & 34.67 & 64.00 & 31.89 \\
\method & 67.33 & 41.11 & 70.00 & 42.28 \\
\bottomrule
\end{tabular}
\caption{Surprisal weighting on both models}
\label{tab-ablation-weighting}
\end{subtable}
\caption{Ablation results averaged over the five benchmarks. Resampling uses eight continuations from the empty prefix without reference fallback. The weighting ablation removes $S_t/S_0$ from the selector feedback.}
\label{tab-ablation}
\vspace{-10pt}
\end{table}

\subsection{Training Dynamics}
\label{sec-training-dynamics}
Figure~\ref{fig-guidance-dynamics} examines Qwen3-8B training dynamics through successful replacement rates, selector feedback, and prefix selection.

\noindent\textbf{Successful replacement and selector feedback.}
Figure~\ref{fig-dynamics-replacement-8b} reports the successful replacement rate for each guidance level. Each point averages all continuations at that level within a window of 50 training steps. The level $\alpha=1/4$ has a higher successful replacement rate than $\alpha=1/2$ in every window. However, Figure~\ref{fig-dynamics-feedback-8b} shows that their ordering can change when the successful replacement indicator is weighted by the fraction $S_t/S_0$ of the reference surprisal left to the continuation. In the final window, for example, $\alpha=1/4$ has lower mean feedback than $\alpha=1/2$ ($0.0802$ versus $0.0991$), although its successful replacement rate is higher ($43.84\%$ versus $23.65\%$). Thus, the feedback favors less reference guidance in this window, consistent with Section~\ref{sec-target}, where a higher success probability of the continuation need not give a smaller distance.

\begin{figure}[t]
    \centering
    \captionsetup[subfigure]{labelformat=simple,labelsep=space}
    \includegraphics[width=\linewidth]{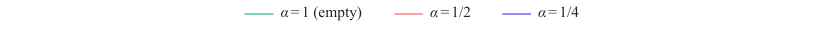}\par
    \begin{subfigure}{0.32\linewidth}
        \includegraphics[width=\linewidth]{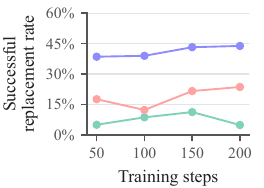}
        \caption{Successful replacement}
        \label{fig-dynamics-replacement-8b}
    \end{subfigure}\hfill
    \begin{subfigure}{0.32\linewidth}
        \includegraphics[width=\linewidth]{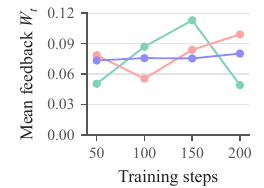}
        \caption{Selector feedback}
        \label{fig-dynamics-feedback-8b}
    \end{subfigure}\hfill
    \begin{subfigure}{0.32\linewidth}
        \includegraphics[width=\linewidth]{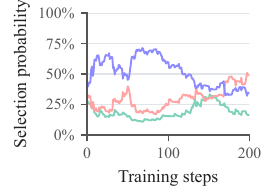}
        \caption{Prefix selection}
        \label{fig-dynamics-selector-8b}
    \end{subfigure}
    \caption{Training dynamics of a Qwen3-8B \method run. Panel (a) reports successful replacement rates, panel (b) reports mean selector feedback $W_t$, and panel (c) shows the selection probabilities. In panels (a) and (b), a point at step $s$ summarizes all continuations at that level across the past 50 steps, including failures and candidate trajectories not accepted for training. Panel (c) shows the state after each training step without smoothing.}
    \label{fig-guidance-dynamics}
    \vspace{-10pt}
\end{figure}

\noindent\textbf{Prefix selection during training.}
The selector uses the feedback $W_t$ from each continuation to update its selection probabilities. Figure~\ref{fig-dynamics-selector-8b} shows how these probabilities change during training. The selector favors $\alpha=1/4$ through much of training and increases its probability of selecting $\alpha=1/2$ later, assigning it the highest final probability of $49.0\%$. Since a larger $\alpha$ leaves more of the reference surprisal to the continuation, this shift supplies less reference guidance. Appendix~\ref{app-training-dynamics} provides additional training-dynamics results.

\section{Related Work}
\label{sec-related}

\noindent\textbf{Learning signals beyond on-policy rollouts.}
LUFFY, SRFT, ReLIFT, and related methods combine reference or teacher trajectories with the model's own rollouts, through mixed-policy groups or combined supervised and reinforcement updates \citep{yan2026learning,fu2026srft,ma2026learning,zhang2026policy,liu2025superrl,dong2026rl,lv2025towards}. OPSD and SDPO instead use the model conditioned on solution information or feedback as a teacher for token-level supervision \citep{zhao2026self,hubotter2026reinforcement}. Other methods recover learning signals from failed rollouts, either by repairing them into correct ones \citep{li2026learning,yang2026int,zhang2025critique,ren2026recycling} or by reshaping the rewards or advantages of groups with identical rewards \citep{chen2025stepwise,nie2026gradient,nan2025ngrpo,le2026no}.

\noindent\textbf{Reference-guided generation.}
Closer to our work, many methods use the reference only to guide the model's own generation, either by revealing a prefix of the reference for the model to continue \citep{liu2026uft,setlur2026reuse,qu2026pope,zhang2026stephint,su2026trust,liu2025ghpo} or by providing hints derived from the reference or its answer \citep{liao2026self,zhang2026scaf,nath2025adaptive,chen2026nudging,zhou2026expo}. The amount of guidance is fixed or scheduled \citep{xi2024training,huang2025blending,li2026questa}, adapted to the observed success of the model \citep{zhang2026bread, ICLR2026_c073a9d0,li2025staying,guo2026g2rpo,beliaev2026max}, or chosen by the gain of a prefix or the transfer of a hint \citep{zhou2026correctness,xia2026learning}. In contrast, we derive the amount of guidance from the distance between the accepted trajectory and the ideal distribution, which shows that success alone does not determine the preferred prefix, and we learn a prefix selector shared across questions from the continuations themselves.

\noindent\textbf{Posterior views of learning from correct trajectories.}
Training on the model's own correct samples, as in STaR and ReST$^{\mathrm{EM}}$, approximates learning from our ideal distribution \citep{zelikman2022star,singh2023beyond}, which is also the correct distribution closest to the model in KL and relates to the Bayesian view of KL-regularized RL \citep{dymetman2026binary,korbak2022rl}. Latent-variable methods approximate the posterior over rationales given the correct answer \citep{phan2023training,zhou2026variational}, and TRICE keeps a stored rationale until a new proposal is correct, similar to our fallback. Other work analyzes or modifies the function of the success probability that RL maximizes \citep{davis2025objective,tajwar2026maximum,ramasubramanian2026tail}, or shows that targets unlikely under the model are harder to learn from \citep{ren2024learn,li2025small,setlur2024rl}.

\section{Conclusion and Limitations}
\label{sec-conclusion}

We study reference-prefix continuation with fallback through the distance from the accepted trajectory to the ideal distribution. Up to a small bounded term, this distance decreases with $d_tS_t$, which captures the trade-off between successful replacement and the reference surprisal. This analysis leads to \method, which learns a prefix selector across questions from the continuations themselves. Applied to all-failure groups in GRPO, \method achieves the highest aggregate $\passk$ on Qwen3-4B and Qwen3-8B across five mathematical benchmarks, with competitive average sampled accuracy.

\noindent\textbf{Limitations.}
\method is only applied to GRPO, although the method does not depend on a particular reinforcement learning algorithm. Moreover, the selector chooses among a predefined set of guidance levels, and our experiments cover mathematical reasoning with one model family. Future work may extend the method to other domains with verifiable rewards.

\bibliographystyle{iclr2027_conference}
\bibliography{iclr2027_conference}

\begin{thebibliography}{61}
\providecommand{\natexlab}[1]{#1}
\providecommand{\url}[1]{\texttt{#1}}
\expandafter\ifx\csname urlstyle\endcsname\relax
  \providecommand{\doi}[1]{doi: #1}\else
  \providecommand{\doi}{doi: \begingroup \urlstyle{rm}\Url}\fi

\bibitem[amani et~al.(2026)amani, Lotfi, Baldwin, Bengio, Farajtabar, Abbe, and
  West]{ICLR2026_c073a9d0}
mohammad~hossein amani, Aryo Lotfi, Nicolas Baldwin, Samy Bengio, Mehrdad
  Farajtabar, Emmanuel Abbe, and Robert West.
\newblock Rl for reasoning by adaptively revealing rationales.
\newblock In C.~Vondrick, B.~Hariharan, C.~Raffel, L.~Pinto, D.~Yang, and
  A.~Faust (eds.), \emph{International Conference on Learning Representations},
  volume 2026, pp.\  118599--118620, 2026.
\newblock URL
  \url{https://proceedings.iclr.cc/paper_files/paper/2026/file/c073a9d002f3dab99a7af18642d74bdb-Paper-Conference.pdf}.

\bibitem[Bae et~al.(2026)Bae, Hong, Lee, Kim, Nam, and Kwak]{bae2026online}
Sanghwan Bae, Jiwoo Hong, Min~Young Lee, Hanbyul Kim, JeongYeon Nam, and
  Donghyun Kwak.
\newblock Online difficulty filtering for reasoning oriented reinforcement
  learning.
\newblock In \emph{Proceedings of the 19th Conference of the European Chapter
  of the Association for Computational Linguistics (Volume 1: Long Papers)},
  pp.\  700--719, 2026.

\bibitem[Balunovic et~al.(2026)Balunovic, Dekoninck, Petrov, Jovanovi{\'c}, and
  Vechev]{balunovic2026matharena}
Mislav Balunovic, Jasper Dekoninck, Ivo Petrov, Nikola Jovanovi{\'c}, and
  Martin Vechev.
\newblock Matharena: Evaluating llms on uncontaminated math competitions.
\newblock \emph{Advances in Neural Information Processing Systems}, 38, 2026.

\bibitem[Beliaev(2026)]{beliaev2026max}
Vladislav Beliaev.
\newblock Max out grpo signal: Adaptive trace prefix control for hard reasoning
  problems.
\newblock \emph{arXiv preprint arXiv:2607.07674}, 2026.

\bibitem[Chen et~al.(2026)Chen, Peng, Choubey, Huang, Zhang, Bansal, and
  Wu]{chen2026nudging}
Justin Chen, Xiangyu Peng, Prafulla~Kumar Choubey, Kung-Hsiang Huang, Jiaxin
  Zhang, Mohit Bansal, and Chien-Sheng Wu.
\newblock Nudging the boundaries of llm reasoning.
\newblock In \emph{International Conference on Learning Representations},
  volume 2026, pp.\  30315--30335, 2026.

\bibitem[Chen et~al.(2025)Chen, Li, Li, Chen, and Lin]{chen2025stepwise}
Peter Chen, Xiaopeng Li, Ziniu Li, Xi~Chen, and Tianyi Lin.
\newblock Stepwise guided policy optimization: Coloring your incorrect
  reasoning in grpo.
\newblock \emph{arXiv preprint arXiv:2505.11595}, 2025.

\bibitem[Davis \& Recht(2025)Davis and Recht]{davis2025objective}
Damek Davis and Benjamin Recht.
\newblock What is the objective of reasoning with reinforcement learning?
\newblock \emph{arXiv preprint arXiv:2510.13651}, 2025.

\bibitem[Dekoninck et~al.(2026)Dekoninck, Jovanovi{\'c}, Gehrunger,
  R{\"o}gnvaldsson, Petrov, Sun, and Vechev]{dekoninck2026beyond}
Jasper Dekoninck, Nikola Jovanovi{\'c}, Tim Gehrunger, K{\'a}ri
  R{\"o}gnvaldsson, Ivo Petrov, Chenhao Sun, and Martin Vechev.
\newblock Beyond benchmarks: Matharena as an evaluation platform for
  mathematics with llms.
\newblock \emph{arXiv preprint arXiv:2605.00674}, 2026.

\bibitem[Dong et~al.(2026)Dong, Jiang, Tao, Liu, Zhang, Mou, Cao, Ma, Chen, Li,
  et~al.]{dong2026rl}
Yihong Dong, Xue Jiang, Yongding Tao, Huanyu Liu, Kechi Zhang, Lili Mou, Rongyu
  Cao, Yingwei Ma, Jue Chen, Binhua Li, et~al.
\newblock Rl-plus: Countering capability boundary collapse of llms in
  reinforcement learning with hybrid-policy optimization.
\newblock In \emph{Proceedings of the 64th Annual Meeting of the Association
  for Computational Linguistics (Volume 1: Long Papers)}, pp.\  43029--43053,
  2026.

\bibitem[Dymetman(2026)]{dymetman2026binary}
Marc Dymetman.
\newblock Binary rewards and reinforcement learning: Fundamental challenges.
\newblock \emph{arXiv preprint arXiv:2605.02375}, 2026.

\bibitem[Fu et~al.(2026)Fu, Chen, Chai, Wang, Tu, Yin, Lin, Zhang, Zhu, and
  Zhao]{fu2026srft}
Yuqian Fu, Tinghong Chen, Jiajun Chai, Xihuai Wang, Songjun Tu, Guojun Yin, Wei
  Lin, Qichao Zhang, Yuanheng Zhu, and Dongbin Zhao.
\newblock Srft: A single-stage method with supervised and reinforcement
  fine-tuning for reasoning.
\newblock In \emph{International Conference on Learning Representations},
  volume 2026, pp.\  102430--102459, 2026.

\bibitem[Guo et~al.(2025)Guo, Yang, Zhang, Song, Wang, Zhu, Xu, Zhang, Ma, Bi,
  et~al.]{guo2025deepseek}
Daya Guo, Dejian Yang, Haowei Zhang, Junxiao Song, Peiyi Wang, Qihao Zhu,
  Runxin Xu, Ruoyu Zhang, Shirong Ma, Xiao Bi, et~al.
\newblock Deepseek-r1: Incentivizing reasoning capability in llms via
  reinforcement learning.
\newblock \emph{arXiv preprint arXiv:2501.12948}, 2025.

\bibitem[Guo et~al.(2026)Guo, Deng, Cheng, and Tang]{guo2026g2rpo}
Yongxin Guo, Wenbo Deng, Zhenglin Cheng, and Xiaoying Tang.
\newblock G2rpo-a: Guided group relative policy optimization with adaptive
  guidance.
\newblock In \emph{Proceedings of the 64th Annual Meeting of the Association
  for Computational Linguistics (Volume 1: Long Papers)}, pp.\  4525--4539,
  2026.

\bibitem[He et~al.(2026)He, Liang, Xu, Liu, Chen, Wang, Song, Yu, Liang, Wang,
  et~al.]{he2026deepmath}
Zhiwei He, Tian Liang, Jiahao Xu, Qiuzhi Liu, Xingyu Chen, Yue Wang, Linfeng
  Song, Dian Yu, Zhenwen Liang, Wenxuan Wang, et~al.
\newblock Deepmath-103k: A large-scale, challenging, decontaminated, and
  verifiable mathematical dataset for advancing reasoning.
\newblock In \emph{International Conference on Learning Representations},
  volume 2026, pp.\  138306--138322, 2026.

\bibitem[Huang et~al.(2025)Huang, Cheng, Qiu, Wang, Xu, Ponti, and
  Titov]{huang2025blending}
Zeyu Huang, Tianhao Cheng, Zihan Qiu, Zili Wang, Yinghui Xu, Edoardo~M Ponti,
  and Ivan Titov.
\newblock Blending supervised and reinforcement fine-tuning with prefix
  sampling.
\newblock \emph{arXiv preprint arXiv:2507.01679}, 2025.

\bibitem[H{\"u}botter et~al.(2026)H{\"u}botter, L{\"u}beck, Behric, Baumann,
  Bagatella, Marta, Hakimi, Shenfeld, Buening, Guestrin,
  et~al.]{hubotter2026reinforcement}
Jonas H{\"u}botter, Frederike L{\"u}beck, Lejs Behric, Anton Baumann, Marco
  Bagatella, Daniel Marta, Ido Hakimi, Idan Shenfeld, Thomas~Kleine Buening,
  Carlos Guestrin, et~al.
\newblock Reinforcement learning via self-distillation.
\newblock \emph{arXiv preprint arXiv:2601.20802}, 2026.

\bibitem[Jaech et~al.(2024)Jaech, Kalai, Lerer, Richardson, El-Kishky, Low,
  Helyar, Madry, Beutel, Carney, et~al.]{jaech2024openai}
Aaron Jaech, Adam Kalai, Adam Lerer, Adam Richardson, Ahmed El-Kishky, Aiden
  Low, Alec Helyar, Aleksander Madry, Alex Beutel, Alex Carney, et~al.
\newblock Openai o1 system card.
\newblock \emph{arXiv preprint arXiv:2412.16720}, 2024.

\bibitem[Korbak et~al.(2022)Korbak, Perez, and Buckley]{korbak2022rl}
Tomasz Korbak, Ethan Perez, and Christopher Buckley.
\newblock Rl with kl penalties is better viewed as bayesian inference.
\newblock In \emph{Findings of the Association for Computational Linguistics:
  EMNLP 2022}, pp.\  1083--1091, 2022.

\bibitem[Lambert et~al.(2024)Lambert, Morrison, Pyatkin, Huang, Ivison,
  Brahman, Miranda, Liu, Dziri, Lyu, et~al.]{lambert2024tulu}
Nathan Lambert, Jacob Morrison, Valentina Pyatkin, Shengyi Huang, Hamish
  Ivison, Faeze Brahman, Lester James~V Miranda, Alisa Liu, Nouha Dziri, Shane
  Lyu, et~al.
\newblock Tulu 3: Pushing frontiers in open language model post-training.
\newblock \emph{arXiv preprint arXiv:2411.15124}, 2024.

\bibitem[Le et~al.(2026)Le, Jeon, Vu, Lai, and Yang]{le2026no}
Thanh-Long~V Le, Myeongho Jeon, Kim Vu, Viet Lai, and Eunho Yang.
\newblock No prompt left behind: Exploiting zero-variance prompts in llm
  reinforcement learning via entropy-guided advantage shaping.
\newblock In \emph{International Conference on Learning Representations},
  volume 2026, pp.\  121956--121982, 2026.

\bibitem[Li et~al.(2026{\natexlab{a}})Li, Lin, Lu, Wen, Yang, Gao, Wu, and
  Zhang]{li2026questa}
Jiazheng Li, Hongzhou Lin, Hong Lu, Kaiyue Wen, Zaiwen Yang, Jiaxuan Gao,
  Yi~Wu, and Jingzhao Zhang.
\newblock Questa: Expanding reasoning capacity in llms via question
  augmentation.
\newblock In \emph{International Conference on Learning Representations},
  volume 2026, pp.\  144191--144215, 2026{\natexlab{a}}.

\bibitem[Li et~al.(2026{\natexlab{b}})Li, Tadiparthi, Lee, Agarwal, Mahjoub,
  Pari, Chen, Zhang, and Leqi]{li2026learning}
Shuozhe Li, Vaishnav Tadiparthi, Kwonjoon Lee, Nakul Agarwal, Hossein~Nourkhiz
  Mahjoub, Ehsan~Moradi Pari, Lizhang Chen, Amy Zhang, and Liu Leqi.
\newblock Learning robust reasoning through guided adversarial self-play.
\newblock \emph{arXiv preprint arXiv:2602.00173}, 2026{\natexlab{b}}.

\bibitem[Li et~al.(2025{\natexlab{a}})Li, Yue, Xu, Jiang, Niu, Lin,
  Ramasubramanian, and Poovendran]{li2025small}
Yuetai Li, Xiang Yue, Zhangchen Xu, Fengqing Jiang, Luyao Niu, Bill~Yuchen Lin,
  Bhaskar Ramasubramanian, and Radha Poovendran.
\newblock Small models struggle to learn from strong reasoners.
\newblock In \emph{Findings of the Association for Computational Linguistics:
  ACL 2025}, pp.\  25366--25394, 2025{\natexlab{a}}.

\bibitem[Li et~al.(2025{\natexlab{b}})Li, Sun, Zhao, Min, Zeng, Wu, Cai, Wang,
  Yin, Chen, et~al.]{li2025staying}
Ziheng Li, Zexu Sun, Jinman Zhao, Erxue Min, Yongcheng Zeng, Hui Wu, Hengyi
  Cai, Shuaiqiang Wang, Dawei Yin, Xu~Chen, et~al.
\newblock Staying in the sweet spot: Responsive reasoning evolution via
  capability-adaptive hint scaffolding.
\newblock \emph{arXiv preprint arXiv:2509.06923}, 2025{\natexlab{b}}.

\bibitem[Liao et~al.(2026)Liao, Dong, Xu, Monz, and Bian]{liao2026self}
Baohao Liao, Hanze Dong, Xinxing Xu, Christof Monz, and Jiang Bian.
\newblock Self-hinting language models enhance reinforcement learning.
\newblock \emph{arXiv preprint arXiv:2602.03143}, 2026.

\bibitem[Liu et~al.(2026{\natexlab{a}})Liu, Farina, and Ozdaglar]{liu2026uft}
Mingyang Liu, Gabriele Farina, and Asuman Ozdaglar.
\newblock Uft: Unifying supervised and reinforcement fine-tuning.
\newblock \emph{Advances in Neural Information Processing Systems},
  38:\penalty0 101347--101383, 2026{\natexlab{a}}.

\bibitem[Liu et~al.(2025{\natexlab{a}})Liu, Li, Cao, Xie, Zhou, Dong, Ma, Han,
  and Zhang]{liu2025superrl}
Yihao Liu, Shuocheng Li, Lang Cao, Yuhang Xie, Mengyu Zhou, Haoyu Dong, Xiaojun
  Ma, Shi Han, and Dongmei Zhang.
\newblock Superrl: Reinforcement learning with supervision to boost language
  model reasoning.
\newblock \emph{arXiv preprint arXiv:2506.01096}, 2025{\natexlab{a}}.

\bibitem[Liu et~al.(2026{\natexlab{b}})Liu, Cao, Chen, and
  Honavar]{liu2026restoring}
Zehao Liu, Yuanpu Cao, Jinghui Chen, and Vasant~G Honavar.
\newblock Restoring the sweet spot: Pass-rate weighted self-distillation for
  llm reasoning.
\newblock \emph{arXiv preprint arXiv:2605.27765}, 2026{\natexlab{b}}.

\bibitem[Liu et~al.(2025{\natexlab{b}})Liu, Gong, Fu, Liu, Chen, Hu, Zhang,
  Liu, Zhang, and Tu]{liu2025ghpo}
Ziru Liu, Cheng Gong, Xinyu Fu, Yaofang Liu, Ran Chen, Shoubo Hu, Suiyun Zhang,
  Rui Liu, Qingfu Zhang, and Dandan Tu.
\newblock Ghpo: Adaptive guidance for stable and efficient llm reinforcement
  learning.
\newblock \emph{arXiv preprint arXiv:2507.10628}, 2025{\natexlab{b}}.

\bibitem[Lv et~al.(2025)Lv, Zuo, Sun, Liu, Wei, Chen, Zhu, Zhang, Wang, Ding,
  et~al.]{lv2025towards}
Xingtai Lv, Yuxin Zuo, Youbang Sun, Hongyi Liu, Yuntian Wei, Zhekai Chen,
  Xuekai Zhu, Kaiyan Zhang, Bingning Wang, Ning Ding, et~al.
\newblock Towards a unified view of large language model post-training.
\newblock \emph{arXiv preprint arXiv:2509.04419}, 2025.

\bibitem[Ma et~al.(2026)Ma, Liang, Qiang, Tang, Ma, Wong, Niu, Shen, He, Li,
  et~al.]{ma2026learning}
Lu~Ma, Hao Liang, Meiyi Qiang, Lexiang Tang, Xiaochen Ma, Zhen Wong, Junbo Niu,
  Chengyu Shen, Runming He, Yanhao Li, et~al.
\newblock Learning what reinforcement learning can't: Interleaved online
  fine-tuning for hardest questions.
\newblock In \emph{International Conference on Learning Representations},
  volume 2026, pp.\  80802--80821, 2026.

\bibitem[Nan et~al.(2025)Nan, Chen, Huang, Lu, Wang, Xie, Xiong, Zeng, Zhou,
  Li, et~al.]{nan2025ngrpo}
Gongrui Nan, Siye Chen, Jing Huang, Mengyu Lu, Dexun Wang, Chunmei Xie, Weiqi
  Xiong, Xianzhou Zeng, Qixuan Zhou, Yadong Li, et~al.
\newblock Ngrpo: Negative-enhanced group relative policy optimization.
\newblock \emph{arXiv preprint arXiv:2509.18851}, 2025.

\bibitem[Nath et~al.(2025)Nath, Lau, Gunjal, Sharma, Baharte, and
  Hendryx]{nath2025adaptive}
Vaskar Nath, Elaine Lau, Anisha Gunjal, Manasi Sharma, Nikhil Baharte, and Sean
  Hendryx.
\newblock Adaptive guidance accelerates reinforcement learning of reasoning
  models.
\newblock \emph{arXiv preprint arXiv:2506.13923}, 2025.

\bibitem[Nie et~al.(2026)Nie, Wu, Liu, Li, Lin, Zijian, Fan, Zheng, and
  Jang]{nie2026gradient}
Wenhua Nie, Jianan Wu, Junlin Liu, Ziwei Li, Zheng Lin, Zhang Zijian, Yilong
  Fan, Haoran Zheng, and Jyh-Shing~Roger Jang.
\newblock Gradient starvation in binary-reward grpo: Why group-mean centering
  fails and why the simplest fix works.
\newblock \emph{arXiv preprint arXiv:2605.07689}, 2026.

\bibitem[Phan et~al.(2023)Phan, Hoffman, Dohan, Douglas, Le, Parisi, Sountsov,
  Sutton, Vikram, and A~Saurous]{phan2023training}
Du~Phan, Matthew~Douglas Hoffman, David Dohan, Sholto Douglas, Tuan~Anh Le,
  Aaron Parisi, Pavel Sountsov, Charles Sutton, Sharad Vikram, and Rif
  A~Saurous.
\newblock Training chain-of-thought via latent-variable inference.
\newblock \emph{Advances in Neural Information Processing Systems},
  36:\penalty0 72819--72841, 2023.

\bibitem[Qu et~al.(2026)Qu, Setlur, Smith, Salakhutdinov, and
  Kumar]{qu2026pope}
Yuxiao Qu, Amrith Setlur, Virginia Smith, Ruslan Salakhutdinov, and Aviral
  Kumar.
\newblock Pope: Learning to reason on hard problems via privileged on-policy
  exploration.
\newblock \emph{arXiv preprint arXiv:2601.18779}, 2026.

\bibitem[Ramasubramanian et~al.(2026)Ramasubramanian, Arora, Tajwar, Zeng, Wu,
  Zhou, Xu, Feng, Song, Singh, et~al.]{ramasubramanian2026tail}
Shrinivas Ramasubramanian, Daman Arora, Fahim Tajwar, Guanning Zeng, Qingyang
  Wu, Zhongzhu Zhou, Chenfeng Xu, Haiwen Feng, Yuda Song, Aarti Singh, et~al.
\newblock Tail-likelihood reinforcement learning.
\newblock \emph{arXiv preprint arXiv:2609.02987}, 2026.

\bibitem[Ren et~al.(2024)Ren, Wu, and Liu]{ren2024learn}
Xuan Ren, Biao Wu, and Lingqiao Liu.
\newblock I learn better if you speak my language: Understanding the superior
  performance of fine-tuning large language models with llm-generated
  responses.
\newblock In \emph{Proceedings of the 2024 Conference on Empirical Methods in
  Natural Language Processing}, pp.\  10225--10245, 2024.

\bibitem[Ren et~al.(2026)Ren, Zhang, Xiao, Zhang, Huang, Qiu, Yu, Chen, and
  Liu]{ren2026recycling}
Yanwei Ren, Haotian Zhang, Likang Xiao, Xikai Zhang, Jiaxing Huang, Jiayan Qiu,
  Baosheng Yu, Quan Chen, and Liu Liu.
\newblock Recycling failures: Salvaging exploration in rlvr via fine-grained
  off-policy guidance.
\newblock \emph{arXiv preprint arXiv:2602.24110}, 2026.

\bibitem[Setlur et~al.(2024)Setlur, Garg, Geng, Garg, Smith, and
  Kumar]{setlur2024rl}
Amrith Setlur, Saurabh Garg, Xinyang Geng, Naman Garg, Virginia Smith, and
  Aviral Kumar.
\newblock Rl on incorrect synthetic data scales the efficiency of llm math
  reasoning by eight-fold.
\newblock \emph{Advances in Neural Information Processing Systems},
  37:\penalty0 43000--43031, 2024.

\bibitem[Setlur et~al.(2026)Setlur, Wang, Cohen, Rashidinejad, and
  Xie]{setlur2026reuse}
Amrith Setlur, Zijian Wang, Andrew Cohen, Paria Rashidinejad, and Sang~Michael
  Xie.
\newblock Reuse your flops: Scaling rl on hard problems by conditioning on very
  off-policy prefixes.
\newblock \emph{arXiv preprint arXiv:2601.18795}, 2026.

\bibitem[Shao et~al.(2024)Shao, Wang, Zhu, Xu, Song, Bi, Zhang, Zhang, Li, Wu,
  et~al.]{shao2024deepseekmath}
Zhihong Shao, Peiyi Wang, Qihao Zhu, Runxin Xu, Junxiao Song, Xiao Bi, Haowei
  Zhang, Mingchuan Zhang, YK~Li, Yang Wu, et~al.
\newblock Deepseekmath: Pushing the limits of mathematical reasoning in open
  language models.
\newblock \emph{arXiv preprint arXiv:2402.03300}, 2024.

\bibitem[Singh et~al.(2023)Singh, Co-Reyes, Agarwal, Anand, Patil, Garcia, Liu,
  Harrison, Lee, Xu, et~al.]{singh2023beyond}
Avi Singh, John~D Co-Reyes, Rishabh Agarwal, Ankesh Anand, Piyush Patil, Xavier
  Garcia, Peter~J Liu, James Harrison, Jaehoon Lee, Kelvin Xu, et~al.
\newblock Beyond human data: Scaling self-training for problem-solving with
  language models.
\newblock \emph{arXiv preprint arXiv:2312.06585}, 2023.

\bibitem[Su et~al.(2026)Su, Guan, Gu, Huang, and Wang]{su2026trust}
Mingyu Su, Jian Guan, Yuxian Gu, Minlie Huang, and Hongning Wang.
\newblock Trust-region adaptive policy optimization.
\newblock In \emph{International Conference on Learning Representations},
  volume 2026, pp.\  51806--51828, 2026.

\bibitem[Tajwar et~al.(2026)Tajwar, Zeng, Zhou, Song, Arora, Jiang, Schneider,
  Salakhutdinov, Feng, and Zanette]{tajwar2026maximum}
Fahim Tajwar, Guanning Zeng, Yueer Zhou, Yuda Song, Daman Arora, Yiding Jiang,
  Jeff Schneider, Ruslan Salakhutdinov, Haiwen Feng, and Andrea Zanette.
\newblock Maximum likelihood reinforcement learning.
\newblock \emph{arXiv preprint arXiv:2602.02710}, 2026.

\bibitem[Team et~al.(2025)Team, Du, Gao, Xing, Jiang, Chen, Li, Xiao, Du, Liao,
  et~al.]{team2025kimi}
Kimi Team, Angang Du, Bofei Gao, Bowei Xing, Changjiu Jiang, Cheng Chen, Cheng
  Li, Chenjun Xiao, Chenzhuang Du, Chonghua Liao, et~al.
\newblock Kimi k1. 5: Scaling reinforcement learning with llms.
\newblock \emph{arXiv preprint arXiv:2501.12599}, 2025.

\bibitem[Xi et~al.(2024)Xi, Chen, Hong, Jin, Zheng, He, Ding, Liu, Guo, Wang,
  et~al.]{xi2024training}
Zhiheng Xi, Wenxiang Chen, Boyang Hong, Senjie Jin, Rui Zheng, Wei He, Yiwen
  Ding, Shichun Liu, Xin Guo, Junzhe Wang, et~al.
\newblock Training large language models for reasoning through reverse
  curriculum reinforcement learning.
\newblock \emph{arXiv preprint arXiv:2402.05808}, 2024.

\bibitem[Xia et~al.(2026)Xia, Xu, Yao, McAuley, and He]{xia2026learning}
Yu~Xia, Canwen Xu, Zhewei Yao, Julian McAuley, and Yuxiong He.
\newblock Learning to hint for reinforcement learning.
\newblock \emph{arXiv preprint arXiv:2604.00698}, 2026.

\bibitem[Yan et~al.(2026)Yan, Li, Hu, Wang, Cui, Qu, Cheng, and
  Zhang]{yan2026learning}
Jianhao Yan, Yafu Li, Zican Hu, Zhi Wang, Ganqu Cui, Xiaoye Qu, Yu~Cheng, and
  Yue Zhang.
\newblock Learning to reason under off-policy guidance.
\newblock \emph{Advances in Neural Information Processing Systems},
  38:\penalty0 117157--117186, 2026.

\bibitem[Yang et~al.(2025)Yang, Li, Yang, Zhang, Hui, Zheng, Yu, Gao, Huang,
  Lv, et~al.]{yang2025qwen3}
An~Yang, Anfeng Li, Baosong Yang, Beichen Zhang, Binyuan Hui, Bo~Zheng, Bowen
  Yu, Chang Gao, Chengen Huang, Chenxu Lv, et~al.
\newblock Qwen3 technical report.
\newblock \emph{arXiv preprint arXiv:2505.09388}, 2025.

\bibitem[Yang et~al.(2026)Yang, Bai, Wu, Yang, Setlur, and Kumar]{yang2026int}
Matthew Yang, Hao Bai, Ian Wu, Gene Yang, Amrith Setlur, and Aviral Kumar.
\newblock Int: Self-proposed interventions enable credit assignment in llm
  reasoning.
\newblock In \emph{International Conference on Learning Representations},
  volume 2026, pp.\  85054--85091, 2026.

\bibitem[Zelikman et~al.(2022)Zelikman, Wu, Mu, and Goodman]{zelikman2022star}
Eric Zelikman, Yuhuai Wu, Jesse Mu, and Noah Goodman.
\newblock Star: Bootstrapping reasoning with reasoning.
\newblock \emph{Advances in Neural Information Processing Systems},
  35:\penalty0 15476--15488, 2022.

\bibitem[Zhang et~al.(2026{\natexlab{a}})Zhang, Lv, Li, Wang, Wang, Hu, and
  Yan]{zhang2026stephint}
Kaiyi Zhang, Ang Lv, Jinpeng Li, Yongbo Wang, Feng Wang, Haoyuan Hu, and Rui
  Yan.
\newblock Stephint: Multi-level stepwise hints enhance reinforcement learning
  to reason.
\newblock In \emph{Proceedings of the 64th Annual Meeting of the Association
  for Computational Linguistics (Volume 1: Long Papers)}, pp.\  37846--37864,
  2026{\natexlab{a}}.

\bibitem[Zhang et~al.(2026{\natexlab{b}})Zhang, Xie, Sun, Chen, Wang, Li, Ding,
  and Zhou]{zhang2026policy}
Wenhao Zhang, Yuexiang Xie, Yuchang Sun, Yanxi Chen, Guoyin Wang, Yaliang Li,
  Bolin Ding, and Jingren Zhou.
\newblock On-policy rl meets off-policy experts: Harmonizing supervised
  fine-tuning and reinforcement learning via dynamic weighting.
\newblock In \emph{International Conference on Learning Representations},
  volume 2026, pp.\  120693--120726, 2026{\natexlab{b}}.

\bibitem[Zhang et~al.(2025)Zhang, Zhang, Sun, Feng, Lu, Yang, and
  Meng]{zhang2025critique}
Xiaoying Zhang, Yipeng Zhang, Hao Sun, Kaituo Feng, Chaochao Lu, Chao Yang, and
  Helen Meng.
\newblock Critique-grpo: Advancing llm reasoning with natural language and
  numerical feedback.
\newblock \emph{arXiv preprint arXiv:2506.03106}, 2025.

\bibitem[Zhang et~al.(2026{\natexlab{c}})Zhang, Wu, Zhu, Tan, Yu, He, and
  Jia]{zhang2026scaf}
Xichen Zhang, Sitong Wu, Yinghao Zhu, Haoru Tan, Shaozuo Yu, Ziyi He, and Jiaya
  Jia.
\newblock Scaf-grpo: Scaffolded group relative policy optimization for
  enhancing llm reasoning.
\newblock In \emph{International Conference on Learning Representations},
  volume 2026, pp.\  131946--131974, 2026{\natexlab{c}}.

\bibitem[Zhang et~al.(2026{\natexlab{d}})Zhang, Huang, Li, Ni, Chen, and
  Oymak]{zhang2026bread}
Xuechen Zhang, Zijian Huang, Yingcong Li, Chenshun Ni, Jiasi Chen, and Samet
  Oymak.
\newblock Bread: Branched rollouts from expert anchors bridge sft \& rl for
  reasoning.
\newblock \emph{Advances in Neural Information Processing Systems},
  38:\penalty0 96726--96752, 2026{\natexlab{d}}.

\bibitem[Zhao et~al.(2026)Zhao, Xie, Liu, Huang, Pang, Chen, and
  Grover]{zhao2026self}
Siyan Zhao, Zhihui Xie, Mengchen Liu, Jing Huang, Guan Pang, Feiyu Chen, and
  Aditya Grover.
\newblock Self-distilled reasoner: On-policy self-distillation for large
  language models.
\newblock \emph{arXiv preprint arXiv:2601.18734}, 2026.

\bibitem[Zhou et~al.(2026{\natexlab{a}})Zhou, Li, Zhang, and
  Leqi]{zhou2026expo}
Ruiyang Zhou, Shuozhe Li, Amy Zhang, and Liu Leqi.
\newblock Expo: Unlocking hard reasoning with self-explanation-guided
  reinforcement learning.
\newblock \emph{Advances in Neural Information Processing Systems},
  38:\penalty0 90989--91016, 2026{\natexlab{a}}.

\bibitem[Zhou et~al.(2026{\natexlab{b}})Zhou, Liu, Wang, Du, Lin, Li, Wang, and
  Pang]{zhou2026variational}
Xiangxin Zhou, Zichen Liu, Haonan Wang, Chao Du, Min Lin, Chongxuan Li, Liang
  Wang, and Tianyu Pang.
\newblock Variational reasoning for language models.
\newblock In \emph{International Conference on Learning Representations},
  volume 2026, pp.\  154370--154407, 2026{\natexlab{b}}.

\bibitem[Zhou et~al.(2026{\natexlab{c}})Zhou, Cao, and Ye]{zhou2026correctness}
Yuhang Zhou, Yixin Cao, and Guangnan Ye.
\newblock From correctness to utility: Gain-based prefix evaluation for llm
  reasoning.
\newblock \emph{arXiv preprint arXiv:2606.07190}, 2026{\natexlab{c}}.

\end{thebibliography}

\clearpage
\appendix
\counterwithin{theorem}{section}

\section{Proofs for the Distance to the Ideal Distribution}
\label{app-proof}

This appendix proves Theorem~\ref{thm-one-attempt} and collects its consequences. Appendix~\ref{app-proof-setup} fixes the setup. Appendix~\ref{app-likelihood} derives the gradient identity in Equation~\eqref{eq-success-score} and the distance of training on the reference. Appendix~\ref{app-proof-theorem} describes the accepted trajectory and the reference surprisal, and then proves Theorem~\ref{thm-one-attempt}. Appendix~\ref{app-consequences} gives an equivalent reading of the closed form. Finally, Appendix~\ref{app-example} illustrates the theorem with a finite example.

\subsection{Setup}
\label{app-proof-setup}

We use the setup of Section~\ref{sec-one-attempt}. Fix a question $x$, its correct reference $r$, and the current model, and suppress the conditioning on $x$. The trajectory space is finite, and $\pi$ denotes the current policy $\pi_{\theta_{\mathrm{old}}}$. Moreover, $C$ is the set of correct trajectories, $p=\pi(C)$ is the success probability, and $\nu=\pi(\cdot\mid C)$ is the ideal distribution in Equation~\eqref{eq-target}. We assume $0<\pi(r)<1$. For a token sequence $h$, we write $\pi(h)$ for the probability that a trajectory sampled from $\pi$ begins with $h$, and $\pi(z\mid h)$ for the probability that the model continues $h$ with $z$. Since the continuation follows the same policy, including its termination rule, autoregressive factorization gives $\pi(h\oplus z)=\pi(h)\pi(z\mid h)$ for every complete trajectory $h\oplus z$.

For a prefix length $0\le t<|r|$, the reference splits as $r=h_t\oplus a_t$, and $Q_t$ is the distribution of the accepted trajectory in Equation~\eqref{eq-acquisition}. Besides $d_t$ and $S_t$ in Equation~\eqref{eq-local-quantities}, the proofs use the probabilities of an exact copy of the reference and of a correct candidate trajectory,
\[
u_t=\pi(a_t\mid h_t),\qquad
v_t=\Pr_{Z\sim\pi(\cdot\mid h_t)}\bigl(h_t\oplus Z\in C\bigr).
\]
Here an exact copy means equality of the complete token sequences, and different token sequences can still follow the same mathematical strategy. By definition, $S_t=-\log u_t$, and in particular $S_0=-\log\pi(r)$ is the reference surprisal of the empty prefix. Since autoregressive factorization gives $u_t=\pi(r)/\pi(h_t)\ge\pi(r)>0$, we have $d_t\le1-u_t<1$, and every $S_t$ is finite.

\subsection{The Ideal Distribution}
\label{app-likelihood}

We first justify the choice of the ideal distribution as the learning target in Section~\ref{sec-preliminaries}. Lemma~\ref{lem-gradient-identity} shows that learning the likelihood of its trajectories increases the success probability in expectation. Lemma~\ref{lem-reference-distance} then gives the distance of training on the reference, which is the first term in Theorem~\ref{thm-one-attempt}.

\begin{lemma}[Gradient identity]
\label{lem-gradient-identity}
Suppose that $\pi_\theta$ is differentiable in $\theta$ and assigns positive probability to every trajectory in a neighborhood of $\theta_{\mathrm{old}}$. Write $p_\theta=\pi_\theta(C)$. Then
\[
\mathbb E_{Y\sim\nu}\Bigl[\nabla_\theta\log\pi_\theta(Y)\big|_{\theta=\theta_{\mathrm{old}}}\Bigr]
=\nabla_\theta\log p_\theta\big|_{\theta=\theta_{\mathrm{old}}}.
\]
\end{lemma}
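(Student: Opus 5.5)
The plan is a direct computation using the finiteness of the trajectory space, which lets us swap expectation and gradient freely. First, I would write the expectation under $\nu$ explicitly as a finite sum. Using Equation~\eqref{eq-target} at $\theta_{\mathrm{old}}$, it becomes
\[
\mathbb E_{Y\sim\nu}\bigl[\nabla_\theta\log\pi_\theta(Y)\bigr]
=\sum_{y\in C}\frac{\pi_{\theta_{\mathrm{old}}}(y)}{p_{\theta_{\mathrm{old}}}}\,\nabla_\theta\log\pi_\theta(y)\Big|_{\theta=\theta_{\mathrm{old}}}.
\]
This expression is well defined for two reasons. Positivity of $\pi_\theta$ near $\theta_{\mathrm{old}}$ makes each $\log\pi_\theta(y)$ differentiable there. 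It also gives $p_{\theta_{\mathrm{old}}}\ge\pi_{\theta_{\mathrm{old}}}(r)>0$, since $r\in C$.

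Next, I would apply the log-derivative trick $\pi_\theta(y)\nabla_\theta\log\pi_\theta(y)=\nabla_\theta\pi_\theta(y)$ to each term. Pulling out the constant $1/p_{\theta_{\mathrm{old}}}$, the sum becomes
\[
\frac{1}{p_{\theta_{\mathrm{old}}}}\sum_{y\in C}\nabla_\theta\pi_\theta(y)\Big|_{\theta_{\mathrm{old}}}.
\]
Because $C$ is finite, the gradient commutes with the sum, so this equals $\nabla_\theta p_\theta|_{\theta_{\mathrm{old}}}/p_{\theta_{\mathrm{old}}}$.

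Finally, $p_\theta$ is a finite sum of differentiable positive functions near $\theta_{\mathrm{old}}$, so it is differentiable and positive there. The chain rule then identifies $\nabla_\theta p_\theta/p_\theta$ with $\nabla_\theta\log p_\theta$ at $\theta_{\mathrm{old}}$, which completes the argument.

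There is no real obstacle here. The only care needed is bookkeeping: the weights $\pi_{\theta_{\mathrm{old}}}(y)/p_{\theta_{\mathrm{old}}}$ are held fixed at $\theta_{\mathrm{old}}$, while the gradient acts only on $\log\pi_\theta(y)$. One must also verify that $p_\theta>0$ so the logarithm is defined, which follows from $r\in C$ and the positivity assumption.
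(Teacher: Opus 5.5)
Your proposal is correct and follows the same route as the paper's proof. Both write $p_\theta=\sum_{y\in C}\pi_\theta(y)$, differentiate term by term (valid since the trajectory space is finite), use $\nabla_\theta\pi_\theta=\pi_\theta\nabla_\theta\log\pi_\theta$, and identify the weights $\pi_{\theta_{\mathrm{old}}}(y)/p_{\theta_{\mathrm{old}}}$ with $\nu(y)$; the only difference is that you start from the expectation while the paper starts from $\nabla_\theta\log p_\theta$, which is immaterial.
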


\begin{proof}
Since the trajectory space is finite, differentiating $p_\theta=\sum_{y\in C}\pi_\theta(y)$ term by term gives
\[
\nabla_\theta\log p_\theta
=\frac{1}{p_\theta}\sum_{y\in C}\nabla_\theta\pi_\theta(y)
=\sum_{y\in C}\frac{\pi_\theta(y)}{p_\theta}\nabla_\theta\log\pi_\theta(y).
\]
At $\theta=\theta_{\mathrm{old}}$, the weight $\pi_\theta(y)/p_\theta$ of each $y\in C$ equals $\nu(y)$ by Equation~\eqref{eq-target}, which proves the claim.
\end{proof}

Here $\nu$ is fixed at the current model, and differentiation applies only to the trajectory log likelihood.

\begin{lemma}[Distance of training on the reference]
\label{lem-reference-distance}
Training on the reference has distance
\[
D_{\mathrm{KL}}(\delta_r\Vert\nu)=\log\frac{p}{\pi(r)}=S_0+\log p\in[0,S_0].
\]
In contrast, if $\nu$ assigns positive probability to a trajectory other than $r$, then $D_{\mathrm{KL}}(\nu\Vert\delta_r)=\infty$.
\end{lemma}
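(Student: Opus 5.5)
The proof is a direct computation from the definition of the KL divergence with a point mass, followed by elementary bounds. Since $\delta_r$ assigns probability one to $r$, the divergence reduces to a single term:
\[
D_{\mathrm{KL}}(\delta_r\Vert\nu)=\log\frac{1}{\nu(r)}.
\]
Because $r\in C$, Equation~\eqref{eq-target} gives $\nu(r)=\pi(r)/p$. Hence the distance equals $\log\bigl(p/\pi(r)\bigr)$. Using $S_0=-\log\pi(r)$ from the setup in Appendix~\ref{app-proof-setup}, this is $S_0+\log p$.

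For the range $[0,S_0]$, the lower bound follows from $\pi(r)\le\pi(C)=p$, since $r\in C$; this makes the logarithm nonnegative. The upper bound follows from $p\le1$, which gives $\log p\le0$. All quantities are finite because $0<\pi(r)$ and $p\ge\pi(r)>0$.

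For the reverse direction, suppose $\nu(y')>0$ for some $y'\ne r$. Then $\delta_r(y')=0$ while $\nu(y')>0$, so the summand $\nu(y')\log\bigl(\nu(y')/\delta_r(y')\bigr)$ is $+\infty$ under the usual conventions. Equivalently, $\nu$ is not absolutely continuous with respect to $\delta_r$. Every other summand is either zero or finite, so nothing can cancel this term and $D_{\mathrm{KL}}(\nu\Vert\delta_r)=\infty$.

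No step is a real obstacle. The only care needed is to state the convention $a\log(a/0)=\infty$ for $a>0$ and to confirm that $\nu$ is well defined, which holds since $p>0$.
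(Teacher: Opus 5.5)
Your proposal is correct and follows essentially the same route as the paper: both compute $D_{\mathrm{KL}}(\delta_r\Vert\nu)=-\log\nu(r)=\log(p/\pi(r))$, bound it using $\nu(r)\le1$ (your $\pi(r)\le p$ is the same fact) and $p\le1$, and obtain the infinite reverse divergence from the single term where $\delta_r(y)=0<\nu(y)$. Your remarks on the convention $a\log(a/0)=\infty$ and on the finiteness of the remaining summands are extra care that the paper leaves implicit.
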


\begin{proof}
Since $\delta_r$ assigns all probability to $r$ and $r\in C$, Equation~\eqref{eq-target} gives $D_{\mathrm{KL}}(\delta_r\Vert\nu)=-\log\nu(r)=\log(p/\pi(r))$. Substituting $S_0=-\log\pi(r)$ gives $S_0+\log p$. This quantity is nonnegative since $\nu(r)\le1$, and it is at most $S_0$ since $p\le1$. For the reverse direction, a trajectory $y\ne r$ with $\nu(y)>0$ has $\delta_r(y)=0$, which makes the term $\nu(y)\log(\nu(y)/\delta_r(y))$ infinite.
\end{proof}

Thus, the direction of KL in Section~\ref{sec-preliminaries} permits a finite comparison with training on the reference.

\subsection{Proof of Theorem~\ref{thm-one-attempt}}
\label{app-proof-theorem}

The closed form in Theorem~\ref{thm-one-attempt} combines two ingredients. Lemma~\ref{lem-accepted-distribution} describes the distribution of the accepted trajectory, and Lemma~\ref{lem-surprisal-monotone} relates the reference surprisal to the probability of the prefix.

\begin{lemma}[Distribution of the accepted trajectory]
\label{lem-accepted-distribution}
For every prefix length $t$, we have $d_t=v_t-u_t$ and $Q_t(r)=1-d_t$. Moreover, every trajectory $y\ne r$ with $Q_t(y)>0$ is correct and has the form $y=h_t\oplus z$, and it satisfies
\[
Q_t(y)=\pi(z\mid h_t),\qquad
\frac{Q_t(y)}{\nu(y)}=\frac{p}{\pi(h_t)}.
\]
These trajectories have total probability $d_t$ under $Q_t$.
\end{lemma}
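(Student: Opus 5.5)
The plan is to compute $Q_t$ directly from the acceptance rule in Equation~\eqref{eq-acquisition}, splitting the outcomes of the continuation $Z\sim\pi(\cdot\mid h_t)$ into three disjoint events. The first event is the exact copy $Z=a_t$, with probability $u_t=\pi(a_t\mid h_t)$. The second is a correct candidate different from $r$, with probability $d_t$ by Equation~\eqref{eq-local-quantities}. The third is an incorrect candidate, with probability $1-v_t$. The first two events partition $\{h_t\oplus Z\in C\}$, so $v_t=u_t+d_t$, which gives $d_t=v_t-u_t$. The accepted trajectory equals $r$ exactly on the first and third events, so $Q_t(r)=u_t+(1-v_t)=1-d_t$.

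Next I treat a trajectory $y\ne r$ with $Q_t(y)>0$. Such a $y$ cannot come from the fallback, which only outputs $r$. Hence $y=\widetilde Y_t=h_t\oplus z$ for some $z$ with $h_t\oplus z\in C$, so $y$ is correct and begins with $h_t$. Distinct $z$ give distinct token sequences, so the map $z\mapsto h_t\oplus z$ is injective. Therefore $Q_t(y)=\pi(z\mid h_t)$.

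For the ratio, the autoregressive factorization in Appendix~\ref{app-proof-setup} gives $\pi(y)=\pi(h_t)\pi(z\mid h_t)$. Since $y\in C$, Equation~\eqref{eq-target} gives $\nu(y)=\pi(h_t)\pi(z\mid h_t)/p$. Dividing yields $Q_t(y)/\nu(y)=p/\pi(h_t)$, which is well defined because $\pi(h_t)\ge\pi(r)>0$. Finally, the total mass of these trajectories is $1-Q_t(r)=d_t$, or equivalently the probability of the second event.

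The only delicate step is the bookkeeping around termination and uniqueness. The continuation must use the same termination rule, so that $h_t\oplus z$ is a complete trajectory whose probability factorizes as stated. I must also ensure that no $z\ne a_t$ produces $r$, which holds because equality is on complete token sequences. The setup explicitly grants both facts, so I do not expect a real obstacle.
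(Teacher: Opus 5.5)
Your proposal is correct and matches the paper's proof: both split $\{\widetilde Y_t\in C\}$ into the exact copy (probability $u_t$) and a correct replacement (probability $d_t$) to get $v_t=u_t+d_t$ and $Q_t(r)=u_t+(1-v_t)=1-d_t$. Both then read off $Q_t(y)=\pi(z\mid h_t)$ for correct $y=h_t\oplus z\ne r$ and use the autoregressive factorization with $\pi(h_t)\ge\pi(r)>0$ to obtain the ratio $p/\pi(h_t)$ and the remaining mass $d_t$.
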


\begin{proof}
The candidate trajectory $\widetilde Y_t=h_t\oplus Z$ equals $r$ exactly when $Z=a_t$, which has probability $u_t$. Since $r\in C$, the event $\{\widetilde Y_t\in C\}$ is the disjoint union of $\{\widetilde Y_t=r\}$ and $\{\widetilde Y_t\in C,\ \widetilde Y_t\ne r\}$. Therefore, $v_t=u_t+d_t$. By Equation~\eqref{eq-acquisition}, the accepted trajectory equals $r$ when the candidate trajectory is an exact copy or is incorrect. These two events are disjoint and have total probability $u_t+(1-v_t)=1-d_t$.

For $y\ne r$, the accepted trajectory equals $y$ only when $\widetilde Y_t=y$ and $y\in C$. Hence, $Q_t(y)>0$ requires $y\in C$ and $y=h_t\oplus z$ for some $z$, in which case $Q_t(y)=\pi(z\mid h_t)$. Autoregressive factorization gives $\pi(z\mid h_t)=\pi(y)/\pi(h_t)$, where $\pi(h_t)\ge\pi(r)>0$. Since $\nu(y)=\pi(y)/p$ for $y\in C$, the ratio $Q_t(y)/\nu(y)$ equals $p/\pi(h_t)$. Finally, these trajectories carry the remaining probability $1-Q_t(r)=d_t$.
\end{proof}

\begin{lemma}[Monotonicity of the reference surprisal]
\label{lem-surprisal-monotone}
For every prefix length $t$,
\[
S_0-S_t=-\log\pi(h_t),\qquad 0\le S_t\le S_0.
\]
Moreover, $S_t$ is nonincreasing in $t$.
\end{lemma}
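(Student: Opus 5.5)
The plan is to use autoregressive factorization of the reference, as set up in Appendix~\ref{app-proof-setup}. Since $r=h_t\oplus a_t$ is a complete trajectory, factorization gives $\pi(r)=\pi(h_t)\,\pi(a_t\mid h_t)=\pi(h_t)u_t$. Taking negative logarithms, $S_0=-\log\pi(r)=-\log\pi(h_t)-\log u_t=-\log\pi(h_t)+S_t$, so $S_0-S_t=-\log\pi(h_t)$. This requires $\pi(h_t)>0$, which holds because $\pi(h_t)\ge\pi(r)>0$: every trajectory beginning with $r$'s full token sequence begins with $h_t$.

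Next I bound $S_t$ on both sides. Since $u_t$ is a probability in $(0,1]$, we get $S_t=-\log u_t\ge0$. Similarly, $\pi(h_t)\le1$ gives $-\log\pi(h_t)\ge0$, and therefore $S_t\le S_0$.

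For monotonicity, take $t<t'$. The prefix $h_{t'}$ extends $h_t$, so the event that a sampled trajectory begins with $h_{t'}$ is contained in the event that it begins with $h_t$. Hence $\pi(h_{t'})\le\pi(h_t)$, and so $-\log\pi(h_t)\le-\log\pi(h_{t'})$. By the identity above this reads $S_0-S_t\le S_0-S_{t'}$, that is, $S_{t'}\le S_t$.

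The only delicate step is justifying the factorization $\pi(h\oplus z)=\pi(h)\pi(z\mid h)$ for the complete trajectory $r$ with $h=h_t$. This is exactly the convention stated in the setup, where the continuation follows the same policy and termination rule, so I would simply invoke it. The rest is elementary.
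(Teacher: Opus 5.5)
Your proposal is correct and follows the paper's proof essentially step for step: you factor $\pi(r)=\pi(h_t)\pi(a_t\mid h_t)$ and take negative logarithms, bound both terms using probabilities at most one, and get monotonicity from the containment of the event of beginning with $h_{t'}$ in the event of beginning with $h_t$. Your explicit check that $\pi(h_t)\ge\pi(r)>0$ is a small addition; the paper records this fact in its setup rather than inside the proof.
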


\begin{proof}
Autoregressive factorization gives $\pi(r)=\pi(h_t)\pi(a_t\mid h_t)$. Taking negative logarithms gives $S_0=-\log\pi(h_t)+S_t$. Since $\pi(a_t\mid h_t)\le1$ and $\pi(h_t)\le1$, both $S_t$ and $S_0-S_t$ are nonnegative. For $s\le t$, every trajectory that begins with $h_t$ also begins with $h_s$. Therefore, $\pi(h_t)\le\pi(h_s)$, which gives $S_t=S_0+\log\pi(h_t)\le S_s$.
\end{proof}

With these two lemmas, the proof of Theorem~\ref{thm-one-attempt} reduces to a direct computation.

\begin{restated}{thm-one-attempt}{Distance to the ideal distribution}
Under the setup in Appendix~\ref{app-proof-setup}, for every prefix length $t$,
\[
D_{\mathrm{KL}}(Q_t\Vert\nu)
=\underbrace{\log\frac{p}{\pi(r)}}_{D_{\mathrm{KL}}(\delta_r\Vert\nu)}
-d_tS_t+(1-d_t)\log(1-d_t).
\]
Moreover, every $t^*\in\arg\max_t d_tS_t$ satisfies
$D_{\mathrm{KL}}(Q_{t^*}\Vert\nu)\le\min_t D_{\mathrm{KL}}(Q_t\Vert\nu)+1/e$.
\end{restated}

\begin{proof}
Write $d=d_t$. By Lemma~\ref{lem-accepted-distribution}, $Q_t$ assigns probability $1-d$ to $r$, whose likelihood ratio is $Q_t(r)/\nu(r)=(1-d)p/\pi(r)$. It assigns the remaining probability $d$ to trajectories whose likelihood ratio is $p/\pi(h_t)$. Therefore,
\[
D_{\mathrm{KL}}(Q_t\Vert\nu)
=(1-d)\log\frac{(1-d)p}{\pi(r)}+d\log\frac{p}{\pi(h_t)}.
\]
Lemma~\ref{lem-surprisal-monotone} gives $\log\pi(h_t)=\log\pi(r)+S_t$, and hence $d\log(p/\pi(h_t))=d\log(p/\pi(r))-dS_t$. Collecting the two terms in $\log(p/\pi(r))$ proves the closed form. Its first term equals $D_{\mathrm{KL}}(\delta_r\Vert\nu)$ by Lemma~\ref{lem-reference-distance} and does not depend on $t$.

For the second statement, write $D_r=D_{\mathrm{KL}}(\delta_r\Vert\nu)$ and $c_t=(1-d_t)\log(1-d_t)$. Since $-a\log a\in[0,1/e]$ for $a\in[0,1]$, we have $c_t\in[-1/e,0]$. For every prefix length $t$, the maximality of $d_{t^*}S_{t^*}$ gives
\begin{align*}
D_{\mathrm{KL}}(Q_{t^*}\Vert\nu)
&=D_r-d_{t^*}S_{t^*}+c_{t^*}
\le D_r-d_{t^*}S_{t^*}
\le D_r-d_tS_t\\
&=D_{\mathrm{KL}}(Q_t\Vert\nu)-c_t
\le D_{\mathrm{KL}}(Q_t\Vert\nu)+\frac1e.
\end{align*}
Taking the minimum over the finitely many prefix lengths completes the proof.
\end{proof}

Since $d_tS_t\ge0$ and $(1-d_t)\log(1-d_t)\le0$, the closed form also gives $D_{\mathrm{KL}}(Q_t\Vert\nu)\le D_{\mathrm{KL}}(\delta_r\Vert\nu)$ for every prefix length $t$. Therefore, a continuation with fallback never moves the distribution of the accepted trajectory farther from the ideal distribution than training on the reference.

\subsection{Consequences of Theorem~\ref{thm-one-attempt}}
\label{app-consequences}

We now give an equivalent reading of the closed form. Corollary~\ref{cor-remaining-surprisal} rewrites the distance in terms of the reference content kept in the accepted trajectory.

\begin{corollary}[Reference content kept in the accepted trajectory]
\label{cor-remaining-surprisal}
Let $\bar h$ be the reference content kept in the accepted trajectory $Y\sim Q_t$, which is $r$ when $Y=r$ and the prefix $h_t$ otherwise. Then, for every prefix length $t$,
\[
D_{\mathrm{KL}}(Q_t\Vert\nu)
=\log p
+\underbrace{(1-d_t)S_0+d_t(S_0-S_t)}_{\mathbb E[-\log\pi(\bar h)]}
+(1-d_t)\log(1-d_t).
\]
\end{corollary}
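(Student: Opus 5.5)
The plan is to obtain Corollary~\ref{cor-remaining-surprisal} by rewriting the closed form of Theorem~\ref{thm-one-attempt}, and then to check that the middle group equals an expectation under $Q_t$.

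First, I expand the reference term. By Lemma~\ref{lem-reference-distance}, $\log(p/\pi(r))=\log p+S_0$. Substituting this into the closed form gives
\[
D_{\mathrm{KL}}(Q_t\Vert\nu)=\log p+S_0-d_tS_t+(1-d_t)\log(1-d_t).
\]
Splitting $S_0=(1-d_t)S_0+d_tS_0$ turns $S_0-d_tS_t$ into $(1-d_t)S_0+d_t(S_0-S_t)$. This is exactly the displayed middle group, so the algebraic identity follows at once.

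Second, I identify the middle group as $\mathbb E[-\log\pi(\bar h)]$.
\begin{itemize}
\item Lemma~\ref{lem-accepted-distribution} gives $Q_t(r)=1-d_t$. On this event $\bar h=r$, so $-\log\pi(\bar h)=-\log\pi(r)=S_0$.
\item On the complementary event, of probability $d_t$, the accepted trajectory is $h_t\oplus z\ne r$, so $\bar h=h_t$. Lemma~\ref{lem-surprisal-monotone} gives $-\log\pi(h_t)=S_0-S_t$.
\end{itemize}
Averaging over the two events gives $(1-d_t)S_0+d_t(S_0-S_t)$, which is the claimed expectation.

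The only step that needs care is that $\bar h$ is well defined when the event $Y=r$ occurs through an exact copy. In that case we set $\bar h=r$ by definition. This agrees with Lemma~\ref{lem-accepted-distribution}, whose mass $1-d_t$ on $r$ already includes exact copies, so no case is counted twice.
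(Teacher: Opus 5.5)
Your proposal is correct and follows the paper's proof essentially step for step. Like the paper, you substitute $\log(p/\pi(r))=\log p+S_0$ from Lemma~\ref{lem-reference-distance} and split $S_0=(1-d_t)S_0+d_tS_0$, then identify the middle group as $\mathbb E[-\log\pi(\bar h)]$ using $Q_t(r)=1-d_t$ from Lemma~\ref{lem-accepted-distribution} and $-\log\pi(h_t)=S_0-S_t$ from Lemma~\ref{lem-surprisal-monotone}.
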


\begin{proof}
Lemma~\ref{lem-reference-distance} gives $\log(p/\pi(r))=\log p+S_0$. Substituting this expression and $S_0=(1-d_t)S_0+d_tS_0$ into the closed form of Theorem~\ref{thm-one-attempt} gives the display. By Lemma~\ref{lem-accepted-distribution}, $\bar h=r$ with probability $1-d_t$ and $\bar h=h_t$ with probability $d_t$. Since $-\log\pi(r)=S_0$ and $-\log\pi(h_t)=S_0-S_t$ by Lemma~\ref{lem-surprisal-monotone}, the middle two terms equal $\mathbb E[-\log\pi(\bar h)]$.
\end{proof}

In other words, apart from $\log p$, which does not depend on the prefix, and a term in $[-1/e,0]$, the distance to the ideal distribution is the expected surprisal of the reference content that the model has not replaced. Both extremes of the prefix length keep nearly all of the reference surprisal $S_0$. A prefix covering almost the entire reference keeps it through the prefix, while an empty prefix on a hard question keeps it through fallback.

\subsection{A Finite Example}
\label{app-example}

This example shows that neither the success probability of the continuation nor the successful replacement probability alone determines which prefix gives the smaller distance. Consider five complete trajectories whose displayed strings omit a shared deterministic termination token,
$\mathtt H\oplus\mathtt r$, $\mathtt H\oplus\mathtt s$,
$\mathtt H\oplus\mathtt f$, $\mathtt O\oplus\mathtt t$, and
$\mathtt O\oplus\mathtt g$, with
respective probabilities $0.08$, $0.01$, $0.01$, $0.20$, and $0.70$.
The correct trajectories are $\mathtt H\oplus\mathtt r$,
$\mathtt H\oplus\mathtt s$, and $\mathtt O\oplus\mathtt t$, and the
reference is $\mathtt H\oplus\mathtt r$. The first tokens are
$\mathtt H$ and $\mathtt O$, with probabilities
$0.10$ and $0.90$, and the second-token probabilities are the corresponding
conditional probabilities. These probabilities define a finite autoregressive distribution.

\begin{center}
\begin{tabular}{lcccccc}
\toprule
Prefix & $v_t$ & $u_t$ & $d_t$ & $S_t$ & $d_tS_t$ & $D_{\mathrm{KL}}(Q_t\Vert\nu)$ \\
\midrule
Empty & $0.29$ & $0.08$ & $0.21$ & $2.525729$ & $0.530403$ & $0.571231$ \\
$\mathtt H$ & $0.90$ & $0.80$ & $0.10$ & $0.223144$ & $0.022314$ & $1.170715$ \\
\bottomrule
\end{tabular}
\end{center}

Training on the reference gives $D_{\mathrm{KL}}(\delta_r\Vert\nu)=1.287854$. Supplying $\mathtt H$ increases $v_t$ while making an exact copy more likely. In this setting, both the successful replacement probability $d_t$ and the reference surprisal $S_t$ decrease. Therefore, the empty prefix gives the larger $d_tS_t$ and the smaller distance despite the lower success probability of its continuation.

To isolate the role of the reference surprisal, keep the three probabilities under $\mathtt H$ unchanged and assign probabilities $0.04$ and $0.86$ to $\mathtt O\oplus\mathtt t$ and $\mathtt O\oplus\mathtt g$, respectively. The resulting comparison is
\begin{center}
\begin{tabular}{lcccccc}
\toprule
Prefix & $v_t$ & $u_t$ & $d_t$ & $S_t$ & $d_tS_t$ & $D_{\mathrm{KL}}(Q_t\Vert\nu)$ \\
\midrule
Empty & $0.13$ & $0.08$ & $0.05$ & $2.525729$ & $0.126286$ & $0.310493$ \\
$\mathtt H$ & $0.90$ & $0.80$ & $0.10$ & $0.223144$ & $0.022314$ & $0.368369$ \\
\bottomrule
\end{tabular}
\end{center}
Training on the reference now gives $D_{\mathrm{KL}}(\delta_r\Vert\nu)=0.485508$. Here supplying $\mathtt H$ increases both $v_t$ and $d_t$. However, the decrease in the reference surprisal outweighs the increase in $d_t$. Therefore, the empty prefix still gives the larger $d_tS_t$ and the smaller distance. Thus, even the successful replacement probability alone can favor a different prefix from the one preferred by Theorem~\ref{thm-one-attempt}. Both examples concern a fixed model and illustrate why the choice of prefix should account for $d_t$ and $S_t$ together.

\section{Proofs for Learning the Prefix Selector}
\label{app-selector}

This appendix proves Theorem~\ref{thm-selector} and analyzes the prefix selector during training. Appendix~\ref{app-feedback} fixes the setup and derives the properties of the selector feedback, and Appendix~\ref{app-selector-proof} proves Theorem~\ref{thm-selector}. Appendix~\ref{app-selector-sharing} examines the effects of sharing the selector across questions and of randomizing the guidance level. Finally, Appendix~\ref{app-selector-training} derives the expected update of the selector in the GRPO application as the model and the questions change, and Appendix~\ref{app-selector-noise} bounds the accumulated sampling noise.

\subsection{Setup and Selector Feedback}
\label{app-feedback}

Fix the current model, the references, and $\mathcal D_{\mathrm{guide}}$ as in Section~\ref{sec-shared-selection}. We assume that every question satisfies the setup of Appendix~\ref{app-proof-setup}, which includes $0<\pi_{\theta_{\mathrm{old}}}(r_x\mid x)<1$ and hence $0<S_{0,x}<\infty$. For a question $x$, we write $\nu_x$ for its ideal distribution. For a guidance level $\alpha$, we write $Q_{x,\alpha}$, $d_{x,\alpha}$, and $S_{x,\alpha}$ for the distribution of the accepted trajectory, the successful replacement probability, and the reference surprisal at the prefix length $t_x(\alpha)$. Here $t_x(\alpha)$ is the actual prefix length, including the boundary adjustment in Section~\ref{sec-method-guidance}. Therefore, the nominal level $\alpha$ can differ from the realized ratio $S_{x,\alpha}/S_{0,x}$. We first collect the properties of the selector feedback for a single question, for which we suppress $x$ as in Appendix~\ref{app-proof}.

\begin{lemma}[Selector feedback]
\label{lem-selector-feedback}
For every prefix length $t$, the selector feedback satisfies $W_t\in[0,1]$ and $\mathbb E[W_t]=d_tS_t/S_0$. Moreover, with $c_t=(1-d_t)\log(1-d_t)$,
\begin{equation}
\frac{D_{\mathrm{KL}}(Q_t\Vert\nu)}{S_0}
=\frac{D_{\mathrm{KL}}(\delta_r\Vert\nu)}{S_0}
-\mathbb E[W_t]+\frac{c_t}{S_0},
\qquad
\frac{c_t}{S_0}\in\left[-\frac{1}{eS_0},0\right],
\label{eq-normalized-distance}
\end{equation}
and the normalized distances satisfy $0\le D_{\mathrm{KL}}(Q_t\Vert\nu)/S_0\le D_{\mathrm{KL}}(\delta_r\Vert\nu)/S_0\le1$.
\end{lemma}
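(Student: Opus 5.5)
The plan is to read off each claim from the definition of $W_t$ in Equation~\eqref{eq-observation} together with Theorem~\ref{thm-one-attempt}, Lemma~\ref{lem-reference-distance}, and Lemma~\ref{lem-surprisal-monotone}.

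\textbf{Range and mean of the feedback.} By the setup, $0<\pi(r)<1$, so $0<S_0<\infty$ and the ratio $S_t/S_0$ is well defined. Lemma~\ref{lem-surprisal-monotone} gives $0\le S_t\le S_0$, so $S_t/S_0\in[0,1]$. Since the indicator takes values in $\{0,1\}$, it follows that $W_t\in[0,1]$. The factor $S_t/S_0$ is deterministic once the model and the reference are fixed, and the indicator has expectation $d_t$ by Equation~\eqref{eq-local-quantities}. Hence $\mathbb E[W_t]=d_tS_t/S_0$.

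\textbf{Normalized closed form.} Divide the closed form of Theorem~\ref{thm-one-attempt} by $S_0>0$ and substitute $d_tS_t/S_0=\mathbb E[W_t]$. This yields Equation~\eqref{eq-normalized-distance}. For the bound on $c_t/S_0$, note that $d_t<1$ (shown in Appendix~\ref{app-proof-setup}), so $1-d_t\in(0,1]$. The function $a\log a$ lies in $[-1/e,0]$ on this interval, so $c_t\in[-1/e,0]$, and dividing by $S_0$ gives the stated interval.

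\textbf{Chain of inequalities.} The lower bound $D_{\mathrm{KL}}(Q_t\Vert\nu)\ge0$ is nonnegativity of KL. The middle inequality follows from the remark after the proof of Theorem~\ref{thm-one-attempt}: both $-d_tS_t\le0$ and $c_t\le0$ hold. The last inequality uses Lemma~\ref{lem-reference-distance}, which gives $D_{\mathrm{KL}}(\delta_r\Vert\nu)\in[0,S_0]$; dividing by $S_0$ gives the bound of $1$.

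None of these steps is a real obstacle. The only points needing care are:
\begin{itemize}
\item checking that $S_0>0$, so the division is legitimate;
\item noting that $S_t/S_0$ is nonrandom, so it can be pulled out of the expectation.
\end{itemize}
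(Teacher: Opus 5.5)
Your proposal is correct and follows the paper's own proof essentially step for step. It reads off the range and mean of $W_t$ from Lemma~\ref{lem-surprisal-monotone}, divides the closed form of Theorem~\ref{thm-one-attempt} by $S_0$, bounds $c_t$ through $-a\log a\in[0,1/e]$, and closes the chain of inequalities with $\mathbb E[W_t]\ge0$, $c_t\le0$, and Lemma~\ref{lem-reference-distance}. Your explicit checks that $S_0>0$ and that the lower bound is just nonnegativity of KL are small clarifications the paper leaves implicit.
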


\begin{proof}
The indicator in Equation~\eqref{eq-observation} has expectation $d_t$ by Equation~\eqref{eq-local-quantities}. Lemma~\ref{lem-surprisal-monotone} gives $0\le S_t\le S_0$, and hence $W_t\in[0,1]$ and $\mathbb E[W_t]=d_tS_t/S_0$, which proves Equation~\eqref{eq-observation-bound}. Dividing the closed form of Theorem~\ref{thm-one-attempt} by $S_0$ gives Equation~\eqref{eq-normalized-distance}, and the range of $c_t$ follows from $-a\log a\in[0,1/e]$ for $a\in[0,1]$. Finally, since $\mathbb E[W_t]\ge0$ and $c_t\le0$, Equation~\eqref{eq-normalized-distance} gives $D_{\mathrm{KL}}(Q_t\Vert\nu)\le D_{\mathrm{KL}}(\delta_r\Vert\nu)$, and Lemma~\ref{lem-reference-distance} gives $0\le D_{\mathrm{KL}}(\delta_r\Vert\nu)\le S_0$.
\end{proof}

Thus, a larger expected feedback corresponds to a smaller normalized distance, up to the bounded last term. Since normalization divides $d_tS_t$ at every prefix of the same reference by the same $S_0$, it preserves the ranking of $d_tS_t$ across these prefixes. Because of the bounded last term, this ranking need not equal the ranking of the exact distances. Across questions, normalization expresses each distance relative to the reference surprisal of the empty prefix.

The ablation without surprisal weighting in Section~\ref{sec-ablations} replaces $W_t$ with the indicator of successful replacement, whose expectation is $d_t$. Since this indicator still excludes exact copies of the reference, comparing it with $W_t$ isolates the effect of weighting a successful replacement by the reference surprisal.

For the rest of this appendix, we write
\[
g_x(\alpha)=\mathbb E[W_{x,\alpha}]
=\frac{d_{x,\alpha}S_{x,\alpha}}{S_{0,x}}\in[0,1]
\]
for the expected selector feedback of question $x$ at level $\alpha$, where the equality follows from Lemma~\ref{lem-selector-feedback}.

\subsection{Proof of Theorem~\ref{thm-selector}}
\label{app-selector-proof}
\label{app-selector-objective}

Theorem~\ref{thm-selector} measures a selector by its average normalized distance and compares it with training on the reference. Written out, these distances are
\[
\mathcal L(\phi)=\mathbb E_{X\sim\mathcal D_{\mathrm{guide}},\,\alpha\sim\rho_\phi}
\left[\frac{D_{\mathrm{KL}}(Q_{X,\alpha}\Vert\nu_X)}{S_{0,X}}\right],
\qquad
\mathcal L_{\mathrm{ref}}=\mathbb E_{X\sim\mathcal D_{\mathrm{guide}}}
\left[\frac{D_{\mathrm{KL}}(\delta_{r_X}\Vert\nu_X)}{S_{0,X}}\right].
\]
By Lemma~\ref{lem-selector-feedback} and $g_x(\alpha)\in[0,1]$, the quantities $\mathcal L(\phi)$, $\mathcal L_{\mathrm{ref}}$, and $J(\phi)$ lie in $[0,1]$.

\begin{restated}{thm-selector}{Learning the prefix selector from feedback}
Fix the current model, the references, and $\mathcal D_{\mathrm{guide}}$, where every question satisfies the setup of Section~\ref{sec-target}. Let $\mathcal L(\phi)$ and $\mathcal L_{\mathrm{ref}}$ be the expectations of $D_{\mathrm{KL}}(Q_{X,\alpha}\Vert\nu_X)/S_{0,X}$ and $D_{\mathrm{KL}}(\delta_{r_X}\Vert\nu_X)/S_{0,X}$ over $X\sim\mathcal D_{\mathrm{guide}}$ and $\alpha\sim\rho_\phi$. Then
\[
J(\phi)\le\mathcal L_{\mathrm{ref}}-\mathcal L(\phi)\le J(\phi)+\mathbb E_X\left[\frac{1}{eS_{0,X}}\right].
\]
Moreover, for a differentiable prefix selector with positive probabilities and a baseline $b$ fixed before sampling,
\[
\mathbb E\left[(W_{X,\alpha}-b)\nabla_\phi\log\rho_\phi(\alpha)\right]
=\nabla_\phi J(\phi).
\]
\end{restated}

\begin{proof}
For the first statement, averaging Equation~\eqref{eq-normalized-distance} over $X\sim\mathcal D_{\mathrm{guide}}$ and $\alpha\sim\rho_\phi$ gives
\begin{equation}
\mathcal L(\phi)=\mathcal L_{\mathrm{ref}}-J(\phi)+\epsilon(\phi),
\qquad
-\mathbb E_X\frac{1}{eS_{0,X}}\le\epsilon(\phi)\le0,
\label{eq-shared-improvement}
\end{equation}
where $\epsilon(\phi)=\mathbb E[c_{X,\alpha}/S_{0,X}]$ and $c_{x,\alpha}$ is the last term of Theorem~\ref{thm-one-attempt} at the prefix length $t_x(\alpha)$. Lemma~\ref{lem-selector-feedback} bounds each $c_{x,\alpha}/S_{0,x}$, and rearranging Equation~\eqref{eq-shared-improvement} gives the first statement.

For the second statement, the level is sampled independently of the question. Condition on $X=x$. Since $\alpha\sim\rho_\phi$ and $\mathbb E[W_{x,\alpha}\mid\alpha]=g_x(\alpha)$,
\[
\mathbb E\left[(W_{x,\alpha}-b)\nabla_\phi\log\rho_\phi(\alpha)\mid X=x\right]
=\sum_{\alpha\in\mathcal A}(g_x(\alpha)-b)\nabla_\phi\rho_\phi(\alpha)
=\sum_{\alpha\in\mathcal A}g_x(\alpha)\nabla_\phi\rho_\phi(\alpha).
\]
The baseline term vanishes since $\sum_\alpha\nabla_\phi\rho_\phi(\alpha)=\nabla_\phi1=0$. Define $\mu_\alpha=\mathbb E_{X\sim\mathcal D_{\mathrm{guide}}}[g_X(\alpha)]\in[0,1]$, which does not depend on $\phi$. Then
\[
J(\phi)=\sum_{\alpha\in\mathcal A}\rho_\phi(\alpha)\mu_\alpha,
\qquad
\nabla_\phi J(\phi)=\sum_{\alpha\in\mathcal A}\mu_\alpha\nabla_\phi\rho_\phi(\alpha),
\]
where differentiation acts on a finite sum with fixed bounded coefficients. Averaging the conditional identity over $X\sim\mathcal D_{\mathrm{guide}}$ gives the same expression, which proves the second statement.
\end{proof}

The first statement leads to the following guarantee for a selector that maximizes $J$.

\begin{corollary}[Maximizing the objective]
\label{cor-selector-objective}
If $J(\phi^*)\ge J(\phi)$, then $\mathcal L(\phi^*)\le\mathcal L(\phi)+\mathbb E_X[1/(eS_{0,X})]$.
\end{corollary}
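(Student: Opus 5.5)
The corollary follows by chaining the two inequalities in the first statement of Theorem~\ref{thm-selector}, applied once at $\phi^*$ and once at $\phi$. Write $\epsilon_0=\mathbb E_X[1/(eS_{0,X})]$.

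First, apply the lower inequality $J(\phi^*)\le\mathcal L_{\mathrm{ref}}-\mathcal L(\phi^*)$ at $\phi^*$. Rearranged, it gives $\mathcal L(\phi^*)\le\mathcal L_{\mathrm{ref}}-J(\phi^*)$.

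Second, use the hypothesis $J(\phi^*)\ge J(\phi)$ to replace $-J(\phi^*)$ by the larger quantity $-J(\phi)$. This gives $\mathcal L(\phi^*)\le\mathcal L_{\mathrm{ref}}-J(\phi)$.

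Third, apply the upper inequality $\mathcal L_{\mathrm{ref}}-\mathcal L(\phi)\le J(\phi)+\epsilon_0$ at $\phi$. Rearranged, it reads $\mathcal L_{\mathrm{ref}}-J(\phi)\le\mathcal L(\phi)+\epsilon_0$.

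Combining the three steps yields $\mathcal L(\phi^*)\le\mathcal L(\phi)+\epsilon_0$, which is the claim. Equivalently, Equation~\eqref{eq-shared-improvement} gives
\[
\mathcal L(\phi^*)-\mathcal L(\phi)=J(\phi)-J(\phi^*)+\epsilon(\phi^*)-\epsilon(\phi),
\]
where the first difference is nonpositive by hypothesis. The second difference is at most $0-(-\epsilon_0)=\epsilon_0$, since both error terms lie in $[-\epsilon_0,0]$.

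The argument has no real obstacle. The only point to check is that all quantities are finite, so that rearranging the inequalities is legitimate. This holds because, as noted just before the restated theorem, $\mathcal L$, $\mathcal L_{\mathrm{ref}}$, and $J$ all lie in $[0,1]$. Note also that $\epsilon_0$ may be infinite if $S_{0,X}$ is not bounded away from zero; in that case the bound holds trivially.
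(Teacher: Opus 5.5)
Your proof is correct and is essentially the paper's own argument: it applies the lower bound of Theorem~\ref{thm-selector} at $\phi^*$, uses $J(\phi^*)\ge J(\phi)$, and applies the upper bound at $\phi$. The paper writes this as the single chain $\mathcal L(\phi^*)\le\mathcal L_{\mathrm{ref}}-J(\phi^*)\le\mathcal L_{\mathrm{ref}}-J(\phi)\le\mathcal L(\phi)+\mathbb E_X[1/(eS_{0,X})]$, taken from Equation~\eqref{eq-shared-improvement}.
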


\begin{proof}
By Equation~\eqref{eq-shared-improvement}, $J(\phi^*)\ge J(\phi)$ gives
\[
\mathcal L(\phi^*)\le\mathcal L_{\mathrm{ref}}-J(\phi^*)
\le\mathcal L_{\mathrm{ref}}-J(\phi)
\le\mathcal L(\phi)+\mathbb E_X\frac{1}{eS_{0,X}}.
\qedhere
\]
\end{proof}

Thus, a larger objective corresponds to a smaller average normalized distance, up to a bounded correction. Since the correction $\epsilon(\phi)$ can change with the selector, $J$ and $\mathcal L$ can rank selectors differently within this bound. The bound is informative only when $\mathbb E_X[1/S_{0,X}]$ is small, and it can be large when some reference surprisals are close to zero.

In Section~\ref{sec-method-update}, the baseline is a running average of past feedback. For such a baseline, the same argument applies after conditioning on the history and the current question. Given this information, the baseline is fixed, the level still follows $\rho_\phi$, and its feedback still has conditional mean $g_x(\alpha)$. Moreover, the baseline update in Appendix~\ref{app-selector-training} keeps $b\in[0,1]$, which ensures integrability.

\subsection{Sharing and Randomizing Guidance Levels}
\label{app-selector-sharing}

The prefix selector shares one distribution over guidance levels across questions and samples a level for each continuation. We now examine both choices. Proposition~\ref{prop-sharing-cost} bounds the cost of sharing within the objective $J$. Proposition~\ref{prop-selector-mixture} then compares the distance after randomizing the level with the average over levels in $J$.

For any distribution $\rho$ over $\mathcal A$, write $J(\rho)=\sum_\alpha\rho(\alpha)\mu_\alpha$ with $\mu_\alpha$ from the proof of Theorem~\ref{thm-selector}, which gives $J(\rho_\phi)=J(\phi)$. The best shared distribution has value $J_{\mathrm{shared}}^*=\max_\alpha\mu_\alpha$. In contrast, an oracle that knows $g_x$ for every question and chooses a level for each question separately has value $J_{\mathrm{individual}}^*=\mathbb E_X\max_\alpha g_X(\alpha)$.

\begin{proposition}[Cost of sharing]
\label{prop-sharing-cost}
For every distribution $\rho$ over $\mathcal A$,
\begin{equation}
J_{\mathrm{individual}}^*-J(\rho)
=\bigl(J_{\mathrm{individual}}^*-J_{\mathrm{shared}}^*\bigr)
+\bigl(J_{\mathrm{shared}}^*-J(\rho)\bigr),
\label{eq-sharing-cost}
\end{equation}
where both terms are nonnegative. Moreover, the prefix selector in Equation~\eqref{eq-selector} satisfies
\[
\sup_\phi J(\phi)
=(1-\varepsilon_{\mathrm{sel}})\max_\alpha\mu_\alpha
+\frac{\varepsilon_{\mathrm{sel}}}{|\mathcal A|}\sum_\alpha\mu_\alpha.
\]
\end{proposition}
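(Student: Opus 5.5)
The decomposition in Equation~\eqref{eq-sharing-cost} is an algebraic identity: adding and subtracting $J_{\mathrm{shared}}^*$ gives it. The substance is therefore the nonnegativity of the two terms, and I will argue each by a pointwise comparison of maxima and averages using $J(\rho)=\sum_\alpha\rho(\alpha)\mu_\alpha$ and $\mu_\alpha=\mathbb E_X[g_X(\alpha)]$ from the proof of Theorem~\ref{thm-selector}.

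For the first term, note that for every fixed level $\alpha'$ and every question $x$ we have $\max_\alpha g_x(\alpha)\ge g_x(\alpha')$. Taking expectations over $X\sim\mathcal D_{\mathrm{guide}}$ gives $J_{\mathrm{individual}}^*\ge\mu_{\alpha'}$ for every $\alpha'$. Maximizing over $\alpha'$ then yields $J_{\mathrm{individual}}^*\ge J_{\mathrm{shared}}^*$. Integrability is automatic, since $g_x(\alpha)\in[0,1]$ by Lemma~\ref{lem-selector-feedback} and $\mathcal A$ is finite.

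For the second term, $J(\rho)$ is a convex combination of the $\mu_\alpha$. Hence $J(\rho)\le\max_\alpha\mu_\alpha=J_{\mathrm{shared}}^*$. This is also why $J_{\mathrm{shared}}^*$, defined as $\max_\alpha\mu_\alpha$, equals the best value over all distributions: a point mass on a maximizing level attains it.

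For the supremum formula, I substitute Equation~\eqref{eq-selector} into $J(\phi)=\sum_\alpha\rho_\phi(\alpha)\mu_\alpha$. This gives
\[
J(\phi)=(1-\varepsilon_{\mathrm{sel}})\sum_\alpha\sigma_\phi(\alpha)\mu_\alpha+\frac{\varepsilon_{\mathrm{sel}}}{|\mathcal A|}\sum_\alpha\mu_\alpha,
\]
where $\sigma_\phi$ is the softmax. The second part does not depend on $\phi$, so the task reduces to showing $\sup_\phi\sum_\alpha\sigma_\phi(\alpha)\mu_\alpha=\max_\alpha\mu_\alpha$.

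The upper bound again follows from the convex-combination argument. For the lower bound, fix a maximizing level $\alpha^*$, set $\phi_{\alpha^*}=M$ and all other logits to $0$, and let $M\to\infty$. Then $\sigma_\phi(\alpha^*)\to1$ and the sum converges to $\mu_{\alpha^*}$.

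The only delicate point I expect is this last step. The supremum is generally not attained, because the softmax never places full mass on one level, so the argument must be phrased as a limit rather than as a maximizer. Ties among maximizing levels cause no trouble, since any one of them can be chosen as $\alpha^*$.
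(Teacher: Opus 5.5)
Your proposal is correct and follows essentially the same route as the paper: the decomposition is an identity, the first term is nonnegative by $\max_\alpha\mathbb E_X[g_X(\alpha)]\le\mathbb E_X\max_\alpha g_X(\alpha)$, the second by the convex-combination bound, and the supremum by splitting off the uniform component and letting the softmax concentrate on a maximizing level. Your explicit limit construction with logits $M\to\infty$ just spells out the paper's ``approaches this value'' step.
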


\begin{proof}
The decomposition is an identity. The first term is nonnegative since $\max_\alpha\mathbb E_X[g_X(\alpha)]\le\mathbb E_X\max_\alpha g_X(\alpha)$. The second term is nonnegative since $J(\rho)$ is a convex combination of the values $\mu_\alpha$. For the selector in Equation~\eqref{eq-selector}, write $q_\phi$ for its softmax component. Then $J(\phi)=(1-\varepsilon_{\mathrm{sel}})\sum_\alpha q_\phi(\alpha)\mu_\alpha+(\varepsilon_{\mathrm{sel}}/|\mathcal A|)\sum_\alpha\mu_\alpha$. The first sum is at most $\max_\alpha\mu_\alpha$, and it approaches this value as $q_\phi$ concentrates on a maximizing level.
\end{proof}

The first term in Equation~\eqref{eq-sharing-cost} measures the cost of using the same distribution over levels for every question. It is zero if one level maximizes $g_x$ for almost every question. The second term measures the gap between the current selector and the best shared distribution. If all $\mu_\alpha$ are equal, every shared distribution has the same objective even when individual questions prefer different levels. Moreover, the uniform component of the selector has an explicit cost in the current objective, while it keeps every level observed as the model changes.

Through Theorem~\ref{thm-selector}, the objective $J$ corresponds to the average of the normalized distance over the selected levels. However, sampling a level and then making one continuation produces a single distribution of the accepted trajectory, which mixes the distributions for different levels. For a fixed question $x$ and a distribution $\rho$ over $\mathcal A$, this mixture is $Q_{x,\rho}=\sum_\alpha\rho(\alpha)Q_{x,\alpha}$. Let $I_x$ be the mutual information between the sampled level and the accepted trajectory, and let $\mathcal H(\rho)=-\sum_\alpha\rho(\alpha)\log\rho(\alpha)$ be the entropy of $\rho$.

\begin{proposition}[Randomizing the guidance level]
\label{prop-selector-mixture}
For every question $x$ and distribution $\rho$ over $\mathcal A$,
\[
D_{\mathrm{KL}}(Q_{x,\rho}\Vert\nu_x)
=\sum_\alpha\rho(\alpha)D_{\mathrm{KL}}(Q_{x,\alpha}\Vert\nu_x)-I_x,
\]
where $0\le I_x\le\mathcal H(\rho)$.
Consequently, with $j_x(\rho)=\sum_\alpha\rho(\alpha)g_x(\alpha)$,
\[
\frac{D_{\mathrm{KL}}(\delta_{r_x}\Vert\nu_x)}{S_{0,x}}-j_x(\rho)-\frac{1/e+\mathcal H(\rho)}{S_{0,x}}
\le\frac{D_{\mathrm{KL}}(Q_{x,\rho}\Vert\nu_x)}{S_{0,x}}
\le\frac{D_{\mathrm{KL}}(\delta_{r_x}\Vert\nu_x)}{S_{0,x}}-j_x(\rho).
\]
\end{proposition}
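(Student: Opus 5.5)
The plan is to treat the sampled level $\alpha\sim\rho$ and the accepted trajectory $Y$ as a joint distribution $P(\alpha,y)=\rho(\alpha)Q_{x,\alpha}(y)$, whose $Y$-marginal is $Q_{x,\rho}$. The first identity is then the standard compensation identity for KL divergence of a mixture. For each $\alpha$ we write
\[
D_{\mathrm{KL}}(Q_{x,\alpha}\Vert\nu_x)=\sum_y Q_{x,\alpha}(y)\log\frac{Q_{x,\alpha}(y)}{Q_{x,\rho}(y)}+\sum_y Q_{x,\alpha}(y)\log\frac{Q_{x,\rho}(y)}{\nu_x(y)}.
\]
Averaging over $\rho$, the first sum becomes $\sum_\alpha\rho(\alpha)D_{\mathrm{KL}}(Q_{x,\alpha}\Vert Q_{x,\rho})=I_x$. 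The second sum becomes $D_{\mathrm{KL}}(Q_{x,\rho}\Vert\nu_x)$. All terms are finite: every $Q_{x,\alpha}$ is supported on $C_x$ by Lemma~\ref{lem-accepted-distribution}, $\nu_x$ is positive on the finite support, and $Q_{x,\alpha}\ll Q_{x,\rho}$ whenever $\rho(\alpha)>0$, while levels with $\rho(\alpha)=0$ contribute nothing.

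For the bounds on $I_x$, nonnegativity is the nonnegativity of KL (equivalently, of mutual information). The upper bound follows from $I_x=\mathcal H(\alpha)-\mathcal H(\alpha\mid Y)\le\mathcal H(\alpha)=\mathcal H(\rho)$, since conditional entropy of a discrete variable is nonnegative. This is the step I would spell out most carefully, because it is where finiteness of $\mathcal A$ and the discrete setting matter. In particular, distinct levels may map to the same prefix length $t_x(\alpha)$, but that only lowers $I_x$ and does not affect the bound.

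For the normalized consequence, I would divide the identity by $S_{0,x}$ and substitute Equation~\eqref{eq-normalized-distance} of Lemma~\ref{lem-selector-feedback} for each level. Averaging over $\rho$ gives
\[
\frac{D_{\mathrm{KL}}(Q_{x,\rho}\Vert\nu_x)}{S_{0,x}}=\frac{D_{\mathrm{KL}}(\delta_{r_x}\Vert\nu_x)}{S_{0,x}}-j_x(\rho)+\sum_\alpha\rho(\alpha)\frac{c_{x,\alpha}}{S_{0,x}}-\frac{I_x}{S_{0,x}}.
\]
The upper bound follows by dropping the two nonpositive terms: $c_{x,\alpha}\le0$ and $I_x\ge0$. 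The lower bound follows from $c_{x,\alpha}/S_{0,x}\ge-1/(eS_{0,x})$ and $I_x\le\mathcal H(\rho)$.

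I expect no step to be a serious obstacle. The main care point is verifying the compensation identity with supports handled correctly. The rest is routine substitution from Theorem~\ref{thm-one-attempt} and Lemma~\ref{lem-selector-feedback}.
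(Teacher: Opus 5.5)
Your proof is correct and follows the paper's argument essentially step for step. Both split the log ratio through $Q_{x,\rho}$ to obtain the compensation identity, identify $\sum_\alpha\rho(\alpha)D_{\mathrm{KL}}(Q_{x,\alpha}\Vert Q_{x,\rho})=I_x$ with $0\le I_x\le\mathcal H(\rho)$, and then divide by $S_{0,x}$ and apply Equation~\eqref{eq-normalized-distance} level by level; your support check and the explicit step $I_x=\mathcal H(\rho)-\mathcal H(\alpha\mid Y)$ only spell out what the paper states briefly.
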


\begin{proof}
Every $Q_{x,\alpha}$ has finite distance by Lemma~\ref{lem-selector-feedback}. Splitting the log probability ratio through $Q_{x,\rho}$ and summing over trajectories with positive probability gives
\begin{align*}
\sum_\alpha\rho(\alpha)D_{\mathrm{KL}}(Q_{x,\alpha}\Vert\nu_x)
&=\sum_{\alpha,y}\rho(\alpha)Q_{x,\alpha}(y)
\left[\log\frac{Q_{x,\alpha}(y)}{Q_{x,\rho}(y)}
+\log\frac{Q_{x,\rho}(y)}{\nu_x(y)}\right]\\
&=I_x+D_{\mathrm{KL}}(Q_{x,\rho}\Vert\nu_x),
\end{align*}
since $\sum_\alpha\rho(\alpha)D_{\mathrm{KL}}(Q_{x,\alpha}\Vert Q_{x,\rho})$ equals the mutual information $I_x$. The mutual information is nonnegative and at most the entropy of the level, which gives $0\le I_x\le\mathcal H(\rho)$. For the bounds, divide the identity by $S_{0,x}$ and apply Equation~\eqref{eq-normalized-distance} to each level. The upper bound uses $I_x\ge0$ and $c_{x,\alpha}\le0$, and the lower bound uses $I_x\le\mathcal H(\rho)$ and $c_{x,\alpha}\ge-1/e$.
\end{proof}

The mutual information is zero when all distributions with positive weight coincide. Otherwise, randomizing the level gives a smaller distance than the average over levels. This construction returns one accepted trajectory from one continuation, and Appendix~\ref{app-candidate-selection} analyzes acceptance after multiple continuations.

\subsection{Selector Updates in the GRPO Application}
\label{app-selector-training}

Theorem~\ref{thm-selector} fixes the model, the references, and $\mathcal D_{\mathrm{guide}}$. We now specify $\mathcal D_{\mathrm{guide}}$ and the baseline in the GRPO application of Section~\ref{sec-method-learning}, and then derive the expected update of the selector for a batch and across training iterations.

\noindent\textbf{Questions selected for reference guidance.}
Fix the current model and a reference $r_x$ for each question. Since the $G$ responses of a group are sampled independently, the group is an all-failure group with probability $f_x=(1-p_{\theta_{\mathrm{old}}}(x))^G$. For $X\sim\mathcal D$, define $\zeta=\mathbb E_{\mathcal D}[f_X]$. When $\zeta>0$, the questions that receive reference guidance follow
\begin{equation}
\mathcal D_{\mathrm{guide}}(x)=\frac{\mathcal D(x)f_x}{\zeta}.
\label{eq-rescue-distribution}
\end{equation}
Given the question, the guidance levels and continuations are sampled independently of the group. Therefore, restricting guidance to questions with an all-failure group changes their relative weights while preserving $\mathbb E[W_{x,\alpha}]=g_x(\alpha)$ for each question and level. If $\zeta=0$, no question receives guidance and the selector update is zero almost surely.

\noindent\textbf{Baseline and update order.}
The selector starts with $\phi=0$ and $b=0$. Every term in Equation~\eqref{eq-selector-update} uses the same selector parameters and baseline, which are fixed before the batch is collected. After a nonempty batch of continuations, the baseline is updated by
\begin{equation}
b\leftarrow(1-\beta)b+\frac{\beta}{|\mathcal U|}\sum_{j\in\mathcal U}W_j,
\label{eq-selector-baseline}
\end{equation}
where $0<\beta\le1$. Since this update is a convex combination of values in $[0,1]$, induction gives $b\in[0,1]$ at every iteration. Centering the updated logits by subtracting their mean leaves the selection probabilities unchanged. An empty batch of continuations leaves $\phi$ and $b$ unchanged. Table~\ref{tab-configuration} specifies the numerical hyperparameters. Reference surprisals, verification outcomes, and model parameters are fixed during the selector update. This update precedes the policy update, and both use the continuations generated by the same current model. All continuations supply feedback, including $K$ zero feedback values when no continuation gives a successful replacement and \method falls back to the reference.

\noindent\textbf{Expected batch update.}
Let $\Psi$ denote the summed direction in Equation~\eqref{eq-selector-update} before multiplication by the step size,
\[
\Psi=\sum_{j\in\mathcal U}(W_j-b)\nabla_\phi\log\rho_\phi(\alpha_j),
\]
with $\Psi=0$ when $\mathcal U$ is empty.

\begin{proposition}[Expected batch update]
\label{prop-batch-update}
Consider a batch of $n$ question positions, each with marginal distribution $\mathcal D$. Suppose that the model, the selector, and the baseline are fixed during collection, and that every all-failure group receives $K$ continuations whose levels and feedback follow the laws above. Then
\begin{equation}
\mathbb E[\Psi]=nK\zeta\nabla_\phi J(\phi),
\label{eq-batch-selector-gradient}
\end{equation}
where the right-hand side is zero when $\zeta=0$.
\end{proposition}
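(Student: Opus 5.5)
I will write $\Psi$ as a sum over question positions and continuation slots, then apply linearity of expectation together with the conditional identity from the proof of Theorem~\ref{thm-selector}. For each position $i\in\{1,\dots,n\}$, let $X_i\sim\mathcal D$ be its question and $F_i$ the indicator that its group is an all-failure group. Then
\[
\Psi=\sum_{i=1}^n F_i\sum_{k=1}^K(W_{i,k}-b)\nabla_\phi\log\rho_\phi(\alpha_{i,k}),
\]
where $\alpha_{i,k}$ and $W_{i,k}$ are the level and feedback of the $k$-th continuation at position $i$. This representation also covers the empty batch: if every $F_i=0$, the sum is zero, which matches the convention $\Psi=0$. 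Since $\phi$ and $b$ are fixed before collection, each summand is bounded: $W\in[0,1]$, $b\in[0,1]$, and $\nabla_\phi\log\rho_\phi$ takes finitely many values. Hence every expectation below is finite.

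Next, I fix one position $i$ and one slot $k$ and condition on $X_i=x$ and $F_i=1$. By the setup, given the question the level is drawn from $\rho_\phi$ independently of the group, and $\mathbb E[W_{i,k}\mid\alpha_{i,k},X_i=x,F_i=1]=g_x(\alpha_{i,k})$. The computation in the proof of Theorem~\ref{thm-selector} then gives the conditional expectation
\[
\sum_\alpha(g_x(\alpha)-b)\nabla_\phi\rho_\phi(\alpha)=\sum_\alpha g_x(\alpha)\nabla_\phi\rho_\phi(\alpha),
\]
because the baseline term cancels through $\sum_\alpha\nabla_\phi\rho_\phi(\alpha)=0$. Since $\Pr(F_i=1\mid X_i=x)=f_x$, taking the expectation over $F_i$ and $X_i\sim\mathcal D$ yields
\[
\mathbb E_{X\sim\mathcal D}\Bigl[f_X\sum_\alpha g_X(\alpha)\nabla_\phi\rho_\phi(\alpha)\Bigr].
\]

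If $\zeta>0$, Equation~\eqref{eq-rescue-distribution} lets me rewrite this as
\[
\zeta\,\mathbb E_{X\sim\mathcal D_{\mathrm{guide}}}\Bigl[\sum_\alpha g_X(\alpha)\nabla_\phi\rho_\phi(\alpha)\Bigr]=\zeta\sum_\alpha\mu_\alpha\nabla_\phi\rho_\phi(\alpha)=\zeta\nabla_\phi J(\phi),
\]
where the last equality is the expression for $\nabla_\phi J$ from the proof of Theorem~\ref{thm-selector}. If $\zeta=0$, then $f_X=0$ almost surely, so the expression is zero. Summing over the $n$ positions and $K$ slots gives $nK\zeta\nabla_\phi J(\phi)$.

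The main point needing care is the conditioning step: the positions may be dependent (only marginals are assumed), and $F_i$ is correlated with $X_i$. Linearity of expectation removes the need for any independence across positions. Within a position, the key fact is that, given $X_i$, the levels and continuations are drawn independently of the group outcome $F_i$. This makes the per-question identity valid on the event $\{F_i=1\}$ and produces the reweighting by $f_x$.
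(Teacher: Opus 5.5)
Your proposal is correct and follows the paper's own proof essentially step for step. Both decompose $\Psi$ into all-failure-indicator-weighted terms over positions and continuation slots, apply the per-question identity from the proof of Theorem~\ref{thm-selector} on the all-failure event (where the baseline cancels), reweight by $f_x$ through Equation~\eqref{eq-rescue-distribution}, and sum over positions by linearity without needing independence across positions; your explicit integrability remark and your handling of $\zeta=0$ via $f_X=0$ almost surely are only cosmetic differences.
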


\begin{proof}
Write $\Psi$ as a sum over question positions $i$ and continuations $\ell=1,\ldots,K$ of the terms $\mathbf1\{\text{group }i\text{ is all-failure}\}(W_{i\ell}-b)\nabla_\phi\log\rho_\phi(\alpha_{i\ell})$. Conditional on the question $x$ at position $i$ and on its all-failure event, each term has expectation $\sum_\alpha g_x(\alpha)\nabla_\phi\rho_\phi(\alpha)$ by the proof of Theorem~\ref{thm-selector}. Therefore, position $i$ contributes
\[
\sum_x\mathcal D(x)f_xK\sum_\alpha g_x(\alpha)\nabla_\phi\rho_\phi(\alpha)
=K\zeta\sum_x\mathcal D_{\mathrm{guide}}(x)\sum_\alpha g_x(\alpha)\nabla_\phi\rho_\phi(\alpha)
=K\zeta\nabla_\phi J(\phi),
\]
where the first equality uses Equation~\eqref{eq-rescue-distribution}. Summing over the $n$ positions by linearity proves the claim. When $\zeta=0$, no group is an all-failure group almost surely, and hence $\Psi=0$ almost surely.
\end{proof}

Since the proof uses only the marginal laws, it requires no independence between question positions or between continuations. Holding $J$ fixed, the factors $n$, $K$, and $\zeta$ only scale the expected update at a fixed step size. However, changes in the all-failure probabilities also change the question weights in $J$ and hence the update direction.

\noindent\textbf{Changing models and questions.}
Across training iterations, the model, the question distribution, and the selector all change. Let $(\mathcal F_k)_k$ be the increasing filtration of the training history, where $\mathcal F_k$ contains the history before iteration $k$, including the current model, the selector parameters $\phi_k$, and the baseline. For the question positions $X_{k,1},\ldots,X_{k,n}$ at iteration $k$, define their average conditional marginal distribution by
\[
\mathcal D_k(x)=\frac1n\sum_{i=1}^n\Pr(X_{k,i}=x\mid\mathcal F_k).
\]
With the current all-failure probabilities $f_{k,x}$ and expected feedback $g_{k,x}(\alpha)$, define
\[
\zeta_k=\mathbb E_{X\sim\mathcal D_k}[f_{k,X}],
\qquad
\mathcal D_{\mathrm{guide},k}(x)=\frac{\mathcal D_k(x)f_{k,x}}{\zeta_k},
\]
\[
J_k(\phi)=\mathbb E_{X\sim\mathcal D_{\mathrm{guide},k}}
\sum_\alpha\rho_\phi(\alpha)g_{k,X}(\alpha),
\]
where the last two are defined when $\zeta_k>0$. Let $\Psi_k$ be the summed direction at iteration $k$.

\begin{proposition}[Expected update under changing models and questions]
\label{prop-sequential-update}
Suppose that, conditional on $\mathcal F_k$, the groups and continuations at iteration $k$ follow the laws of Proposition~\ref{prop-batch-update} under the current model, except that position $i$ has the marginal distribution $\Pr(X_{k,i}=\cdot\mid\mathcal F_k)$. Then
\begin{equation}
\mathbb E[\Psi_k\mid\mathcal F_k]
=nK\zeta_k\nabla_\phi J_k(\phi_k),
\label{eq-sequential-selector}
\end{equation}
where the right-hand side is zero when $\zeta_k=0$.
\end{proposition}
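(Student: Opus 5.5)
The plan is to reduce Proposition~\ref{prop-sequential-update} to the argument of Proposition~\ref{prop-batch-update}, carried out conditionally on $\mathcal F_k$. Given $\mathcal F_k$, the model, the selector parameters $\phi_k$ and the baseline $b$ are fixed. So $b$ plays the role of a baseline fixed before sampling, and $\phi_k$ plays the role of the fixed $\phi$. I would write
\[
\Psi_k=\sum_{i=1}^n\sum_{\ell=1}^K\mathbf1\{\text{group } i \text{ is all-failure}\}(W_{k,i\ell}-b)\nabla_\phi\log\rho_{\phi}(\alpha_{k,i\ell})\big|_{\phi=\phi_k}.
\]
The conditional expectation can then be taken term by term. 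Each term is bounded, since $W\in[0,1]$, $b\in[0,1]$, and the score of the finite mixture in Equation~\eqref{eq-selector} is bounded, so integrability is immediate.

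For a fixed position $i$ and continuation $\ell$, I would condition further on $X_{k,i}=x$ and on the all-failure event of that group. By assumption, the level $\alpha$ then follows $\rho_{\phi_k}$ independently of the group outcome, and the feedback has conditional mean $g_{k,x}(\alpha)$. The computation in the proof of Theorem~\ref{thm-selector} gives the conditional mean $\sum_\alpha g_{k,x}(\alpha)\nabla_\phi\rho_\phi(\alpha)|_{\phi_k}$, because the baseline term cancels via $\sum_\alpha\nabla_\phi\rho_\phi(\alpha)=0$. Given $\mathcal F_k$ and $X_{k,i}=x$, the all-failure event has probability $f_{k,x}$, because the $G$ responses are drawn independently from the current model. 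Averaging over $x$ with $\Pr(X_{k,i}=x\mid\mathcal F_k)$ gives the contribution of position $i$:
\[
K\sum_x\Pr(X_{k,i}=x\mid\mathcal F_k)\,f_{k,x}\sum_\alpha g_{k,x}(\alpha)\nabla_\phi\rho_\phi(\alpha).
\]

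Summing over $i$ by linearity, which needs no independence across positions since only marginals enter, replaces the marginals by $n\mathcal D_k(x)$. When $\zeta_k>0$, I would factor out $\zeta_k$ to obtain $nK\zeta_k\sum_x\mathcal D_{\mathrm{guide},k}(x)\sum_\alpha g_{k,x}(\alpha)\nabla_\phi\rho_\phi(\alpha)$. Because $\mathcal D_{\mathrm{guide},k}$ and $g_{k,x}$ are $\mathcal F_k$-measurable and do not depend on $\phi$, this equals $nK\zeta_k\nabla_\phi J_k(\phi_k)$, as in the proof of Theorem~\ref{thm-selector}. When $\zeta_k=0$, every $x$ with positive average marginal has $f_{k,x}=0$, so each indicator vanishes almost surely and $\mathbb E[\Psi_k\mid\mathcal F_k]=0$.

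The main obstacle is measure-theoretic bookkeeping rather than computation. I have to justify that $\phi_k$, $b$, $f_{k,x}$ and $g_{k,x}$ are all $\mathcal F_k$-measurable and may be treated as constants inside the conditional expectation. I also have to check that the tower property applies when conditioning on $X_{k,i}$ and the all-failure event. I would state these explicitly as consequences of the hypothesis that, given $\mathcal F_k$, the sampling follows the laws of Proposition~\ref{prop-batch-update}.
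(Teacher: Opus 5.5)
Your proposal is correct and follows the paper's proof. The paper likewise reruns the argument of Proposition~\ref{prop-batch-update} conditionally on $\mathcal F_k$, using only the per-position marginals $\Pr(X_{k,i}=\cdot\mid\mathcal F_k)$, so that summing over positions replaces $n\mathcal D$ by $n\mathcal D_k$. Your version only makes explicit the details the paper leaves implicit: the per-term conditioning, the cancellation of the baseline, the integrability, and the $\zeta_k=0$ case.
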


\begin{proof}
Apply the proof of Proposition~\ref{prop-batch-update} conditionally on $\mathcal F_k$, with the marginal distribution of position $i$ given by $\Pr(X_{k,i}=\cdot\mid\mathcal F_k)$. Since the proof uses only marginal laws, the positions may have different marginal distributions. Summing their contributions replaces $n\mathcal D$ with $n\mathcal D_k$, which gives Equation~\eqref{eq-sequential-selector}.
\end{proof}

The gradient in Equation~\eqref{eq-sequential-selector} holds the history-dependent model, question distribution, and prefix mapping fixed. Thus, the update follows the current objective in expectation even as training changes these quantities. The setting includes sampling with replacement from a fixed distribution, for which $\mathcal D_k=\mathcal D$, and sampling without replacement from a remaining pool of questions.

\subsection{Accumulated Sampling Noise}
\label{app-selector-noise}

Proposition~\ref{prop-sequential-update} describes the expected update at each iteration. We now bound how far the accumulated updates deviate from these expectations, starting with the size of each update.

\begin{lemma}[Bounded update direction]
\label{lem-bounded-direction}
For the selector in Equation~\eqref{eq-selector} and a baseline in $[0,1]$, every term of $\Psi_k$ has Euclidean norm at most $\sqrt2$. Consequently, $\|\Psi_k/(nK)\|_2\le\sqrt2$.
\end{lemma}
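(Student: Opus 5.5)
The plan is to bound each term of $\Psi_k$ directly, by splitting it into the scalar factor $W_j-b$ and the score vector $\nabla_\phi\log\rho_\phi(\alpha_j)$. For the scalar factor, Lemma~\ref{lem-selector-feedback} gives $W_j\in[0,1]$, and the baseline lies in $[0,1]$ by assumption. Hence $|W_j-b|\le1$, and it suffices to show $\|\nabla_\phi\log\rho_\phi(\alpha)\|_2\le\sqrt2$ for every level $\alpha$ and every $\phi$.

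For the score, write $q=q_\phi$ for the softmax component and $\rho=\rho_\phi$ from Equation~\eqref{eq-selector}. The standard softmax Jacobian is $\partial q(\alpha)/\partial\phi_\beta=q(\alpha)(\mathbf1\{\alpha=\beta\}-q(\beta))$. The uniform term does not depend on $\phi$, so
\[
\nabla_\phi\log\rho(\alpha)=\frac{(1-\varepsilon_{\mathrm{sel}})q(\alpha)}{\rho(\alpha)}\bigl(e_\alpha-q\bigr),
\]
where $e_\alpha$ is the indicator vector of $\alpha$ and $q$ is viewed as a vector in $\mathbb R^{|\mathcal A|}$.

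The prefactor satisfies $0<(1-\varepsilon_{\mathrm{sel}})q(\alpha)/\rho(\alpha)\le1$, because $\rho(\alpha)=(1-\varepsilon_{\mathrm{sel}})q(\alpha)+\varepsilon_{\mathrm{sel}}/|\mathcal A|\ge(1-\varepsilon_{\mathrm{sel}})q(\alpha)$.

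For the vector factor,
\[
\|e_\alpha-q\|_2^2=(1-q(\alpha))^2+\sum_{\beta\ne\alpha}q(\beta)^2\le(1-q(\alpha))^2+\Bigl(\sum_{\beta\ne\alpha}q(\beta)\Bigr)^2=2(1-q(\alpha))^2\le2.
\]
The middle inequality holds because the $q(\beta)$ are nonnegative. Combining the scalar bound, the prefactor bound and this estimate shows that each term has norm at most $\sqrt2$.

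For the consequence, $\Psi_k$ has $|\mathcal U|$ terms. Each all-failure group contributes at most $K$ continuations, and there are at most $n$ groups, so $|\mathcal U|\le nK$. The triangle inequality then gives $\|\Psi_k\|_2\le\sqrt2\,nK$, and dividing by $nK$ gives the stated bound. An empty batch gives $\Psi_k=0$ and satisfies the bound trivially.

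The only step requiring care is the prefactor bound, which is where the mixture with the uniform component enters; beyond that, the argument is routine.
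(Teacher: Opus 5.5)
Your proposal is correct and follows the paper's proof essentially step for step. Both use the same score formula with the prefactor $(1-\varepsilon_{\mathrm{sel}})q_\phi(\alpha)/\rho_\phi(\alpha)\le1$, the bound $\|\mathbf e_\alpha-q_\phi\|_2^2\le2(1-q_\phi(\alpha))^2\le2$, the bound $|W_j-b|\le1$, and the count of at most $nK$ terms. Your version also spells out the nonnegativity step and the triangle inequality that the paper leaves implicit.
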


\begin{proof}
Let $q_\phi$ be the softmax component of Equation~\eqref{eq-selector} and $\mathbf e_\alpha$ the coordinate vector for level $\alpha$. Then
\[
\nabla_\phi\log\rho_\phi(\alpha)
=\frac{(1-\varepsilon_{\mathrm{sel}})q_\phi(\alpha)}{\rho_\phi(\alpha)}
\bigl(\mathbf e_\alpha-q_\phi\bigr).
\]
The prefactor is at most one since $\rho_\phi(\alpha)\ge(1-\varepsilon_{\mathrm{sel}})q_\phi(\alpha)$. Moreover, $\|\mathbf e_\alpha-q_\phi\|_2^2=(1-q_\phi(\alpha))^2+\sum_{\alpha'\ne\alpha}q_\phi(\alpha')^2\le2(1-q_\phi(\alpha))^2\le2$. Since $W_j$ and $b$ both lie in $[0,1]$, we have $|W_j-b|\le1$, which bounds each term by $\sqrt2$. Finally, $\Psi_k$ has at most $nK$ terms, including the case of no continuations.
\end{proof}

Define the centered errors
\[
\xi_k=\frac{\Psi_k}{nK}-\zeta_k\nabla_\phi J_k(\phi_k).
\]

\begin{proposition}[Accumulated sampling noise]
\label{prop-selector-noise}
Under the conditions of Proposition~\ref{prop-sequential-update} and a baseline in $[0,1]$, for a deterministic number $N_{\mathrm{iter}}$ of iterations,
\begin{equation}
\mathbb E\left\|\frac1{N_{\mathrm{iter}}}\sum_{k=1}^{N_{\mathrm{iter}}}\xi_k\right\|_2^2
\le\frac{2}{N_{\mathrm{iter}}}.
\label{eq-selector-noise}
\end{equation}
\end{proposition}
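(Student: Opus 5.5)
The plan is a martingale-difference argument. First, I will show that the centered errors $\xi_k$ form a martingale difference sequence with respect to the filtration $(\mathcal F_k)_k$. Proposition~\ref{prop-sequential-update} gives $\mathbb E[\Psi_k\mid\mathcal F_k]=nK\zeta_k\nabla_\phi J_k(\phi_k)$, and $\zeta_k\nabla_\phi J_k(\phi_k)$ is $\mathcal F_k$-measurable. Hence $\mathbb E[\xi_k\mid\mathcal F_k]=0$. In addition, $\xi_k$ is $\mathcal F_{k+1}$-measurable, since iteration $k$ is part of the history before iteration $k+1$. Integrability is immediate from Lemma~\ref{lem-bounded-direction}, because $\Psi_k/(nK)$ is bounded.

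Next, I will expand the square of the sum:
\[
\mathbb E\Bigl\|\sum_{k}\xi_k\Bigr\|_2^2=\sum_k\mathbb E\|\xi_k\|_2^2+2\sum_{j<k}\mathbb E\langle\xi_j,\xi_k\rangle .
\]
For $j<k$, the vector $\xi_j$ is $\mathcal F_k$-measurable. Conditioning on $\mathcal F_k$ therefore gives $\mathbb E\langle\xi_j,\xi_k\rangle=\mathbb E\langle\xi_j,\mathbb E[\xi_k\mid\mathcal F_k]\rangle=0$, so all cross terms vanish. Since $N_{\mathrm{iter}}$ is deterministic, no stopping-time issues arise.

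It remains to bound each diagonal term by $2$. A naive triangle inequality would give $\|\xi_k\|\le2\sqrt2$ and hence the weaker constant $8$. Instead, I will use the variance decomposition. Because the subtracted term is the conditional mean,
\[
\mathbb E[\|\xi_k\|_2^2\mid\mathcal F_k]=\mathbb E[\|\Psi_k/(nK)\|_2^2\mid\mathcal F_k]-\|\zeta_k\nabla_\phi J_k(\phi_k)\|_2^2\le\mathbb E[\|\Psi_k/(nK)\|_2^2\mid\mathcal F_k].
\]
The right-hand side is at most $2$ by Lemma~\ref{lem-bounded-direction}, which requires the baseline to lie in $[0,1]$, as guaranteed by the baseline update. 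Taking expectations, summing over the $N_{\mathrm{iter}}$ iterations, and dividing by $N_{\mathrm{iter}}^2$ then gives $2/N_{\mathrm{iter}}$.

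The main obstacle is getting the sharp constant $2$, and the variance decomposition above is what secures it. A secondary point is the empty-batch and $\zeta_k=0$ cases. In both, $\Psi_k=0$ and the mean term is $0$, so $\xi_k=0$ and these cases are consistent with the argument.
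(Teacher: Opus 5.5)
Your proposal is correct and follows essentially the same route as the paper. The paper likewise gets $\mathbb E[\xi_k\mid\mathcal F_k]=0$ from Proposition~\ref{prop-sequential-update}, kills the cross terms through the $\mathcal F_{k'}$-measurability of $\xi_k$, and bounds each diagonal term by $2$ via the conditional variance identity together with Lemma~\ref{lem-bounded-direction}. One small slip appears in your side remark: an empty batch can occur even when $\zeta_k>0$, and then $\xi_k=-\zeta_k\nabla_\phi J_k(\phi_k)$ is generally nonzero. This case needs no special treatment, however, because Lemma~\ref{lem-bounded-direction} already covers $\Psi_k=0$.
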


\begin{proof}
Proposition~\ref{prop-sequential-update} gives $\mathbb E[\xi_k\mid\mathcal F_k]=0$. The conditional variance identity and Lemma~\ref{lem-bounded-direction} then give
\[
\mathbb E\bigl[\|\xi_k\|_2^2\mid\mathcal F_k\bigr]
=\mathbb E\!\left[\left\|\frac{\Psi_k}{nK}\right\|_2^2\middle|\mathcal F_k\right]
-\left\|\mathbb E\!\left[\frac{\Psi_k}{nK}\middle|\mathcal F_k\right]\right\|_2^2\le2.
\]
For $k<k'$, the error $\xi_k$ is measurable with respect to $\mathcal F_{k'}$. Therefore, $\mathbb E\langle\xi_k,\xi_{k'}\rangle=\mathbb E\langle\xi_k,\mathbb E[\xi_{k'}\mid\mathcal F_{k'}]\rangle=0$. Expanding the squared norm of the sum, the cross terms vanish and each of the $N_{\mathrm{iter}}$ diagonal terms is at most $2$, which proves the bound.
\end{proof}

The baseline update in Equation~\eqref{eq-selector-baseline} satisfies the condition on the baseline. Proposition~\ref{prop-selector-noise} controls the noise around the changing expected directions.

\section{Analysis of Multiple Continuations}
\label{app-multiple-continuations}

Theorem~\ref{thm-one-attempt} concerns a single continuation, whereas \method samples $K$ continuations for each question and accepts candidate trajectories by the rule in Section~\ref{sec-method-guidance}. This appendix analyzes the distribution of the accepted trajectory after multiple continuations. Appendix~\ref{app-budget} fixes one prefix and studies how the distance to the ideal distribution changes with the number of continuations. Appendix~\ref{app-candidate-selection} gives a sufficient condition under which an arbitrary acceptance rule keeps the distance no larger than that of training on the reference.

\subsection{Repeated Continuations from a Fixed Prefix}
\label{app-budget}

Fix a prefix length $t$ and let $H_t$ be the event that a trajectory begins with $h_t$. Since the reference is correct and has positive probability, $0<u_t\le v_t\le1$. For a positive integer $k$, draw $k$ independent continuations from $h_t$. The \emph{first-success rule} returns the first correct candidate trajectory, including an exact copy of $r$, and returns $r$ if every candidate trajectory is incorrect. This rule treats an exact copy as a success. Let $Q_t^{(k)}$ be the resulting distribution of the accepted trajectory, and let $\nu_{h_t}=\pi(\cdot\mid C\cap H_t)$ be the ideal distribution restricted to trajectories that begin with $h_t$.

\begin{lemma}[Distribution after repeated continuations]
\label{lem-budget-distribution}
Let $w_k=1-(1-v_t)^k$ be the probability that at least one candidate trajectory is correct, and define $\lambda_k=w_k/v_t$ and $d_t^{(k)}=d_t\lambda_k$. Then
\begin{equation}
Q_t^{(k)}=(1-w_k)\delta_r+w_k\nu_{h_t},
\label{eq-budget-mixture}
\end{equation}
and
\[
D_{\mathrm{KL}}(Q_t^{(k)}\Vert\nu)
=D_{\mathrm{KL}}(\delta_r\Vert\nu)-d_t^{(k)}S_t+\bigl(1-d_t^{(k)}\bigr)\log\bigl(1-d_t^{(k)}\bigr)+d_t^{(k)}\log\lambda_k.
\]
\end{lemma}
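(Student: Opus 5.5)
The proof has two parts: derive the mixture form \eqref{eq-budget-mixture}, then evaluate the KL divergence the same way as in the proof of Theorem~\ref{thm-one-attempt}.

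\emph{The mixture.} The $k$ continuations are independent. Under the first-success rule, the accepted trajectory is $r$ exactly when every candidate is incorrect, which has probability $(1-v_t)^k=1-w_k$. On the complementary event, the returned trajectory is the first correct candidate. Conditional on being correct, each candidate $h_t\oplus Z$ has law
\[
\pi(z\mid h_t)\mathbf 1\{h_t\oplus z\in C\}/v_t .
\]
The index of the first success is independent of that candidate's value, so this conditional law is unaffected. By autoregressive factorization, $\pi(z\mid h_t)=\pi(h_t\oplus z)/\pi(h_t)$ and $v_t=\pi(C\cap H_t)/\pi(h_t)$. Hence the conditional law equals $\nu_{h_t}$, and \eqref{eq-budget-mixture} follows. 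In particular, $\nu_{h_t}(r)=u_t/v_t$.

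\emph{Probabilities under $Q_t^{(k)}$.} From the mixture,
\[
Q_t^{(k)}(r)=1-w_k+w_ku_t/v_t=1-\lambda_k(v_t-u_t)=1-d_t^{(k)},
\]
using $d_t=v_t-u_t$ from Lemma~\ref{lem-accepted-distribution}. For a correct $y\ne r$ that begins with $h_t$, we have $Q_t^{(k)}(y)=w_k\,\pi(y)/\pi(C\cap H_t)=\lambda_k\,\pi(y)/\pi(h_t)$. Therefore
\[
\frac{Q_t^{(k)}(y)}{\nu(y)}=\frac{\lambda_k p}{\pi(h_t)},
\]
and these trajectories carry total mass $d_t^{(k)}$. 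This is exactly Lemma~\ref{lem-accepted-distribution} with every non-reference ratio multiplied by $\lambda_k$.

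\emph{The KL divergence.} Splitting the sum into the atom at $r$ and the rest gives
\[
D_{\mathrm{KL}}(Q_t^{(k)}\Vert\nu)=(1-d^{(k)}_t)\log\frac{(1-d_t^{(k)})p}{\pi(r)}+d_t^{(k)}\log\frac{\lambda_k p}{\pi(h_t)}.
\]
Substitute $\log\pi(h_t)=\log\pi(r)+S_t$ from Lemma~\ref{lem-surprisal-monotone}, and use Lemma~\ref{lem-reference-distance} to identify $\log(p/\pi(r))=D_{\mathrm{KL}}(\delta_r\Vert\nu)$. Collecting the terms in $\log(p/\pi(r))$ then yields the stated closed form, with the extra term $d_t^{(k)}\log\lambda_k$.

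The main obstacle is justifying that the first success has law $\nu_{h_t}$, that is, that conditioning on the stopping index does not bias the selected candidate. This holds because the candidates are i.i.d.\ and the event $\{\text{first success at }j\}$ depends on the $j$-th candidate only through its correctness. One should also check that $d_t^{(k)}\le 1-u_t\lambda_k<1$ whenever $w_k<1$, so the logarithms are finite. If $v_t=1$, then $\lambda_k=1$, the $\log\lambda_k$ term vanishes, and the bound $Q_t^{(k)}(r)\ge u_t>0$ still keeps the logarithms finite.
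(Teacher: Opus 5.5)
Your proof is correct and follows essentially the same route as the paper. You establish the mixture, read off $Q_t^{(k)}(r)=1-d_t^{(k)}$ and the common likelihood ratio $\lambda_k p/\pi(h_t)$ on the remaining mass, and then repeat the computation from Theorem~\ref{thm-one-attempt}. The only difference is how you obtain the mixture: you condition on the index of the first success, while the paper sums $(1-v_t)^{j-1}\pi(z\mid h_t)$ over $j$ directly.

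One wording fix: the accepted trajectory is $r$ not only when every candidate fails but also when the first correct candidate is an exact copy of $r$. So the event of probability $1-w_k$ is \emph{fallback}, not ``acceptance of $r$.'' Your formula $Q_t^{(k)}(r)=1-w_k+w_ku_t/v_t$ already counts both cases, so nothing downstream changes.
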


\begin{proof}
A correct trajectory $y=h_t\oplus z$ is returned at continuation $j$ with probability $(1-v_t)^{j-1}\pi(z\mid h_t)$. Summing over $j=1,\ldots,k$ gives $Q_t^{(k)}(y)=\lambda_k\pi(z\mid h_t)$ for every correct $y\ne r$. Since $\nu_{h_t}(y)=\pi(z\mid h_t)/v_t$ for these trajectories, this probability equals $w_k\nu_{h_t}(y)$. The same calculation gives $\lambda_ku_t$ for an exact copy, and fallback adds $(1-v_t)^k$. Therefore, $Q_t^{(k)}(r)=(1-w_k)+w_ku_t/v_t=1-d_t^{(k)}$, which proves Equation~\eqref{eq-budget-mixture}. For the distance, the trajectories other than $r$ have total probability $d_t^{(k)}$ and likelihood ratio $\lambda_kp/\pi(h_t)$. Following the proof of Theorem~\ref{thm-one-attempt} with this ratio gives the closed form.
\end{proof}

Compared with the closed form for one continuation, the additional term $d_t^{(k)}\log\lambda_k$ accounts for the larger probability of each trajectory other than the reference after repeated continuations. For $k=1$, we have $\lambda_1=1$, and the lemma recovers Theorem~\ref{thm-one-attempt}. The mixture form in Equation~\eqref{eq-budget-mixture} shows that additional continuations move probability from the reference to $\nu_{h_t}$, which leads to the following limit.

\begin{proposition}[The limit of a fixed prefix]
\label{prop-budget}
The distance $D_{\mathrm{KL}}(Q_t^{(k)}\Vert\nu)$ is nonincreasing in $k$ and satisfies
\[
-\log\nu(H_t)
\le D_{\mathrm{KL}}(Q_t^{(k)}\Vert\nu)
\le-\log\nu(H_t)+(1-v_t)^k\log\frac{v_t}{u_t}.
\]
Consequently, it converges to $-\log\nu(H_t)$ as $k\to\infty$.
\end{proposition}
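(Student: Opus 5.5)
Our plan works directly with the mixture form in Lemma~\ref{lem-budget-distribution}, $Q_t^{(k)}=(1-w_k)\delta_r+w_k\nu_{h_t}$, and computes the distance in closed form as a function of the single scalar $w=w_k\in(0,1]$. First note that $\nu_{h_t}=\nu(\cdot\mid H_t)$, since $\nu$ is $\pi$ restricted to $C$, so $\nu_{h_t}(y)=\nu(y)/\nu(H_t)$ on $C\cap H_t$. Moreover $r\in C\cap H_t$, and $\nu(H_t)=\pi(h_t)v_t/p$.

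We then split $Q_t^{(k)}$ into its mass on $r$ and its mass on the other trajectories in $H_t$. On $y\ne r$ the ratio to $\nu$ is $w/\nu(H_t)$. At $r$ the mass is $1-w+w\,u_t/v_t$, and $\nu(r)=\nu(H_t)\,u_t/v_t$. Writing $s=u_t/v_t\in(0,1]$, which is the conditional probability of $r$ under $\nu_{h_t}$, this gives
\[
D_{\mathrm{KL}}(Q_t^{(k)}\Vert\nu)=-\log\nu(H_t)+F(w),\qquad
F(w)=(1-w+ws)\log\frac{1-w+ws}{s}+w(1-s)\log w .
\]
Equivalently, $F(w)=D_{\mathrm{KL}}\bigl((1-w)\delta_r+w\nu_{h_t}\Vert\nu_{h_t}\bigr)$, because $\nu$ and $\nu_{h_t}$ differ on $H_t$ only by the factor $\nu(H_t)$. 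This identification gives the lower bound $F\ge0$ at once.

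For monotonicity, $w_k=1-(1-v_t)^k$ is nondecreasing in $k$, so it suffices to show that $F$ is nonincreasing on $[0,1]$. We argue this by convexity. The map $w\mapsto(1-w)\delta_r+w\nu_{h_t}$ is affine, so $F$ is convex in $w$. It also satisfies $F(1)=0$ and $F\ge0$, so it is nonincreasing on $[0,1]$: for $w<w'\le1$, write $w'$ as a convex combination of $w$ and $1$, which gives $F(w')\le\frac{1-w'}{1-w}F(w)+\frac{w'-w}{1-w}F(1)\le F(w)$.

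For the upper bound, we apply convexity once more between the endpoints $0$ and $1$. This gives $F(w)\le(1-w)F(0)+wF(1)=(1-w)\log(1/s)$, and substituting $1-w_k=(1-v_t)^k$ and $\log(1/s)=\log(v_t/u_t)$ yields exactly the stated bound. Convergence then follows because $(1-v_t)^k\to0$ when $v_t>0$, which holds since $v_t\ge u_t>0$.

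The main obstacle is the bookkeeping in identifying $F$ with a KL divergence to $\nu_{h_t}$, in particular checking that $r\in H_t$ and handling the merged mass at $r$ coming from exact copies. Once that identification is in place, convexity supplies monotonicity and both bounds with no further calculation.
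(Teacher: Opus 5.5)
Your proposal is correct and follows essentially the paper's route: you reduce to $D_{\mathrm{KL}}(Q_t^{(k)}\Vert\nu_{h_t})-\log\nu(H_t)$ via $\nu_{h_t}=\nu(\cdot\mid H_t)$, take the lower bound from nonnegativity, and get both the upper bound and monotonicity from convexity of KL along the mixture $(1-w)\delta_r+w\nu_{h_t}$. Your coefficient $(1-w')/(1-w)$ is exactly the paper's $a=(1-w_\ell)/(1-w_k)$, and the explicit formula for $F(w)$ is a correct but unneeded extra check.
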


\begin{proof}
For $y\in C\cap H_t$, we have $\nu(y)=\nu(H_t)\nu_{h_t}(y)$. Hence, every distribution $Q$ supported on $C\cap H_t$ satisfies $D_{\mathrm{KL}}(Q\Vert\nu)=D_{\mathrm{KL}}(Q\Vert\nu_{h_t})-\log\nu(H_t)$. By Lemma~\ref{lem-budget-distribution}, $Q_t^{(k)}$ is such a distribution. Therefore, it suffices to study $D_{\mathrm{KL}}(Q_t^{(k)}\Vert\nu_{h_t})$, and the lower bound follows from its nonnegativity. For the upper bound, convexity of KL applied to Equation~\eqref{eq-budget-mixture} gives $D_{\mathrm{KL}}(Q_t^{(k)}\Vert\nu_{h_t})\le(1-w_k)D_{\mathrm{KL}}(\delta_r\Vert\nu_{h_t})$, where $D_{\mathrm{KL}}(\delta_r\Vert\nu_{h_t})=-\log\nu_{h_t}(r)=\log(v_t/u_t)$ and $1-w_k=(1-v_t)^k$. For monotonicity, take $\ell>k$. If $w_k=1$, then $Q_t^{(k)}=Q_t^{(\ell)}=\nu_{h_t}$. Otherwise, $a=(1-w_\ell)/(1-w_k)\in[0,1]$ satisfies $Q_t^{(\ell)}=aQ_t^{(k)}+(1-a)\nu_{h_t}$, and convexity gives
\[
D_{\mathrm{KL}}(Q_t^{(\ell)}\Vert\nu_{h_t})
\le aD_{\mathrm{KL}}(Q_t^{(k)}\Vert\nu_{h_t})
\le D_{\mathrm{KL}}(Q_t^{(k)}\Vert\nu_{h_t}).
\]
Finally, the upper bound converges to $-\log\nu(H_t)$ since $v_t\ge u_t>0$.
\end{proof}

For the empty prefix, $H_t$ contains every trajectory. Therefore, the limiting distribution is $\nu$ and the limiting distance is zero. A nonempty prefix has the same limit exactly when $H_t$ contains all correct trajectories with positive probability. Proposition~\ref{prop-budget} orders the numbers of continuations for each fixed prefix. However, comparing different prefixes also requires their respective $u_t$ and $v_t$, which determine both the rate and the limit of the decrease.

\subsection{A Sufficient Condition for General Acceptance Rules}
\label{app-candidate-selection}

To cover the acceptance rule of ARG, which excludes exact copies, mixes several prefixes, prefers some candidate trajectories over others, and may accept several of them, we consider a general procedure. Choose $K\ge1$ reference prefixes $h_{t_1},\ldots,h_{t_K}$ before generation. Conditional on this list, draw a continuation $Z_j\sim\pi(\cdot\mid h_{t_j})$ and form the candidate trajectory $\widetilde Y_j=h_{t_j}\oplus Z_j$ for each $j$. The continuations may depend on each other, provided that each candidate trajectory has this conditional marginal law given the entire list. A procedure returns $r$ or one of the correct candidate trajectories. If several trajectories are accepted, we consider the distribution obtained by selecting one of them with nonnegative weights that sum to one. Let $Q$ denote the resulting distribution of the accepted trajectory, including fallback, and use the convention $0\log0=0$.

\begin{proposition}[Multiple continuations and acceptance]
\label{prop-candidate-selection}
Under the setup above, define $d_Q=1-Q(r)$ and $\Lambda=\sum_{j=1}^K\exp(-S_{t_j})$. Then
\begin{equation}
D_{\mathrm{KL}}(Q\Vert\nu)
\le D_{\mathrm{KL}}(\delta_r\Vert\nu)+d_Q\log\Lambda+(1-d_Q)\log(1-d_Q).
\label{eq-candidate-selection-bound}
\end{equation}
In particular, $\Lambda\le1$ guarantees $D_{\mathrm{KL}}(Q\Vert\nu)\le D_{\mathrm{KL}}(\delta_r\Vert\nu)$. A sufficient condition is $\min_jS_{t_j}\ge\log K$.
\end{proposition}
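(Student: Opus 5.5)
The plan is to bound the likelihood ratio $Q(y)/\nu(y)$ uniformly over all $y\ne r$ and then split the KL divergence into the fallback mass at $r$ and the remaining mass $d_Q$, as in the proof of Theorem~\ref{thm-one-attempt}. I will first treat the list $h_{t_1},\ldots,h_{t_K}$ as fixed, which matches conditioning on it.

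\textbf{Step 1: support and the candidate events.} Every $y\ne r$ with $Q(y)>0$ is a correct candidate trajectory. Selecting one accepted trajectory with nonnegative weights summing to one returns $y$ only if $y$ appeared among the candidates. Hence, by the union bound,
\[
Q(y)\le\Pr\bigl(\exists j:\widetilde Y_j=y\bigr)\le\sum_{j=1}^K\Pr(\widetilde Y_j=y).
\]
This step uses only the conditional marginal law of each $\widetilde Y_j$. Dependence between continuations, preferences among candidates, and the exclusion of exact copies are therefore all harmless.

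\textbf{Step 2: the uniform ratio bound.} Consider a term with $y=h_{t_j}\oplus z$. Autoregressive factorization gives $\Pr(\widetilde Y_j=y)=\pi(z\mid h_{t_j})=\pi(y)/\pi(h_{t_j})$; the term is zero if $y$ does not begin with $h_{t_j}$. By Lemma~\ref{lem-surprisal-monotone}, $\pi(h_{t_j})=\pi(r)e^{S_{t_j}}$. Therefore
\[
Q(y)\le\frac{\pi(y)}{\pi(r)}\sum_je^{-S_{t_j}}=\frac{\pi(y)\Lambda}{\pi(r)}.
\]
Since $y\in C$, we have $\nu(y)=\pi(y)/p$. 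This yields
\[
\frac{Q(y)}{\nu(y)}\le\frac{p\Lambda}{\pi(r)}.
\]

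\textbf{Step 3: assembling the divergence.} Write $d=d_Q$ and split $D_{\mathrm{KL}}(Q\Vert\nu)$ into the term at $r$ and the remaining terms:
\[
(1-d)\log\frac{(1-d)p}{\pi(r)}+\sum_{y\ne r}Q(y)\log\frac{Q(y)}{\nu(y)}.
\]
Apply Step 2 to the second sum, whose total weight is $d$. This bounds it by $d\log(p\Lambda/\pi(r))$. Collecting the $\log(p/\pi(r))$ terms and invoking Lemma~\ref{lem-reference-distance} gives Equation~\eqref{eq-candidate-selection-bound}. The convention $0\log0=0$ covers $d\in\{0,1\}$.

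For the consequences, suppose $\Lambda\le1$. Then $d\log\Lambda\le0$, and $(1-d)\log(1-d)\le0$ always holds, so $D_{\mathrm{KL}}(Q\Vert\nu)\le D_{\mathrm{KL}}(\delta_r\Vert\nu)$. If $\min_jS_{t_j}\ge\log K$, each summand satisfies $e^{-S_{t_j}}\le1/K$, so $\Lambda\le1$.

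\textbf{Main obstacle.} The main subtlety is Step 1. The ratio bound must hold regardless of the acceptance rule and of any dependence between continuations, which is why I bound by the probability that $y$ appears at all rather than by the rule's own acceptance probability.

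If the list of prefixes is itself random, it is chosen before generation. The ratio bound then holds pointwise given the list, and $Q$ is a mixture of the conditional distributions. For the general inequality~\eqref{eq-candidate-selection-bound}, I would either state it conditionally on the list or use convexity of KL. For the guarantee $\Lambda\le1$ and the sufficient condition, convexity gives the result directly, because each conditional divergence is at most $D_{\mathrm{KL}}(\delta_r\Vert\nu)$.
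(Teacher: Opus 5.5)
Your proposal is correct and matches the paper's proof essentially step for step. You bound $Q(y)$ for $y\ne r$ by $\sum_j\Pr(\widetilde Y_j=y)$, use Lemma~\ref{lem-surprisal-monotone} to turn $\pi(r)/\pi(h_{t_j})$ into $e^{-S_{t_j}}$ and obtain $Q(y)/\nu(y)\le p\Lambda/\pi(r)$, and then split the divergence into the mass $1-d_Q$ at $r$ and the mass $d_Q$ elsewhere. Your remark on random prefix lists mirrors Corollary~\ref{cor-candidate-levels}, but note that convexity yields only the $\Lambda\le1$ comparison or an averaged bound, not inequality~\eqref{eq-candidate-selection-bound} with a single $\Lambda$. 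The proposition itself is therefore proved under the conditional-on-the-list reading, as you do in your main argument.
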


\begin{proof}
A trajectory $y\ne r$ can be selected only if it appears among the candidate trajectories. Thus,
\[
Q(y)\le\sum_{j=1}^K\Pr(\widetilde Y_j=y)
=\sum_{j=1}^K\mathbf1\{y\text{ begins with }h_{t_j}\}\frac{\pi(y)}{\pi(h_{t_j})}
\le\frac{\pi(y)\Lambda}{\pi(r)},
\]
where the last inequality uses $\pi(r)/\pi(h_{t_j})=\exp(-S_{t_j})$ from Lemma~\ref{lem-surprisal-monotone}. Every $y\ne r$ with $Q(y)>0$ is correct and has $\pi(y)>0$. Hence, $Q(y)/\nu(y)\le p\Lambda/\pi(r)$. Since these trajectories have total probability $d_Q$ and the reference has probability $1-d_Q$ with $Q(r)/\nu(r)=(1-d_Q)p/\pi(r)$,
\[
D_{\mathrm{KL}}(Q\Vert\nu)
\le\log\frac p{\pi(r)}+(1-d_Q)\log(1-d_Q)+d_Q\log\Lambda.
\]
Lemma~\ref{lem-reference-distance} gives $\log(p/\pi(r))=D_{\mathrm{KL}}(\delta_r\Vert\nu)$, which proves Equation~\eqref{eq-candidate-selection-bound}. Since $(1-d_Q)\log(1-d_Q)\le0$, and $d_Q\log\Lambda\le0$ when $\Lambda\le1$, the condition $\Lambda\le1$ gives the comparison with training on the reference. Finally, $\min_jS_{t_j}\ge\log K$ gives $\Lambda\le K\exp(-\log K)=1$.
\end{proof}

The argument permits arbitrary selection among correct candidate trajectories, including the exclusion of exact copies and the priority rule of \method. For one prefix used $K$ times, $\Lambda=K\exp(-S_t)$, and the bound becomes $D_{\mathrm{KL}}(\delta_r\Vert\nu)-d_Q(S_t-\log K)+(1-d_Q)\log(1-d_Q)$. The term $\log K$ accounts for the possible concentration from selecting among several candidate trajectories. For $K=1$ and the acceptance rule in Equation~\eqref{eq-acquisition}, we have $d_Q=d_t$, and the bound becomes the equality in Theorem~\ref{thm-one-attempt}.

In \method, the prefixes come from guidance levels sampled before generation. The following corollary extends the comparison to this case.

\begin{corollary}[Guidance levels sampled before generation]
\label{cor-candidate-levels}
Under the setup of Proposition~\ref{prop-candidate-selection}, suppose that the $K$ guidance levels are sampled before generation from any distribution over $\mathcal A^K$, and that $\min_{\alpha\in\mathcal A}S_{t_x(\alpha)}\ge\log K$. Then the distribution $Q$ of the accepted trajectory, averaged over the sampled levels, satisfies $D_{\mathrm{KL}}(Q\Vert\nu)\le D_{\mathrm{KL}}(\delta_r\Vert\nu)$.
\end{corollary}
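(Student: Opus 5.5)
The plan is to reduce the corollary to Proposition~\ref{prop-candidate-selection} by conditioning on the sampled levels, and then to pass to the average using convexity of KL. Let $L=(\alpha_1,\ldots,\alpha_K)$ be the sampled levels, drawn from an arbitrary distribution over $\mathcal A^K$ before any generation. For each realization $\ell$ of $L$, the prefixes $h_{t_x(\alpha_1)},\ldots,h_{t_x(\alpha_K)}$ are fixed before generation. Given the whole list, each candidate trajectory has the conditional marginal law required by the setup of Proposition~\ref{prop-candidate-selection}. Let $Q_\ell$ denote the conditional distribution of the accepted trajectory, including fallback and the weighted selection among accepted trajectories. Then the averaged distribution is the mixture
\[
Q=\sum_\ell\Pr(L=\ell)\,Q_\ell .
\]

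First, I check that the hypothesis of the proposition holds for every realization. Every $t_j$ in the list equals $t_x(\alpha)$ for some $\alpha\in\mathcal A$. The assumption $\min_{\alpha\in\mathcal A}S_{t_x(\alpha)}\ge\log K$ therefore gives $\min_jS_{t_j}\ge\log K$, and hence $\Lambda_\ell\le K e^{-\log K}=1$. Proposition~\ref{prop-candidate-selection} then yields
\[
D_{\mathrm{KL}}(Q_\ell\Vert\nu)\le D_{\mathrm{KL}}(\delta_r\Vert\nu)
\]
for every $\ell$ with positive probability. Repeated levels in the list, and boundary adjustments of $t_x(\alpha)$, cause no difficulty, because the proposition allows any list of $K$ prefixes and the hypothesis is stated at the realized prefix lengths.

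Second, I average over $\ell$. Since $\nu$ is fixed and KL is jointly convex, hence convex in its first argument,
\[
D_{\mathrm{KL}}(Q\Vert\nu)\le\sum_\ell\Pr(L=\ell)\,D_{\mathrm{KL}}(Q_\ell\Vert\nu)\le D_{\mathrm{KL}}(\delta_r\Vert\nu).
\]
All terms are finite, because each $Q_\ell$ is supported on $C$, where $\nu$ is positive, and $\mathcal A^K$ is finite.

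The main obstacle is being precise about what ``averaged over the sampled levels'' means. One must check that the conditional law given $L$ satisfies the proposition's requirement that each candidate has the stated marginal given the entire list. This holds because the levels are drawn before generation and each continuation depends only on its own prefix. Once that is in place, the rest is the convexity step above. An alternative route repeats the proof of Proposition~\ref{prop-candidate-selection} directly on the mixture: the union bound $Q(y)\le\pi(y)\Lambda/\pi(r)$ still holds pointwise with $\Lambda\le1$ after averaging. That route also works, but the convexity argument is cleaner.
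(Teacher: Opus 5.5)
Your proposal is correct and matches the paper's proof. Like the paper, you condition on the realized list of levels and apply Proposition~\ref{prop-candidate-selection}, since every prefix satisfies $S_{t_j}\ge\log K$ and hence $\Lambda\le1$. You then pass to the averaged distribution $Q$ by convexity of KL in its first argument.
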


\begin{proof}
Conditional on each list of levels, every prefix satisfies $S_{t_j}\ge\log K$. Therefore, Proposition~\ref{prop-candidate-selection} bounds the conditional distance by $D_{\mathrm{KL}}(\delta_r\Vert\nu)$. Since $Q$ is the average of these conditional distributions, convexity of KL in its first argument gives the same bound for $Q$.
\end{proof}

The condition in Corollary~\ref{cor-candidate-levels} uses the actual prefix lengths after the boundary adjustment and is available from the same reference surprisals used for prefix selection.

\section{Analysis of Learning from Inserted Trajectories}
\label{app-positive-learning}

The preceding appendices compare distributions of accepted trajectories at a fixed model. This appendix examines how the inserted trajectories contribute to the policy update in Section~\ref{sec-method-learning}. Appendix~\ref{app-inserted-gradient} derives the gradient of inserted trajectories in Equation~\eqref{eq-acquired-nll}. Appendix~\ref{app-group-learning} derives the expected group update under binary rewards for one inserted trajectory, and Appendix~\ref{app-multiple-learning} extends it to multiple inserted trajectories. Finally, Appendix~\ref{app-prefix-learning} relates learning the complete trajectory and learning only the continuation to their respective success events.

\subsection{The Gradient of Inserted Trajectories}
\label{app-inserted-gradient}

Condition on the realized batch, its masks, and the recomputed group advantages. Let $\mathcal I$ index the inserted trajectories, and let $T>0$ be the total number of active response tokens in the batch loss. For token $j$ of an inserted trajectory $y_i$, let $\ell_{i,j}(\theta)$ be its log probability given the original question and all preceding tokens of the trajectory. Inserted trajectories use the clipped objective of Equation~\eqref{eq-grpo-policy-loss} in Appendix~\ref{app-grpo-loss} with the stopped current probability in the ratio denominator,
\[
\mathcal L_{\mathrm{inserted}}(\theta)
=-\frac1T\sum_{i\in\mathcal I}\sum_{j=1}^{|y_i|}
\min\bigl\{\widetilde\omega_{i,j}(\theta)A_i,
\operatorname{clip}(\widetilde\omega_{i,j}(\theta),0.8,1.3)A_i\bigr\},
\]
\[
\widetilde\omega_{i,j}(\theta)
=\exp\bigl(\ell_{i,j}(\theta)-\operatorname{stopgrad}(\ell_{i,j}(\theta))\bigr),
\]
where $\operatorname{stopgrad}$ preserves a value but assigns zero derivative to it.

\begin{lemma}[Gradient of inserted trajectories]
\label{lem-inserted-gradient}
Suppose that the advantages $A_i$ and the token count $T$ are held fixed during differentiation and that every token of an inserted trajectory is active. Then the gradient of $\mathcal L_{\mathrm{inserted}}$ is given by Equation~\eqref{eq-acquired-nll}.
\end{lemma}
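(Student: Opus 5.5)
The argument is a direct token-wise differentiation at the point where the stopped-gradient ratio equals one. Fix an inserted trajectory $y_i$ and a token $j$. Because $\operatorname{stopgrad}(\ell_{i,j}(\theta))$ has the same value as $\ell_{i,j}(\theta)$, we get $\widetilde\omega_{i,j}(\theta)=1$ at every $\theta$. Its derivative still survives through the first occurrence:
\[
\nabla_\theta\widetilde\omega_{i,j}(\theta)=\widetilde\omega_{i,j}(\theta)\nabla_\theta\ell_{i,j}(\theta)=\nabla_\theta\ell_{i,j}(\theta).
\]
So the whole computation reduces to finding which branch of the $\min$ and of the clip is active when the ratio equals one.

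Next I resolve the clipping. The value $1$ lies strictly inside $(0.8,1.3)$, so $\operatorname{clip}(\widetilde\omega,0.8,1.3)$ equals $\widetilde\omega$ in a neighborhood of the current value. Both arguments of the $\min$ are therefore the same function $\widetilde\omega_{i,j}(\theta)A_i$ near the evaluation point.

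This is the only delicate step. The $\min$ of two functions is not differentiable in general, but here the two functions coincide on a neighborhood, so the $\min$ equals $\widetilde\omega_{i,j}A_i$ there. Its gradient is then $A_i\nabla_\theta\ell_{i,j}$ regardless of the sign of $A_i$. To be safe against the stop-gradient being read as a formal operation, I would phrase this in terms of automatic differentiation semantics. In that reading, the clip passes the gradient in its interior, and the $\min$ of equal arguments passes it through whichever branch is selected; both branches give $A_i\nabla_\theta\ell_{i,j}$.

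Finally, I sum over tokens and trajectories. Since $A_i$ and $T$ are held fixed and every token is active, linearity gives
\[
\nabla_\theta\mathcal L_{\mathrm{inserted}}=-\frac1T\sum_{i\in\mathcal I}A_i\sum_{j=1}^{|y_i|}\nabla_\theta\ell_{i,j}(\theta).
\]
Autoregressive factorization under the original question, $\log\pi_\theta(y_i\mid x_i)=\sum_j\ell_{i,j}(\theta)$, including the prefix tokens and the termination token, turns the inner sum into $\nabla_\theta\log\pi_\theta(y_i\mid x_i)$. This is exactly Equation~\eqref{eq-acquired-nll}.
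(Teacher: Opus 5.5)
Your proposal is correct and follows the paper's proof essentially step for step: the ratio $\widetilde\omega_{i,j}$ equals one with derivative $\nabla_\theta\ell_{i,j}$, the interior point of the clipping interval makes both branches of the $\min$ share the derivative $A_i\nabla_\theta\ell_{i,j}$ for any sign of $A_i$, and summing over all tokens (including the supplied prefix and the termination token) recovers $\nabla_\theta\log\pi_\theta(y_i\mid x_i)$. Your appeal to automatic-differentiation semantics matches the paper's remark that the lemma identifies the gradient under the prescribed differentiation rule. That is the right reading: the local identity of the $\min$ holds in the ratio variable, and the chain rule through $\widetilde\omega$ then supplies the gradient.
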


\begin{proof}
By construction, $\widetilde\omega_{i,j}=1$ and $\nabla_\theta\widetilde\omega_{i,j}=\nabla_\theta\ell_{i,j}$. Since the clipping interval contains one in its interior, both branches of the clipped loss have the same derivative at this ratio. Therefore, for any fixed real advantage $A_i$, the loss of token $j$ has derivative $-A_i\nabla_\theta\ell_{i,j}/T$. Summing over all positions, including any supplied prefix and the termination token, gives $-A_i\nabla_\theta\log\pi_\theta(y_i\mid x_i)/T$ for each inserted trajectory. Summing over $i\in\mathcal I$ proves Equation~\eqref{eq-acquired-nll}.
\end{proof}

Lemma~\ref{lem-inserted-gradient} identifies the gradient under the prescribed rule of automatic differentiation. For $A_i>0$, minimizing $\mathcal L_{\mathrm{inserted}}$ gives a positively weighted supervised fine-tuning (SFT) contribution. The advantage specifies its learning weight, while candidate generation and acceptance determine the distribution of the trajectory.

\subsection{The Group Update with One Inserted Trajectory}
\label{app-group-learning}

We now use binary correctness rewards to separate the contributions of correct and incorrect responses. This simplification ignores the format reward, which leaves the advantages unchanged when all responses in a group receive the same format reward. The calculation is at the current model and precedes division by the batch token count. Fix a question $x$ with set of correct trajectories $C$, and suppress the conditioning on $x$ unless shown explicitly. Assume a finite trajectory space, a differentiable policy with positive trajectory probabilities in a neighborhood of $\theta_{\mathrm{old}}$, and $0<p=\pi_{\theta_{\mathrm{old}}}(C)<1$. Draw $G\ge2$ independent responses from $\pi_{\theta_{\mathrm{old}}}$ and condition on all of them failing verification. Then replace one uniformly chosen response by $Y\sim Q$, where $Q$ is supported on $C$, and where $Y$ and the replaced position are independent of the original failures. Both $Q$ and the collected responses are fixed during differentiation. Set
\[
p_\theta=\pi_\theta(C),\qquad
\nu_\theta=\pi_\theta(\cdot\mid C),\qquad
\mu_\theta=\pi_\theta(\cdot\mid C^c),
\]
and write $\nu=\nu_{\theta_{\mathrm{old}}}$ for the ideal distribution at the current model.

At the current model, the ratios of the original responses equal one and clipping is inactive. The inserted trajectory follows Lemma~\ref{lem-inserted-gradient}, and all response tokens are included. We consider the group ascent direction, which is the sum of the reward-dependent gradients before division by the batch token count. With one inserted trajectory, the reward mean is $1/G$ and the sample standard deviation is $s_R=1/\sqrt G$. Therefore, the inserted trajectory has advantage $(G-1)/[G(s_R+\delta)]$, and each failure has advantage $-1/[G(s_R+\delta)]$, where $\delta>0$ is the stabilizer in Equation~\eqref{eq-grpo-advantage}. Define the positive coefficient
\[
c_G=\frac{G-1}{G(1/\sqrt G+\delta)}.
\]
Let $F_1,\ldots,F_{G-1}$ denote the original failures that remain after replacement. The realized group direction is
\begin{equation}
g=c_G\left[
\nabla_\theta\log\pi_\theta(Y\mid x)
-\frac1{G-1}\sum_{j=1}^{G-1}
\nabla_\theta\log\pi_\theta(F_j\mid x)
\right]_{\theta=\theta_{\mathrm{old}}}.
\label{eq-group-learning}
\end{equation}

\begin{proposition}[Inserted trajectories and the group update]
\label{prop-group-learning}
\label{prop-acquired-learning}
Under the setup above, let $\bar g_Q=\mathbb E[g]$ be the expectation over the original failures, the replaced position, and the inserted trajectory. Then
\begin{equation}
\bar g_Q
=c_G\left.\nabla_\theta\left[
\log\frac{p_\theta}{1-p_\theta}
-D_{\mathrm{KL}}(Q\Vert\nu_\theta)
\right]\right|_{\theta=\theta_{\mathrm{old}}}.
\label{eq-expected-group-learning}
\end{equation}
In particular, for $Q=\nu$,
\begin{equation}
\bar g_\nu=\frac{c_G}{p(1-p)}
\left.\nabla_\theta p_\theta\right|_{\theta=\theta_{\mathrm{old}}}.
\label{eq-group-success}
\end{equation}
\end{proposition}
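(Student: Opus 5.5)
We split $\bar g_Q$ into the contribution of the inserted trajectory and that of the remaining failures, and evaluate each expectation at $\theta_{\mathrm{old}}$. By linearity and independence of $Y$ from the failures,
\[
\bar g_Q=c_G\Bigl(\mathbb E_{Y\sim Q}\bigl[\nabla_\theta\log\pi_\theta(Y)\bigr]-\mathbb E\bigl[\nabla_\theta\log\pi_\theta(F)\bigr]\Bigr)_{\theta=\theta_{\mathrm{old}}}.
\]
Here the factor $1/(G-1)$ averages $G-1$ identically distributed failures. Conditioning on all $G$ draws failing makes each original response distributed as $\mu=\pi(\cdot\mid C^c)$. A uniformly chosen position is removed independently of the responses, so each remaining $F_j$ still has marginal $\mu$.

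For the failure term, we apply the argument of Lemma~\ref{lem-gradient-identity} with $C^c$ in place of $C$. Since $1-p>0$ and the trajectory space is finite, this gives $\mathbb E_{F\sim\mu}[\nabla_\theta\log\pi_\theta(F)]=\nabla_\theta\log(1-p_\theta)$. For the inserted term, we write $\log\pi_\theta(y)=\log p_\theta+\log\nu_\theta(y)$ for $y\in C$, which holds because $Q$ is supported on $C$. Then
\[
\mathbb E_Q\bigl[\nabla\log\pi_\theta(Y)\bigr]=\nabla\log p_\theta+\mathbb E_Q\bigl[\nabla\log\nu_\theta(Y)\bigr].
\]
Since $Q$ is fixed during differentiation, $D_{\mathrm{KL}}(Q\Vert\nu_\theta)=\sum_y Q(y)\log Q(y)-\sum_y Q(y)\log\nu_\theta(y)$. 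Hence $\mathbb E_Q[\nabla\log\nu_\theta(Y)]=-\nabla D_{\mathrm{KL}}(Q\Vert\nu_\theta)$. This step needs $\nu_\theta(y)>0$ near $\theta_{\mathrm{old}}$ on the support of $Q$, which the positivity assumption provides. Combining the two terms gives $\nabla[\log p_\theta-\log(1-p_\theta)-D_{\mathrm{KL}}(Q\Vert\nu_\theta)]$ times $c_G$, which is Equation~\eqref{eq-expected-group-learning}.

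For $Q=\nu$, note that $D_{\mathrm{KL}}(\nu\Vert\nu_\theta)$ attains its minimum value zero at $\theta_{\mathrm{old}}$, so its gradient vanishes there. Equivalently, $\sum_y\nu(y)\nabla\log\nu_\theta(y)=\nabla\sum_y\nu_\theta(y)=0$. The remaining term is $\nabla\log\frac{p_\theta}{1-p_\theta}=\frac{\nabla p_\theta}{p(1-p)}$, which gives Equation~\eqref{eq-group-success}.

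The main obstacle is justifying that the remaining failures have marginal $\mu$ after conditioning and random replacement. We handle it by noting that the conditioned responses are i.i.d. $\mu$ and that the replaced index is independent of them, so the multiset of survivors consists of $G-1$ i.i.d. $\mu$ draws. The rest is routine differentiation of finite sums.
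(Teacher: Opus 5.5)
Your proof is correct and follows essentially the same route as the paper. Both arguments reduce $\bar g_Q$ to $c_G\bigl(\mathbb E_Q[\nabla_\theta\log\pi_\theta(Y)]-\mathbb E_{\mu_{\theta_{\mathrm{old}}}}[\nabla_\theta\log\pi_\theta(F)]\bigr)$, apply Lemma~\ref{lem-gradient-identity} to $C^c$, and use the expansion $\log\nu_\theta=\log\pi_\theta-\log p_\theta$. The only difference is cosmetic: for $Q=\nu$ you make the KL gradient vanish via the minimum at $\theta_{\mathrm{old}}$ (equivalently, the score identity for $\nu_\theta$), whereas the paper cites Lemma~\ref{lem-gradient-identity} directly, and this amounts to the same computation.
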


\begin{proof}
Conditioning independent responses on all of them belonging to $C^c$ gives independent draws from $\mu_{\theta_{\mathrm{old}}}$, and uniform replacement preserves this distribution for the remaining failures. Therefore, with $F\sim\mu_{\theta_{\mathrm{old}}}$,
\[
\frac{\bar g_Q}{c_G}
=\left.\left[
\mathbb E_Q\nabla_\theta\log\pi_\theta(Y)
-\mathbb E_{\mu_{\theta_{\mathrm{old}}}}\nabla_\theta\log\pi_\theta(F)
\right]\right|_{\theta=\theta_{\mathrm{old}}}.
\]
Applying the argument of Lemma~\ref{lem-gradient-identity} to the event $C^c$ gives $\mathbb E_{\mu_{\theta_{\mathrm{old}}}}\nabla_\theta\log\pi_\theta(F)=\nabla_\theta\log(1-p_\theta)$ at $\theta_{\mathrm{old}}$. For fixed $Q$, expanding $\log\nu_\theta(y)=\log\pi_\theta(y)-\log p_\theta$ gives
\[
\nabla_\theta D_{\mathrm{KL}}(Q\Vert\nu_\theta)
=-\mathbb E_Q\nabla_\theta\log\pi_\theta(Y)
+\nabla_\theta\log p_\theta.
\]
Substituting both expressions proves Equation~\eqref{eq-expected-group-learning}. When $Q=\nu$, the gradient of the KL term vanishes at $\theta_{\mathrm{old}}$ by Lemma~\ref{lem-gradient-identity}. Since $\nabla_\theta\log[p_\theta/(1-p_\theta)]=\nabla_\theta p_\theta/[p_\theta(1-p_\theta)]$, this gives Equation~\eqref{eq-group-success}.
\end{proof}

The ratio $p_\theta/(1-p_\theta)$ is the odds of passing verification. The first term in Equation~\eqref{eq-expected-group-learning} favors success over failure. The second term records which correct trajectories the distribution $Q$ favors. For training on the reference, $D_{\mathrm{KL}}(\delta_r\Vert\nu_\theta)=-\log\nu_\theta(r)$, and this term favors the particular reference within the set of correct trajectories. In contrast, Equation~\eqref{eq-group-success} shows that inserting trajectories from the ideal distribution gives an expected group update proportional to the gradient of the success probability under the original question.

For reference fallback, $Y=r$ makes the positive term in Equation~\eqref{eq-group-learning} a weighted SFT gradient. For an accepted candidate trajectory, the positive term learns its complete sequence, including the supplied prefix. In both cases, the original failures supply the negative terms in the same group update. This combination of learning from correct trajectories with feedback on the model's own failures connects our update to prior work on integrating supervised and reinforcement fine-tuning \citep{yan2026learning,fu2026srft}.

\subsection{Multiple Inserted Trajectories}
\label{app-multiple-learning}

We keep the preceding setup but insert a random number $N_+$ of correct trajectories with $1\le N_+<G$, where a single reference is inserted when no candidate trajectory is accepted. The inserted trajectories may depend on each other. However, they, their number $N_+$, and the replaced positions, which are uniformly chosen and distinct, are independent of the original failures. Conditional on $N_+=m$, let $\widehat Q$ and $\widehat M$ be the empirical distributions of the inserted trajectories and of the remaining failures. Substituting the advantages $(G-m)/[G(s_m+\delta)]$ and $-m/[G(s_m+\delta)]$ of the inserted trajectories and the remaining failures gives the group direction
\[
g=\kappa_m\left[
\mathbb E_{\widehat Q}\nabla_\theta\log\pi_\theta(Y)
-\mathbb E_{\widehat M}\nabla_\theta\log\pi_\theta(F)
\right]_{\theta=\theta_{\mathrm{old}}},
\]
where
\[
\kappa_m=\frac{m(G-m)}{G(s_m+\delta)},
\qquad
s_m=\sqrt{\frac{m(G-m)}{G(G-1)}}.
\]
Define the average coefficient and the advantage-weighted distribution of inserted trajectories by
\[
\bar\kappa=\mathbb E[\kappa_{N_+}],\qquad
Q_+(y)=\frac{\mathbb E[\kappa_{N_+}\widehat Q(y)]}{\bar\kappa}.
\]
Since $1\le N_+<G$, we have $\bar\kappa>0$, and $Q_+$ is a probability distribution on $C$.

\begin{proposition}[Multiple inserted trajectories]
\label{prop-multiple-learning}
Under the setup above,
\[
\mathbb E[g]
=\bar\kappa\left.\nabla_\theta\left[
\log\frac{p_\theta}{1-p_\theta}
-D_{\mathrm{KL}}(Q_+\Vert\nu_\theta)
\right]\right|_{\theta=\theta_{\mathrm{old}}}.
\]
For one inserted trajectory, $\kappa_1=c_G$ and $Q_+=Q$, which recovers Proposition~\ref{prop-group-learning}.
\end{proposition}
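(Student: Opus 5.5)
The plan is to repeat the argument of Proposition~\ref{prop-group-learning} conditionally on $N_+=m$ and then average over $m$. First, fix $m$. Because the inserted trajectories, their number, and the replaced positions are independent of the original failures, the $G-m$ remaining failures are still independent draws from $\mu_{\theta_{\mathrm{old}}}$, exactly as in the proof of Proposition~\ref{prop-group-learning}. The remaining failures are those that survive a uniformly chosen set of distinct replacements, and this set is independent of the failures, so the survivors are exchangeable with the originals. Therefore the expectation of $\mathbb E_{\widehat M}\nabla_\theta\log\pi_\theta(F)$ given $N_+=m$ and the inserted trajectories equals $\nabla_\theta\log(1-p_\theta)$ at $\theta_{\mathrm{old}}$, by the argument of Lemma~\ref{lem-gradient-identity} applied to $C^c$. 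Here $\widehat M$ averages over the $G-m$ remaining failures, so linearity suffices.

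Next, I take the expectation of $g$ over everything. The failure term contributes $-\mathbb E[\kappa_{N_+}]\nabla_\theta\log(1-p_\theta)=-\bar\kappa\nabla_\theta\log(1-p_\theta)$. This step uses the independence above: $\kappa_{N_+}$ depends only on $N_+$, which is independent of the failures, so the conditional expectation factors. The inserted term gives
\[
\mathbb E\Bigl[\kappa_{N_+}\sum_y\widehat Q(y)\nabla_\theta\log\pi_\theta(y)\Bigr]=\bar\kappa\sum_yQ_+(y)\nabla_\theta\log\pi_\theta(y),
\]
which follows from the definition of $Q_+$ and the finiteness of the trajectory space. This is the only place the advantage-weighted distribution enters. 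I also need to check that $Q_+$ is a probability distribution on $C$. It is nonnegative, each $\widehat Q$ sums to one so $Q_+$ sums to $\bar\kappa/\bar\kappa=1$, and every inserted trajectory, including the fallback reference, is correct, so $Q_+$ is supported on $C$.

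Finally, I use the identity from the proof of Proposition~\ref{prop-group-learning}: for a fixed $Q_+$ supported on $C$, $\nabla_\theta D_{\mathrm{KL}}(Q_+\Vert\nu_\theta)=-\mathbb E_{Q_+}\nabla_\theta\log\pi_\theta(Y)+\nabla_\theta\log p_\theta$. Substituting this and combining with the failure term gives $\bar\kappa\nabla_\theta[\log p_\theta-\log(1-p_\theta)-D_{\mathrm{KL}}(Q_+\Vert\nu_\theta)]$, which is the claim.

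For the special case $m=1$, I compute $s_1=1/\sqrt G$, which gives $\kappa_1=(G-1)/[G(1/\sqrt G+\delta)]=c_G$. With one inserted trajectory, $\widehat Q=\delta_Y$, so $Q_+(y)=\Pr(Y=y)=Q(y)$. The expression for $g$ with general $m$ follows by substituting the stated advantages and grouping the $m$ positive and $G-m$ negative terms into empirical means. I take this as given in the setup.

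The main obstacle I expect is justifying the factorization $\mathbb E[\kappa_{N_+}\mathbb E_{\widehat M}(\cdot)]=\bar\kappa\nabla\log(1-p)$. Positions are removed, not failures per se, so the surviving failures must be shown to remain i.i.d.\ from $\mu_{\theta_{\mathrm{old}}}$ given $(N_+,\text{positions})$. I will argue this directly from the independence of the positions and the i.i.d.\ conditioned failures.
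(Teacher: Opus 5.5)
Your proposal is correct and follows essentially the same route as the paper's proof. Condition on $N_+$ and the inserted trajectories so that the surviving failures are i.i.d.\ from $\mu_{\theta_{\mathrm{old}}}$, which turns the failure term into $-\bar\kappa\nabla_\theta\log(1-p_\theta)$, and absorb the $\kappa_{N_+}$-weighted empirical distribution into $Q_+$. Then reuse the KL-gradient identity from Proposition~\ref{prop-group-learning}, with $s_1=1/\sqrt G$ giving $\kappa_1=c_G$; your observation that only the independence of the replaced positions from the failures, not their uniformity, is needed is also correct.
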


\begin{proof}
Conditional on the inserted trajectories and their number, uniform replacement preserves the failure distribution $\mu_{\theta_{\mathrm{old}}}$ for the remaining failures. Therefore, $\mathbb E[\kappa_{N_+}\widehat M(y)]=\bar\kappa\mu_{\theta_{\mathrm{old}}}(y)$, and taking expectations in the group direction gives
\[
\mathbb E[g]=\bar\kappa\left[
\mathbb E_{Q_+}\nabla_\theta\log\pi_\theta(Y)
-\mathbb E_{\mu_{\theta_{\mathrm{old}}}}\nabla_\theta\log\pi_\theta(F)
\right]_{\theta=\theta_{\mathrm{old}}}.
\]
The proof of Proposition~\ref{prop-group-learning} then applies with $c_G$ replaced by $\bar\kappa$ and $Q$ replaced by $Q_+$. Finally, $s_1=1/\sqrt G$ gives $\kappa_1=c_G$.
\end{proof}

The distribution $Q_+$ weights each realized set of inserted trajectories by its total positive advantage $\kappa_{N_+}$. Therefore, it accounts for both acceptance and the resulting composition of the group.

\subsection{Learning the Prefix and the Continuation}
\label{app-prefix-learning}

The likelihood in Equation~\eqref{eq-acquired-nll} includes every token of the complete accepted trajectory under the original question. For a candidate trajectory from a nonempty prefix, the chain rule gives
\begin{equation}
\log\pi_\theta(h_t\oplus z\mid x)
=\log\pi_\theta(h_t\mid x)
+\log\pi_\theta(z\mid x,h_t).
\label{eq-full-response-learning}
\end{equation}
Hence, learning covers both producing the supplied prefix and completing the trajectory, and reference fallback trains the complete reference through Equation~\eqref{eq-acquired-nll}. We now connect the two terms in Equation~\eqref{eq-full-response-learning} to different success events. Fix a prefix length $t$, let $H_t$ be the event that a trajectory begins with $h_t$, and let $\nu_{h_t}=\pi_{\theta_{\mathrm{old}}}(\cdot\mid C\cap H_t)$ as in Appendix~\ref{app-budget}. This distribution is well defined since the correct reference has positive probability. Let $Q_{Z,t}(z)=\nu_{h_t}(h_t\oplus z)$ be the distribution of its continuations. Here $\pi_\theta(C\mid H_t)$ is the probability that a continuation from $h_t$ gives a correct candidate trajectory, which equals $v_t$ at $\theta_{\mathrm{old}}$.

\begin{proposition}[Learning the prefix and the continuation]
\label{prop-prefix-learning}
Under the assumptions of Lemma~\ref{lem-gradient-identity}, at $\theta=\theta_{\mathrm{old}}$,
\[
\mathbb E_{Y\sim\nu_{h_t}}\nabla_\theta\log\pi_\theta(Y)
=\nabla_\theta\log\pi_\theta(C\cap H_t),
\qquad
\mathbb E_{Z\sim Q_{Z,t}}\nabla_\theta\log\pi_\theta(Z\mid h_t)
=\nabla_\theta\log\pi_\theta(C\mid H_t).
\]
Moreover, the distribution $Q_t^{(k)}$ of the accepted trajectory after $k$ continuations in Lemma~\ref{lem-budget-distribution} satisfies
\[
\mathbb E_{Y\sim Q_t^{(k)}}\nabla_\theta\log\pi_\theta(Y)
=(1-w_k)\nabla_\theta\log\pi_\theta(r)
+w_k\nabla_\theta\log\pi_\theta(C\cap H_t).
\]
\end{proposition}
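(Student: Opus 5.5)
The first identity is Lemma~\ref{lem-gradient-identity} applied to the event $C\cap H_t$ in place of $C$. Since the trajectory space is finite, I will write $\pi_\theta(C\cap H_t)=\sum_{y\in C\cap H_t}\pi_\theta(y)$ and differentiate term by term. This gives $\nabla_\theta\log\pi_\theta(C\cap H_t)=\sum_{y\in C\cap H_t}\frac{\pi_\theta(y)}{\pi_\theta(C\cap H_t)}\nabla_\theta\log\pi_\theta(y)$. At $\theta_{\mathrm{old}}$ the weights equal $\nu_{h_t}(y)$, which proves the claim. The event has positive probability because $r\in C\cap H_t$ and $\pi(r)>0$.

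For the second identity, I will repeat the same argument under the conditional law $\pi_\theta(\cdot\mid h_t)$ on continuations. The relevant event is the set of $z$ such that $h_t\oplus z\in C$, whose probability is $\pi_\theta(C\mid H_t)$. By autoregressive factorization, $\pi_\theta(z\mid h_t)=\pi_\theta(h_t\oplus z)/\pi_\theta(h_t)$, so at $\theta_{\mathrm{old}}$ the normalized weights $\pi(z\mid h_t)/v_t$ are exactly $Q_{Z,t}(z)=\nu_{h_t}(h_t\oplus z)$. Differentiating $\log\sum_z\pi_\theta(z\mid h_t)$ over this finite set then gives the identity. Alternatively, the second identity follows from the first. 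Equation~\eqref{eq-full-response-learning} splits $\log\pi_\theta(Y)$ into a prefix term and a continuation term. Every $Y\sim\nu_{h_t}$ shares the same prefix, so the prefix term contributes exactly $\nabla_\theta\log\pi_\theta(h_t)=\nabla_\theta\log\pi_\theta(H_t)$. Subtracting this from $\nabla_\theta\log\pi_\theta(C\cap H_t)$ leaves $\nabla_\theta\log\pi_\theta(C\mid H_t)$.

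The third identity follows from the mixture form $Q_t^{(k)}=(1-w_k)\delta_r+w_k\nu_{h_t}$ in Equation~\eqref{eq-budget-mixture} of Lemma~\ref{lem-budget-distribution}. By linearity of expectation, the expected gradient is $(1-w_k)\nabla_\theta\log\pi_\theta(r)$ plus $w_k$ times the expected gradient under $\nu_{h_t}$. The first identity evaluates the latter, which completes the proof. Here $Q_t^{(k)}$ is held fixed at $\theta_{\mathrm{old}}$ during differentiation.

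I do not expect a real obstacle. The one point needing care is that autoregressive factorization $\pi_\theta(h\oplus z)=\pi_\theta(h)\pi_\theta(z\mid h)$ must hold for all $\theta$ near $\theta_{\mathrm{old}}$, not only at $\theta_{\mathrm{old}}$. This is what licenses differentiating the conditional continuation probability. Positivity in a neighborhood ensures that all logarithms are differentiable.
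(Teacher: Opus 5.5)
Your proposal is correct and matches the paper's proof: the first identity is Lemma~\ref{lem-gradient-identity} applied to $C\cap H_t$, the second follows by subtracting $\nabla_\theta\log\pi_\theta(h_t)$ (your ``alternative'' route is exactly the paper's, and your direct conditional-law argument is an equally valid variant), and the third follows by linearity over the mixture in Equation~\eqref{eq-budget-mixture} with weights fixed at $\theta_{\mathrm{old}}$. Your remark that the factorization $\pi_\theta(h\oplus z)=\pi_\theta(h)\pi_\theta(z\mid h)$ must hold for all $\theta$ near $\theta_{\mathrm{old}}$ is the right point of care, and it holds by the autoregressive definition of the conditional probabilities.
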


\begin{proof}
The first identity follows from Lemma~\ref{lem-gradient-identity} with $C$ replaced by $C\cap H_t$. For the second, Equation~\eqref{eq-full-response-learning} gives $\nabla_\theta\log\pi_\theta(h_t\oplus Z)=\nabla_\theta\log\pi_\theta(h_t)+\nabla_\theta\log\pi_\theta(Z\mid h_t)$, and $\pi_\theta(C\cap H_t)=\pi_\theta(h_t)\pi_\theta(C\mid H_t)$. Subtracting $\nabla_\theta\log\pi_\theta(h_t)$ from both sides of the first identity gives the second. Finally, the mixture form in Equation~\eqref{eq-budget-mixture}, whose weights are fixed at $\theta_{\mathrm{old}}$, and the first identity give the last statement.
\end{proof}

Under $\nu_{h_t}$, the expected gradient of the complete trajectory addresses the joint event of producing the prefix and passing verification. In contrast, learning only the continuation addresses verification given an externally supplied prefix. The two coincide for the empty prefix. The joint event covers the correct trajectories that begin with this prefix, while the overall success event $C$ also includes correct trajectories that begin differently. Finally, the mixture identity in Proposition~\ref{prop-prefix-learning} shows how fallback and successful continuations contribute to the expected gradient of the accepted trajectory before any weighting.

\section{Implementation and Evaluation Details}
\label{app-implementation}

We first specify the training procedure, reference boundaries, and policy loss. We then give the training configurations, question prompt, and evaluation protocol. Together, these details connect the single-continuation analysis in Theorem~\ref{thm-one-attempt} to the complete training procedure.

\subsection{Training Procedure}
\label{app-training-procedure}

Training uses 11,068 screened DeepMath examples with matched offline DeepSeek-R1 reference solutions \citep{guo2025deepseek}. We start from DeepMath-103K, keep problems with difficulty at least 5.0, and drop problems whose gold answer is not a single short expression, is a multiple-choice letter, or cannot be verified by our checker when written inside \verb|\boxed{}|. From the remaining problems, we randomly sample 16,000 in proportion to difficulty and let the base Qwen3-4B attempt each one eight times under the training prompt and sampling settings. Problems solved in all eight attempts are removed. We also drop the few problems whose reference solution does not end in the gold answer. Table~\ref{tab-configuration} gives the ARG configuration. Algorithm~\ref{alg-guidance} separates collection under a fixed policy from updates of the selector and policy. All continuations, including failures and candidate trajectories not selected for the policy update, provide selector feedback.

In the algorithm, $\mathcal U_x$ contains $K$ distinct batch-wide continuation indices for question $x$. Candidate quantities are computed for every $j\in\mathcal U_x$. The operation $\operatorname{Accept}$ returns the candidate indices selected by the rule in Section~\ref{sec-method-learning}.

\begin{algorithm}[htbp]
\caption{One ARG training update with GRPO}
\label{alg-guidance}
\begin{algorithmic}[1]
\Statex \textbf{Input} Batch $\mathcal B$ with references, policy $\theta$, selector $\phi$, baseline $b$.
\State $\theta_{\mathrm{old}}\gets\theta$, $\mathcal U\gets\varnothing$
\State $\mathbf y_x=(y_{x,i})_{i=1}^G$, $y_{x,i}\overset{\mathrm{iid}}{\sim}\pi_{\theta_{\mathrm{old}}}(\cdot\mid x)$ for $x\in\mathcal B$
\For{$x\in\mathcal B$ with $\sum_{i=1}^G R_{\mathrm{acc}}(x,y_{x,i})=0$}
    \State $S_t\gets-\log\pi_{\theta_{\mathrm{old}}}(a_t\mid x,h_t)$ for $0\le t<|r|$, $\mathcal J_x\gets\varnothing$
    \If{$0<S_0<\infty$}
        \State $\alpha_j\overset{\mathrm{iid}}{\sim}\rho_\phi$, $t_j\gets t_x(\alpha_j)$
        \State $Z_j\sim\pi_{\theta_{\mathrm{old}}}(\cdot\mid x,h_{t_j})$, $\widetilde Y_j\gets h_{t_j}\oplus Z_j$
        \State $W_j\gets\mathbf1\{\widetilde Y_j\in C_x,\ \widetilde Y_j\ne r\}\,S_{t_j}/S_0$
        \State $\mathcal U\gets\mathcal U\cup\mathcal U_x$, $\mathcal J_x\gets\operatorname{Accept}((\widetilde Y_j,t_j)_{j\in\mathcal U_x},r)$
    \EndIf
    \State Replace rows in $\mathbf y_x$ by $(\widetilde Y_j)_{j\in\mathcal J_x}$, or by $r$ if $\mathcal J_x=\varnothing$.
\EndFor
\State Recompute final rewards $R_i$, then $A_i\gets(R_i-\bar R)/(s_R+\delta)$.
\If{$\mathcal U\ne\varnothing$}
    \State $\phi\gets\phi+\eta_{\mathrm{sel}}\sum_{j\in\mathcal U}(W_j-b)\nabla_\phi\log\rho_\phi(\alpha_j)$
    \State $b\gets(1-\beta)b+\frac{\beta}{|\mathcal U|}\sum_{j\in\mathcal U}W_j$
\EndIf
\State Update $\theta$ using Equation~\eqref{eq-acquired-nll} and the remaining-response loss in Appendix~\ref{app-grpo-loss}.
\end{algorithmic}
\end{algorithm}

\begin{table}[htbp]
\centering
\small
\begin{tabular}{lr}
\toprule
\textbf{Setting} & \textbf{ARG configuration} \\
\midrule
Questions per update & 32 \\
Ordinary responses per question $G$ & 8 \\
Continuations per all-failure group $K$ & 8 \\
Guidance levels $\mathcal A$ & $\{1,1/2,1/4\}$ \\
Total acceptance cap in Algorithm~\ref{alg-guidance} & 4 \\
Maximum accepted trajectories with empty prefixes & 4 \\
Maximum accepted trajectories with nonempty prefixes & 1 \\
Selector exploration $\varepsilon_{\mathrm{sel}}$ & $0.2$ \\
Selector step size $\eta_{\mathrm{sel}}$ & $0.3$ \\
Baseline update rate $\beta$ & $0.1$ \\
Training budget & 200 updates \\
Learning rate & $10^{-6}$ \\
Warmup updates & 10 \\
Weight decay & $0.01$ \\
Gradient norm clipping & 1 \\
Policy optimization epochs per batch & 1 \\
Clipping ratio interval & $[0.8,1.3]$ \\
Policy loss averaging & Batch token mean \\
Total reward & Accuracy plus $0.2$ format \\
Advantage stabilizer $\delta$ & $10^{-6}$ \\
Training temperature & 1 \\
Training top-$p$ and top-$k$ & 1 and no truncation \\
Question length limit & 3072 tokens \\
Trajectory length budget including the prefix & 8192 tokens \\
Additional KL loss & None \\
Entropy loss coefficient & 0 \\
\bottomrule
\end{tabular}
\caption{ARG configuration for both model sizes.}
\label{tab-configuration}
\end{table}

\subsection{Prefix Boundaries}
\label{app-boundaries}

The level $\alpha=1$ always uses the empty prefix. For $\alpha<1$, we take the latest prefix length whose reference surprisal is at least $\alpha S_0$ and, if this length is positive, move it to the first valid sentence or formula boundary at or after it. If no such boundary exists, we use the latest earlier valid boundary, or the empty prefix if there is none. A valid boundary falls on a token boundary, closes all mathematical delimiters, and leaves a nontrivial suffix containing the final boxed answer. The selector feedback uses the reference surprisal $S_t$ at this adjusted prefix length, and the prefix counts toward the maximum trajectory length.

\subsection{Policy Loss}
\label{app-grpo-loss}

Final rewards $R_i$ combine binary verifier accuracy with $0.2$ times a binary format reward indicating an extracted boxed answer. Thus, a correct response receives $1.2$, while an incorrect response receives $0.2$ or zero according to its format. We recompute advantages within each final group as
\begin{equation}
A_i=\frac{R_i-\bar R}{s_R+\delta},
\label{eq-grpo-advantage}
\end{equation}
where $\bar R$ and $s_R$ are the mean and sample standard deviation of the group's final rewards, and $\delta=10^{-6}$ ensures numerical stability. All-equal rewards give exactly zero advantages.

For an ordinary response $y_i$, let $y_{i,j}$ be token $j$ and define
\[
\omega_{i,j}(\theta)
=\frac{\pi_\theta(y_{i,j}\mid x_i,y_{i,<j})}
{\pi_{\theta_{\mathrm{old}}}(y_{i,j}\mid x_i,y_{i,<j})}.
\]
\begin{samepage}
Let $\mathcal O$ index the original responses that remain in their groups, and let $T$ be the total number of active response tokens in the final batch. Their loss contribution uses the asymmetric clipping interval $[0.8,1.3]$,
\begin{equation}
\mathcal L_{\mathrm{original}}(\theta)
=-\frac1T\sum_{i\in\mathcal O}\sum_{j=1}^{|y_i|}
\min\big\{\omega_{i,j}(\theta)A_i,
\operatorname{clip}(\omega_{i,j}(\theta),0.8,1.3)A_i\big\}.
\label{eq-grpo-policy-loss}
\end{equation}
\end{samepage}
Inserted trajectories use the stopped current probability in the ratio denominator, as derived in Appendix~\ref{app-positive-learning}. With unit token weights over each complete inserted trajectory, this gives Equation~\eqref{eq-acquired-nll}. Loss-active tokens include supplied prefixes and termination tokens when present, and both contributions share the same batch token denominator.

\subsection{Training Configurations and Question Prompt}
\label{app-training-configurations}

The GRPO, SAGE, BREAD, LUFFY, and ARG runs share the optimizer and student rollout settings in Table~\ref{tab-baseline-configuration}. The resampling and weighting ablations use the same settings. SFT learns from the offline references. For OPSD, we follow \citet{zhao2026self} in using one student response per question, a training generation limit of 1024 tokens, and temperature $1.1$. Table~\ref{tab-baseline-configuration} specifies the remaining settings used in our experiments. All listed runs use 32 questions per update and one optimization epoch per collected batch. Their data and parameter-initialization seeds are zero, and the learning rate is constant after warmup.

\begin{table}[htbp]
\centering
\small
\setlength{\tabcolsep}{6pt}
\begin{tabular}{lccc}
\toprule
\textbf{Setting} & \textbf{GRPO-based runs} & \textbf{SFT} & \textbf{OPSD} \\
\midrule
Learning rate & $10^{-6}$ & $10^{-6}$ & $5\times10^{-7}$ \\
Warmup updates & 10 & 10 & 0 \\
Weight decay & $0.01$ & $0.01$ & 0 \\
Gradient norm clipping & 1 & 1 & $0.1$ \\
Responses per question & 8 & Reference & 1 \\
Response length limit & 8192 & 8192 & 1024 \\
Generation temperature & 1 & N/A & $1.1$ \\
Generation top-$p$ & 1 & N/A & $0.95$ \\
Generation top-$k$ & Unrestricted & N/A & 20 \\
\bottomrule
\end{tabular}
\caption{Recorded configurations for both model sizes. GRPO-based runs here comprise GRPO, SAGE, BREAD, LUFFY, ARG, resampling, and the weighting ablation. The reference-insertion ablation is specified by its replacement rule in Section~\ref{sec-ablations}.}
\label{tab-baseline-configuration}
\end{table}

The ordinary student question prompt and the evaluation prompt contain a single user message with the question, a blank line, and the instruction below. They use no system message or few-shot example. Auxiliary teacher or reference conditioning is specific to each training method. The Qwen3 chat template is applied with thinking mode disabled, which appends an empty thinking block before response generation. The rendered question prompt is shown schematically below.
\begin{center}
\begin{minipage}{\linewidth}
\footnotesize
\begin{verbatim}
<|im_start|>user
<question>

Please reason step by step, and put your final answer within \boxed{}.
<|im_end|>
<|im_start|>assistant
<think>

</think>

\end{verbatim}
\end{minipage}
\end{center}
Here \texttt{<question>} denotes the question text, and the line break before \texttt{<|im\_end|>} is added for display. For ARG generation, the selected reference prefix supplies the beginning of the candidate trajectory. Evaluation supplies no reference prefix.

\subsection{Checkpoint and Evaluation Records}
\label{app-evaluation-records}

Table~\ref{tab-reported-checkpoints} lists the reported checkpoints. GRPO, SAGE, BREAD, LUFFY, ARG, resampling, and the weighting ablations use the fixed budget of 200 updates. Reference insertion also reports its step-200 checkpoint. For SFT and OPSD, we select the reported checkpoints by comparing mean $\passk$ and $\avgk$ on the five evaluation benchmarks.

\begin{table}[htbp]
\centering
\small
\begin{tabular}{lrr}
\toprule
\textbf{Method} & \textbf{4B checkpoint} & \textbf{8B checkpoint} \\
\midrule
Base & 0 & 0 \\
SFT & 25 & 25 \\
OPSD & 60 & 100 \\
GRPO, SAGE, BREAD, LUFFY, ARG & 200 & 200 \\
Resampling & 200 & N/A \\
w/o weighting & 200 & 200 \\
Reference insertion & 200 & N/A \\
\bottomrule
\end{tabular}
\caption{Reported checkpoint steps for both model sizes. Base uses the released parameters. SFT and OPSD use benchmark-selected checkpoints, and the remaining trained entries report step 200.}
\label{tab-reported-checkpoints}
\end{table}

All 20 reported model and method entries use the same 150 evaluation questions, comprising 30 questions from each of the five benchmarks. The benchmark files, answer extractor, and verifier implementation match across entries. Table~\ref{tab-evaluation-configuration} gives the common decoding settings. Each question contributes 12 scored responses.

\begin{table}[htbp]
\centering
\small
\begin{tabular}{lr}
\toprule
\textbf{Setting} & \textbf{Evaluation value} \\
\midrule
Temperature & 1 \\
Top-$p$ & $0.95$ \\
Top-$k$ & 20 \\
Maximum new tokens & 16,384 \\
Maximum context length & 32,768 \\
Scored responses per question & 12 \\
Random seed & 1234 \\
Parameter precision & bfloat16 \\
Generation engine & vLLM 0.11.0 \\
Thinking mode & Disabled \\
\bottomrule
\end{tabular}
\caption{Evaluation configuration shared by all reported entries.}
\label{tab-evaluation-configuration}
\end{table}

The scorer extracts the final balanced boxed answer. A response without a boxed answer is incorrect. Extracted answers are checked with Math-Verify 0.9.0 and a normalized string comparison. For each question, $\passk$ records whether at least one of its 12 responses is correct, while $\avgk$ is the fraction of those responses that are correct. Both metrics are averaged over questions and reported as percentages. Since the benchmarks have equal sizes, the mean over benchmarks also equals the mean over all 150 questions.

\section{Additional Main Results}
\label{app-main-results}

Table~\ref{tab-main-average-results} reports $\avgk$ for the same checkpoints and the same 12 scored responses per question as Table~\ref{tab-main-results}, following the scoring protocol in Appendix~\ref{app-evaluation-records}.

\begin{table}[htbp]
\centering
\small
\setlength{\tabcolsep}{4pt}
\begin{tabular*}{\linewidth}{@{\hspace{6pt}\extracolsep{\fill}}ll*{5}{c}|c@{\hspace{6pt}}}
\toprule
\textbf{Model} & \textbf{Method} & \textbf{AIME24} & \textbf{AIME25} & \textbf{AIME26} & \textbf{HMMT\_Feb} & \textbf{HMMT\_Nov} & \textbf{Avg.} \\
\midrule
\multirow{8}{*}{Qwen3-4B}
 & Base & 22.78 & 22.22 & 13.33 & 11.94 & 9.17 & 15.89 \\
 & SFT & 19.44 & 20.56 & 16.94 & 9.72 & 9.17 & 15.17 \\
 & GRPO & 47.22 & 42.22 & \underline{46.67} & \underline{26.94} & 29.17 & 38.44 \\
 & OPSD & 22.22 & 18.89 & 20.83 & 14.44 & 13.06 & 17.89 \\
 & SAGE & 44.17 & 39.17 & 40.83 & 25.00 & 28.06 & 35.44 \\
 & BREAD & \textbf{56.39} & \textbf{47.78} & \textbf{47.78} & \textbf{29.17} & \textbf{35.56} & \textbf{43.33} \\
 & LUFFY & 50.28 & 43.61 & 41.67 & 25.00 & \underline{33.06} & 38.72 \\
 & \textbf{\method (ours)} & \underline{52.50} & \underline{46.67} & 46.39 & \underline{26.94} & \underline{33.06} & \underline{41.11} \\
\midrule
\multirow{8}{*}{Qwen3-8B}
 & Base & 25.28 & 21.67 & 18.61 & 9.72 & 9.44 & 16.94 \\
 & SFT & 33.06 & 27.78 & 28.06 & 12.22 & 17.50 & 23.72 \\
 & GRPO & 48.89 & 35.83 & 36.67 & 22.22 & 30.56 & 34.83 \\
 & OPSD & 33.89 & 29.44 & 26.67 & 14.17 & 19.44 & 24.72 \\
 & SAGE & 54.72 & 40.00 & \underline{44.17} & \underline{27.22} & 35.28 & 40.28 \\
 & BREAD & \textbf{60.56} & \underline{42.50} & 43.06 & \textbf{27.78} & \textbf{39.44} & \textbf{42.67} \\
 & LUFFY & 50.83 & 42.22 & 43.89 & 25.28 & 31.94 & 38.83 \\
 & \textbf{\method (ours)} & \underline{55.28} & \textbf{46.94} & \textbf{46.39} & \underline{27.22} & \underline{35.56} & \underline{42.28} \\
\bottomrule
\end{tabular*}
\caption{Main experimental results in $\avgk$ percentages. Avg. is the mean over the five benchmarks. Within each model, \textbf{bold} and \underline{underline} mark the best and second-best values in each column.}
\label{tab-main-average-results}
\end{table}

\method improves aggregate $\avgk$ over GRPO by $2.67$ and $7.45$ percentage points on Qwen3-4B and Qwen3-8B, respectively, and achieves the second-highest aggregate $\avgk$ on both models. The difference between the two metrics is consistent with where \method intervenes. Specifically, \method modifies the GRPO update only for all-failure groups, where all $G$ responses to a training question are incorrect, and leaves the update for all other questions unchanged. Therefore, the additional learning signal concentrates on questions that the model currently fails to solve. When a question changes from unsolved to solved by a few of the 12 responses, $\passk$ counts the entire question, whereas $\avgk$ counts only the fraction of correct responses. As a result, the effect of \method appears most clearly in $\passk$, which measures the fraction of questions that the model solves within the sampling budget.

\section{Additional Ablation Results}
\label{app-ablations}

We provide benchmark-level results, then examine how the ablations change the accepted trajectories and the prefix selector during training. Reference insertion and resampling use Qwen3-4B, while the surprisal-weighting comparison covers both model sizes.

\subsection{Evaluation Results}
\label{app-ablation-results}

Tables~\ref{tab-ablation-benchmarks} and~\ref{tab-ablation-benchmarks-average} report $\passk$ and $\avgk$ separately for all five benchmarks. All entries use the checkpoint after 200 training steps and the same evaluation protocol as the main results. Each question contributes 12 scored responses, as specified in Appendix~\ref{app-evaluation-records}.

\begin{table}[htbp]
\centering
\small
\setlength{\tabcolsep}{3pt}
\begin{tabular*}{\linewidth}{@{\hspace{6pt}\extracolsep{\fill}}ll*{5}{c}|c@{\hspace{6pt}}}
\toprule
\textbf{Model} & \textbf{Method} & \textbf{AIME24} & \textbf{AIME25} & \textbf{AIME26} & \textbf{HMMT\_Feb} & \textbf{HMMT\_Nov} & \textbf{Avg.} \\
\midrule
\multirow{4}{*}{Qwen3-4B}
& Reference insertion & 70.00 & 60.00 & 63.33 & 43.33 & 46.67 & 56.67 \\
& Resampling & 76.67 & 63.33 & 63.33 & 43.33 & 53.33 & 60.00 \\
& w/o weighting & 76.67 & 70.00 & 66.67 & 50.00 & 53.33 & 63.33 \\
& \method & 80.00 & 73.33 & 73.33 & 43.33 & 66.67 & 67.33 \\
\midrule
\multirow{2}{*}{Qwen3-8B}
& w/o weighting & 83.33 & 70.00 & 66.67 & 50.00 & 50.00 & 64.00 \\
& \method & 83.33 & 73.33 & 73.33 & 56.67 & 63.33 & 70.00 \\
\bottomrule
\end{tabular*}
\caption{Benchmark-level ablation results in $\passk$ percentages. Avg. is the mean over benchmarks. Here w/o weighting denotes \method without surprisal weighting.}
\label{tab-ablation-benchmarks}
\end{table}

\begin{table}[htbp]
\centering
\small
\setlength{\tabcolsep}{3pt}
\begin{tabular*}{\linewidth}{@{\hspace{6pt}\extracolsep{\fill}}ll*{5}{c}|c@{\hspace{6pt}}}
\toprule
\textbf{Model} & \textbf{Method} & \textbf{AIME24} & \textbf{AIME25} & \textbf{AIME26} & \textbf{HMMT\_Feb} & \textbf{HMMT\_Nov} & \textbf{Avg.} \\
\midrule
\multirow{4}{*}{Qwen3-4B}
& Reference insertion & 36.67 & 34.17 & 34.17 & 17.78 & 19.44 & 28.44 \\
& Resampling & 44.44 & 35.56 & 40.00 & 21.67 & 28.33 & 34.00 \\
& w/o weighting & 43.61 & 40.83 & 39.17 & 21.39 & 28.33 & 34.67 \\
& \method & 52.50 & 46.67 & 46.39 & 26.94 & 33.06 & 41.11 \\
\midrule
\multirow{2}{*}{Qwen3-8B}
& w/o weighting & 45.28 & 33.89 & 38.33 & 18.33 & 23.61 & 31.89 \\
& \method & 55.28 & 46.94 & 46.39 & 27.22 & 35.56 & 42.28 \\
\bottomrule
\end{tabular*}
\caption{Benchmark-level ablation results in $\avgk$ percentages, computed from the same scored responses as Table~\ref{tab-ablation-benchmarks}. Avg. is the mean over benchmarks.}
\label{tab-ablation-benchmarks-average}
\end{table}

For the ablations in Table~\ref{tab-ablation-benchmarks-average}, \method has higher average sampled accuracy on every benchmark.

\subsection{Accepted Candidate Trajectories during Training}
\label{app-ablation-generation}

To measure how often continuations supply the trajectories used for training, we count all-failure groups that accept at least one candidate trajectory. Reference fallback is excluded from this count. Figure~\ref{fig-ablation-generation} reports the proportion in consecutive windows of 50 training steps, using the total group counts within each window. Each curve describes the all-failure questions encountered in its own training run.

\begin{figure}[htbp]
\centering
\captionsetup[subfigure]{labelformat=simple,labelsep=space}
\begin{subfigure}{0.48\linewidth}
    \includegraphics[width=\linewidth]{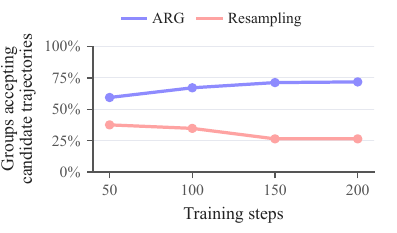}
    \caption{Qwen3-4B resampling}
\end{subfigure}\hfill
\begin{subfigure}{0.48\linewidth}
    \includegraphics[width=\linewidth]{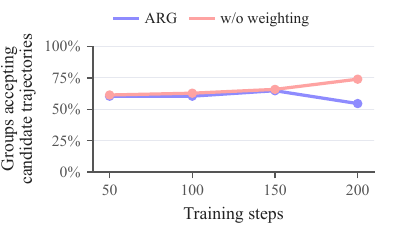}
    \caption{Qwen3-8B surprisal weighting}
\end{subfigure}
\caption{Proportion of all-failure groups that accept candidate trajectories. A point at step $s$ aggregates steps $s-49$ through $s$. Reference fallback is excluded. The curves come from their respective training runs.}
\label{fig-ablation-generation}
\end{figure}

On Qwen3-4B, the proportion increases from $59.4\%$ to $71.8\%$ for \method across the first and last windows, while resampling decreases from $37.6\%$ to $26.4\%$. Over the full runs, \method accepts candidate trajectories in 318 of 482 all-failure groups and uses reference fallback in the remaining 164. Resampling accepts candidate trajectories in 145 of 446 all-failure groups, leaving the remaining 301 unchanged. Resampling retains all correct candidate trajectories. Both methods use eight continuations per all-failure group, and \method also uses reference guidance and fallback.

The weighting ablation gives a different comparison. On Qwen3-8B, it accepts candidate trajectories in $64.8\%$ of all-failure groups, compared with $60.2\%$ for \method, despite its lower evaluation scores. On Qwen3-4B, these proportions are similar at $65.9\%$ and $66.0\%$. Thus, the proportion of groups that accept candidate trajectories alone does not explain the evaluation results.

\subsection{Surprisal Weighting and Prefix Selection}
\label{app-ablation-weighting}

The weighting ablation uses the successful replacement indicator as feedback, while retaining the same guidance levels, prefix construction, selector hyperparameters, and acceptance rule. Figure~\ref{fig-ablation-selectors} compares the resulting selection probabilities. On Qwen3-4B, both selectors favor $\alpha=1/4$ for most of training. On Qwen3-8B, the unweighted selector favors $\alpha=1/4$ throughout training, whereas the probabilities under \method change more over time. The final probabilities under \method are $49.0\%$ for $\alpha=1/2$ and $34.6\%$ for $\alpha=1/4$, compared with $6.8\%$ and $86.4\%$ without weighting.

\begin{figure}[!htbp]
\centering
\captionsetup[subfigure]{labelformat=simple,labelsep=space}
\includegraphics[width=\linewidth]{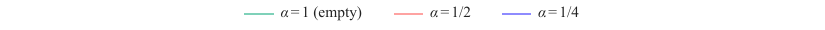}\par
\begin{subfigure}{0.48\linewidth}
    \includegraphics[width=\linewidth]{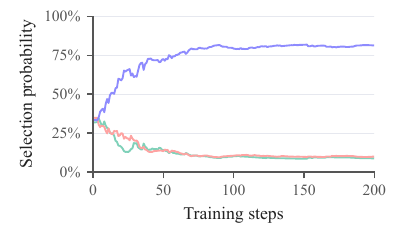}
    \caption{Qwen3-4B \method}
\end{subfigure}\hfill
\begin{subfigure}{0.48\linewidth}
    \includegraphics[width=\linewidth]{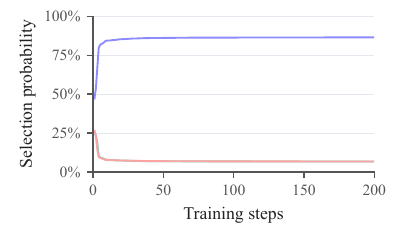}
    \caption{Qwen3-4B w/o weighting}
\end{subfigure}\\[0.5em]
\begin{subfigure}{0.48\linewidth}
    \includegraphics[width=\linewidth]{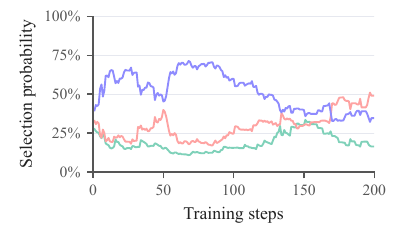}
    \caption{Qwen3-8B \method}
\end{subfigure}\hfill
\begin{subfigure}{0.48\linewidth}
    \includegraphics[width=\linewidth]{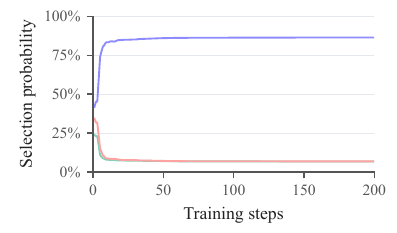}
    \caption{Qwen3-8B w/o weighting}
\end{subfigure}
\caption{Selection probabilities of the prefix selector with and without surprisal weighting. Curves show the state after each training step without smoothing. Steps with no selector update retain the previous probabilities. The level $\alpha=1$ uses an empty prefix, while smaller levels leave less of the reference surprisal to the continuation.}
\label{fig-ablation-selectors}
\end{figure}

\section{Training Dynamics}
\label{app-training-dynamics}

We supplement Section~\ref{sec-training-dynamics} with the corresponding analysis on Qwen3-4B and the sources of accepted trajectories for both models. Comparisons with the ablated methods appear in Appendix~\ref{app-ablations}.

\subsection{Successful Replacement, Feedback, and Prefix Selection}
\label{app-dynamics-feedback}

Figure~\ref{fig-dynamics-feedback-4b} reports successful replacement rates, mean selector feedback, and prefix selection on Qwen3-4B. As in the Qwen3-8B plots, each point in the first two panels uses all continuations sampled at that level within a window of 50 training steps. Successful replacement requires a correct candidate trajectory different from the reference. We compute $W_t$ with the actual $S_t/S_0$ after prefix-boundary adjustment, including zero feedback from failures and exact reference copies. The third panel shows the selection probabilities after each training step without smoothing.

\begin{figure}[htbp]
    \centering
    \captionsetup[subfigure]{labelformat=simple,labelsep=space}
    \includegraphics[width=\linewidth]{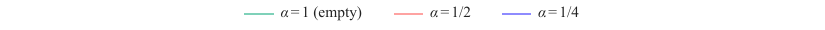}\par
    \begin{subfigure}{0.32\linewidth}
        \includegraphics[width=\linewidth]{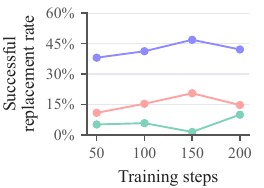}
        \caption{Successful replacement}
    \end{subfigure}\hfill
    \begin{subfigure}{0.32\linewidth}
        \includegraphics[width=\linewidth]{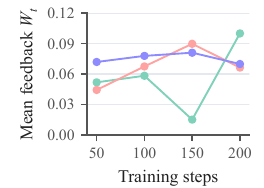}
        \caption{Selector feedback}
    \end{subfigure}\hfill
    \begin{subfigure}{0.32\linewidth}
        \includegraphics[width=\linewidth]{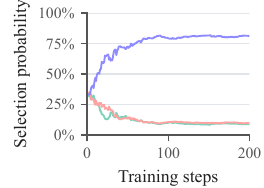}
        \caption{Prefix selection}
        \label{fig-dynamics-selector-4b}
    \end{subfigure}
    \caption{Training dynamics of a Qwen3-4B \method run. In panels (a) and (b), a point at step $s$ averages the continuations at each level from steps $s-49$ through $s$. Panel (c) shows the selection probabilities after each training step without smoothing. The colors and corresponding axis ranges match Figure~\ref{fig-guidance-dynamics}.}
    \label{fig-dynamics-feedback-4b}
    \vspace{-10pt}
\end{figure}

The level $\alpha=1/4$ has the highest successful replacement rate in each window. Weighting reduces its advantage over $\alpha=1/2$, and their mean-feedback ordering varies across windows. Over the full run, the mean feedback is $0.0754$ for $\alpha=1/4$ and $0.0588$ for $\alpha=1/2$. These observations accompany the persistent preference for $\alpha=1/4$ shown in Figure~\ref{fig-dynamics-selector-4b}.

The windowed statistics summarize continuations collected as the policy, the all-failure questions, and the selection probabilities change. Since the selector updates sequentially from these continuations, its state reflects their order and the preceding updates. Steps with no selector update retain the previous probabilities. This gives a view of the feedback observed during training alongside the resulting selection history.

\subsection{Sources of Accepted Trajectories}
\label{app-dynamics-sources}

Figure~\ref{fig-dynamics-response-sources} records which trajectories enter policy learning. We call a candidate trajectory guided if it comes from a nonempty prefix and unguided if it comes from the empty prefix. Each all-failure group belongs to one of four categories, according to whether it accepts only guided candidate trajectories, only unguided ones, both types, or the reference as fallback. Thus, the figure describes groups, with each group counted once even when it accepts multiple trajectories.

\begin{figure}[htbp]
    \centering
    \captionsetup[subfigure]{labelformat=simple,labelsep=space}
    \includegraphics[width=\linewidth]{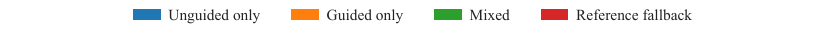}\par
    \begin{subfigure}{0.48\linewidth}
        \includegraphics[width=\linewidth]{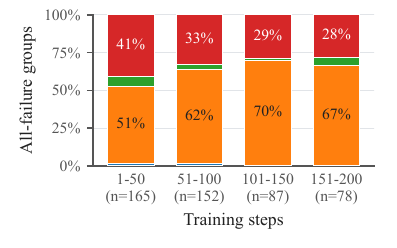}
        \caption{Qwen3-4B}
    \end{subfigure}\hfill
    \begin{subfigure}{0.48\linewidth}
        \includegraphics[width=\linewidth]{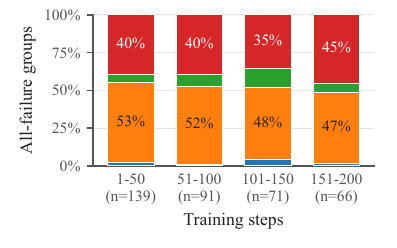}
        \caption{Qwen3-8B}
    \end{subfigure}
    \caption{Sources of accepted trajectories among all-failure groups in consecutive windows of 50 training steps. The mixed category contains groups that accept both guided and unguided candidate trajectories. Fallback supplies the complete reference when no candidate trajectory is accepted. Counts below the windows give the number of all-failure groups.}
    \label{fig-dynamics-response-sources}
\end{figure}

Candidate trajectories are accepted in $66.0\%$ and $60.2\%$ of all-failure groups for Qwen3-4B and Qwen3-8B, respectively. The remaining groups use reference fallback. Most groups that accept candidate trajectories use guided ones, while a smaller fraction accept unguided ones. Fallback remains present throughout training, and its proportion varies across windows as the all-failure questions change. These records complement the selection curves by showing the trajectories used for the policy updates.

\end{document}